\documentclass[11pt]{article}
%\pdfoutput=1
\usepackage{enumerate}
\usepackage{pdfsync}
\usepackage[OT1]{fontenc}

\usepackage[usenames]{color}
\usepackage{smile}
\usepackage[colorlinks,
            linkcolor=red,
            anchorcolor=blue,
            citecolor=blue
            ]{hyperref}
\usepackage{fullpage}
\usepackage[protrusion=true,expansion=true]{microtype}
\usepackage{pbox}
\usepackage{setspace}
\usepackage{tabularx}
\usepackage{float}
\usepackage{wrapfig,lipsum}
\usepackage{enumitem}
\usepackage{microtype}
\usepackage{graphicx}
\usepackage{subfigure}
\usepackage{pgfplots}
\usepackage{mathtools}
\usepackage{caption}
\usepackage{framed}
\usetikzlibrary{arrows,shapes,snakes,automata,backgrounds,petri}
\usepackage{booktabs} % for professional tables

%%%%%%%%%%%%%%%%%%%%%%%%%%
\allowdisplaybreaks
\usepackage{colortbl}
%\def \CC {}

%%%%%%%%%%%%%%%%%%%%%%%%%%%%%%%%%%%%%

\usepackage{xcolor}

\usepackage{makecell}
\usepackage{pifont}
\newcommand{\owb}{\overline{\wb}}
\newcommand{\obmeta}{\overline{\bmeta}}

\setlength{\FrameSep}{0pt}

\makeatletter
\newcommand*{\rom}[1]{\expandafter\@slowromancap\romannumeral #1@}
\makeatother
\title{\huge Understanding SGD with Exponential Moving Average:\\ A Case Study in Linear Regression}
\author
{
    Xuheng Li\thanks{Department of Computer Science, University of California, Los Angeles, CA 90095, USA; e-mail: {\tt xuheng.li@cs.ucla.edu}}
    ~~~~
    Quanquan Gu\thanks{Department of Computer Science, University of California, Los Angeles, CA 90095, USA; e-mail: {\tt qgu@cs.ucla.edu}}
}
%%%%%%%%%%%%%%%%%%%%%%%%%%
%\usepackage{enumitem}
%%%%%%%%%%%%%%%%%%%%%%%%%%%%%%%%%%%%%

\ifdefined\final
\usepackage[disable]{todonotes}
\else
\usepackage[textsize=tiny]{todonotes}
\fi
\setlength{\marginparwidth}{0.6in}

\definecolor{shadecolor}{rgb}{0.78, 0.95, 0.89}

\definecolor{linkcolor}{HTML}{ED1C24}
\definecolor{LightCyan}{rgb}{0.58, 0.94, 0.85}
\hypersetup{colorlinks=true, linkcolor=linkcolor, urlcolor=black}
\usepackage[toc, page, header]{appendix}
\usepackage{titletoc}

\begin{document}
    \date{}
    \maketitle

\begin{abstract}
Exponential moving average (EMA) has recently gained significant popularity in training modern deep learning models, especially diffusion-based generative models.
However, there have been few theoretical results explaining the effectiveness of EMA.
In this paper, to better understand EMA, we establish the risk bound of online SGD with EMA for high-dimensional linear regression, one of the simplest overparameterized learning tasks that shares similarities with neural networks.
Our results indicate that (i) the variance error of SGD with EMA is always smaller than that of SGD without averaging, and (ii) unlike SGD with iterate averaging from the beginning, the bias error of SGD with EMA decays exponentially in every eigen-subspace of the data covariance matrix.
Additionally, we develop proof techniques applicable to the analysis of a broad class of averaging schemes.
\end{abstract}

\section{Introduction}

The exponential moving average (EMA, \citealt{polyak1992acceleration, ruppert1988efficient}) in conjunction with stochastic optimization algorithms is being extensively used in training deep learning models.
EMA is most popular in training generative models based on GAN \citep{yaz2018unusual, karras2019style, kang2023scaling}, and more recently in diffusion models \citep{song2020score, dhariwal2021diffusion, nichol2021improved, song2020denoising, balaji2022ediffi, karras2022elucidating, rombach2022high, karras2024analyzing}, among other applications \citep{block2023butterfly, busbridge2024scale}.
By maintaining an averaged set of model parameters, EMA displays the capability to stabilize training by suppressing the noise of stochastic gradients, and it has been shown empirically that the effect of EMA is similar to that of learning rate scheduling \citep{sandler2023training}.
However, this phenomenon is less studied from a theoretical perspective.
Notable exceptions include a recent work by \citet{ahn2024adam}, which studied Adam with EMA in nonconvex optimization. However, this work is restricted to the finite-dimensional setting, which departs from the practical training of overparameterized neural networks.
\citet{block2023butterfly} revealed the variance-reducing benefit of EMA, but the bias contraction of stochastic optimization algorithms with EMA remains unknown.
Meanwhile, a recent line of works \citep{defossez2015averaged, dieulevuet2017harder, jain2018parallelizing, berthier2020tight, zou2021benign, wu2022last} characterized the generalization properties of SGD in overparameterized linear regression with other averaging schemes (e.g., iterate averaging from the beginning and tail averaging).
In particular, \citet{zou2021benign} presented an instance-dependent and dimension-free excess risk bound for SGD with iterate averaging and tail averaging.
Given these results, a characterization of the generalization properties of SGD with EMA and a comparison against SGD with other averaging schemes becomes an urgent subject of study, especially in the setting of setting of high-dimensional linear regression.

In this paper, we tackle this open problem by studying SGD with EMA in the overparameterized linear regression setting, and comparing the results with SGD without averaging, along with iterate averaging and tail averaging in \citet{zou2021benign}.
Our contributions are summarized as follows:

\begin{table}[ht]
\footnotesize
\centering
\caption{Comparison of SGD with EMA against SGD without averaging, SGD with iterate averaging from the beginning and with tail averaging. We fix the eigenvalue spectrum of the covariance matrix $\{\lambda_i\}$, the learning rate $\delta$, and the number of iterations $N$. We assume that tail averaging is performed over the last $N-s$ iterates, and $\alpha$ is the weight of the moving average in EMA. We study not only the effective bias and the variance error, but also the effective dimension that plays a role similar to the model dimension in our excess risk bound. Compared with SGD without averaging which has the same exponentially decaying effective bias as EMA, SGD with EMA has a smaller variance error. SGD with either iterate averaging or tail averaging enjoys a smaller variance error than SGD without averaging, and the variance error and the effective dimension of SGD with tail averaging are identical to that of EMA when $(1-\alpha)(N-s)=1$. However, SGD with neither iterate averaging nor tail averaging achieves the effective bias decay rate that is exponential in $N$.}\label{table:comparison}
%\footnotesize
\begin{tabular}{cccc}
\toprule
Averaging scheme & Effective bias decay rate & Variance error in subspace of $\lambda_i$ & Eigenvalue at effective dim.\\
\midrule
w/o averaging & Exponential in $N$ & $\cO(\min\{\delta\lambda_i, N\delta^2\lambda_i^2\})$ & $1/(N\delta)$\\
Iterate averaging & Polynomial in $N$ & $\cO(\min\{1/N, N\delta^2\lambda_i^2\}$) & $1/(N\delta)$\\
Tail averaging & Exponential in $s$ & $\cO(\min\{1/(N-s), \delta\lambda_i, N\delta^2\lambda_i^2\})$ & $1/((N-s)\delta)$ and $1/(N\delta)$\\
\rowcolor{LightCyan!40!}EMA & Exponential in $N$ & $\cO(\min\{1-\alpha, \delta\lambda_i, N\delta^2\lambda_i^2\})$ & $(1-\alpha)/\delta$ and $1/(N\delta)$\\
\bottomrule
\end{tabular}
\end{table}

\begin{itemize}[leftmargin=*]
\item We derive the first instance-dependent excess risk bound of the linear regression model trained with SGD with EMA. We also show that the analysis is tight by presenting a lower bound that is almost matching with the upper bound. The excess risk bound consists of the effective bias and the effective bias, both of them further decomposed into each eigen-subspace of the data covariance matrix. Therefore, the excess risk bound is only related to the eigenvalue spectrum, and is irrelevant to the ambient model dimension, making the result applicable to the overparameterized setting.
\item We compare the excess risk bound of SGD with EMA against SGD without averaging as well as other averaging schemes, e.g., iterate averaging from the beginning and tail averaging, which was studied in \citet{zou2021benign}. The comparison is summarized in Table~\ref{table:comparison}. We show that (i) the effective bias of SGD with EMA decays exponentially in the number of iterations, and (ii) the effective variance of SGD with EMA is smaller than SGD without averaging, and is comparable to that of SGD with iterate averaging or tail averaging. Specifically, we observe a strong connection between EMA and tail averaging in terms of the effective variance: Suppose the tail averaging is performed over the last $N-s$ iterates in a total of $N$ iterations; if $\alpha$, the averaging parameter of EMA, satisfies $(1-\alpha)(N-s)=1$, then the effective variance of SGD with EMA is identical to that of SGD with tail averaging. However, the exponential decay rate of the effective bias can be achieved by SGD with tail averaging only when setting $s=\Theta(N)$ with a known training horizon $N$. This indicates that SGD with EMA has an advantage over tail averaging in the setting of continual learning.
\item From a technical viewpoint, we identify a broad class of averaging schemes that covers all averaging methods discussed in this work. Using a standard bias-variance decomposition, we derive a crucial reformulation of the both the bias error and the variance error. Built on this reformulation, an analysis framework for all averaging schemes belonging to this class is developed in this work.
\end{itemize}

\paragraph{Notations.}
% We use $\langle\cdot, \cdot\rangle$ to denote the inner product of vectors.
% For a vector $\xb$ in a Hilbert space $\cH$, let $\|\xb\|_2$ denote the norm in $\cH$.
For a vector $\xb$, we use $(\xb)_i$ to denote its $i$-th entry.
We use $\otimes$ to denote the tensor product, and $\circ$ to denote the operation of linear operators on matrices.
We use $\langle\Ab, \Bb\rangle\coloneqq\tr(\Ab\Bb^\top)$ to denote the inner product of matrices $\Ab$ and $\Bb$.
For a PSD matrix $\Ab$ and a vector $\xb\in\cH$, define $\|\xb\|_{\Ab}\coloneqq\sqrt{\xb^\top\Ab\xb}$.
For any positive integer $n$, we use $[n]$ to denote the set $\{1, 2, \dots, n\}$.
We use standard asymptotic notations $\cO(\cdot)$, $\Omega(\cdot)$, and $\Theta(\cdot)$.

\section{Related Work}

\paragraph{Online SGD in high-dimensional linear regression.}
There is a line of works studying the excess risk bound of online SGD in the overparameterized setting using a bias-variance decomposition \citep{bach2013non, dieuleveut2015non, defossez2015averaged, dieulevuet2017harder, lakshminarayanan2018linear, jain2018parallelizing, berthier2020tight, zou2021benign, wu2022last, lin2024scaling}.
In particular, \citet{zou2021benign} focused on constant-stepsize SGD with iterate averaging from the beginning or tail averaging, and derived the first instance-dependent excess risk bound of SGD in overparameterized linear regression.
\citet{wu2022last} studied the last iterate risk bound of SGD with exponentially decaying stepsize, which is found to achieve a excess risk bound similar to SGD with iterate averaging.
SGD with Nesterov momentum \citep{nesterov2013introductory} and tail averaging has also been studied \citep{jain2018accelerating, varre2022accelerated, li2023risk}, with \citet{li2023risk} obtaining an instance-dependent risk bound.

\paragraph{Understanding the effect of EMA.}
The favorable generalization properties of EMA in practice have been observed in several works \citep{tarvainen2017mean, izmailov2018averaging}.
Through empirical experiments, \citet{sandler2023training} connected the stabilizing effect of averaging methods (e.g., EMA) with learning rate scheduling, which coincides with the finding of \citet{wu2022last}.
A similar theoretical result was given by \citet{defazio2020momentum}, but the EMA is performed on the momentum instead of the iterates.

\section{Preliminaries}

% In this section, we detail the setting of linear regression optimized using SGD with EMA. We also introduce the assumptions that will be used in our analysis.

\subsection{Linear Regression and SGD with EMA}

We consider the high-dimensional linear regression setting similar to \citet{zou2021benign}.
Both the weight vectors and the data features lie within a Hilbert space $\cH$ with inner product $\langle\cdot, \cdot\rangle$, whose dimensionality is either finite or countably infinite.
The goal is to minimize the risk function defined as
\begin{align*}
L(\wb)\coloneqq1/2\cdot\EE_{(\xb, y)\sim\cD}[(y-\langle\wb, \xb\rangle)^2],
\end{align*}
where $\cD$ is an underlying distribution of the data, $\xb\in\cH$ is the input feature vector, $y\in\RR$ is the response, and $\wb\in\cH$ is the weight vector to be optimized.

We consider optimizing the objective using SGD with EMA.
At iteration $t$, a random sample $(\xb_t, y_t)\sim\cD$ is observed, and the weight vector is updated as follows:
\begin{align*}
\wb_t=\wb_{t-1}+\delta(y_t-\langle\wb_{t-1}, \xb_t\rangle)\xb_t,
\end{align*}
where $\delta>0$ is a constant learning rate. Meanwhile, we maintain the EMA of the iterates by the following recursive formula:
\begin{align}\label{eq:def_EMA}
\owb_0=\wb_0;\qquad\owb_t=\alpha\owb_{t-1}+(1-\alpha)\wb_{t-1},
\end{align}
where $\alpha\in(0, 1)$ is the averaging parameter.
Let $N$ be the number of iterations. The final output is the $\owb_N$, which can be decomposed into the weighted sum of $\wb_t$:
\begin{align}\label{eq:weighted_sum}
\owb_N=\alpha^N\wb_0+(1-\alpha)\sum_{t=0}^{N-1}\alpha^{N-1-t}\wb_t.
\end{align}

\subsection{Assumptions}

We now introduce the assumptions used in the analysis of SGD with EMA, following \citet{zou2021benign, wu2022last, li2023risk}. The first assumption is a regularity condition that characterizes the second-order moment of the feature vector.

\begin{assumption}[Second-order moment]\label{assumption:second}
We assume that the data covariance matrix $\Hb=\EE[\xb\xb^\top]$ exists and is finite. Without loss of generality, we assume that $\Hb=\mathrm{diag}(\lambda_1, \lambda_2, \dots)$ is a diagonal matrix with its eigenvalues listed in descending order.
We further assume that $\tr(\Hb)=\sum_{i=1}^\infty\lambda_i$ is finite. For the convenience of our analysis, we assume that $\Hb\succ\zero$, i.e., $L(\wb)$ admits a unique minimum $\wb_*$.
\end{assumption}

We then present the assumptions that characterize the fourth-order moment of the data:

\begin{assumption}[Fourth moment condition, upper bound]\label{assumption:fourth}
We assume that the fourth moment operator $\cM=\EE[\xb\otimes\xb\otimes\xb\otimes\xb]$ exists and is finite. Furthermore, there exists a scalar $\psi>0$ such that for any PSD matrix $\Ab$, we have
\begin{align*}
\cM\circ\Ab=\EE[\xb\xb^\top\Ab\xb\xb^\top]\preceq\psi\tr(\Hb\Ab)\Hb.
\end{align*}
\end{assumption}
\begin{assumption}[Fourth moment condition, lower bound]\label{assumption:fourth_lower}
We assume that the fourth moment $\cM=\EE[\xb\otimes\xb\otimes\xb\otimes\xb]$ exists and is finite. Furthermore, there exists a scalar $\beta>0$ such that for any PSD matrix $\Ab$, we have
\begin{align*}
\cM\circ\Ab-\Hb\Ab\Hb\succeq\beta\tr(\Hb\Ab)\Hb.
\end{align*}
\end{assumption}

A special case is that the marginal distribution $\cD|_{\xb}$ is a Gaussian distribution.
In this case, the fourth moment operator satisfies $\cM\circ\Ab=\Hb\Ab\Hb+2\tr(\Hb\Ab)\Hb$.
Note that $\Hb\Ab\Hb\preceq\tr(\Hb\Ab)\Hb$,
so we can set $\psi=3$ in Assumption~\ref{assumption:fourth} and $\beta=2$ in Assumption~\ref{assumption:fourth_lower}.

Finally, we present assumptions that characterize the label noise $\xi_t=y_t-\langle\wb_*, \xb_t\rangle$.
The following assumption is a weaker condition used in the proof of the upper bound of the excess risk:

\begin{assumption}[Weak label noise condition]\label{assumption:noise}
The covariance matrix of the stochastic gradient estimated at $\wb_*$, i.e., $\bSigma\coloneqq\EE[\xi^2\xb\xb^\top]$ and the noise level $\sigma^2\coloneqq\|\Hb^{-\frac12}\bSigma\Hb^{-\frac12}\|_2$
both exist and are finite.
\end{assumption}
By Assumption~\ref{assumption:noise}, we have $\bSigma\preceq\sigma^2\Hb$ because
\begin{align}\label{eq:Sigma_bound}
\zero\preceq\Hb^{\frac12}(\sigma^2\Ib-\Hb^{-\frac12}\bSigma\Hb^{-\frac12})\Hb^{\frac12}=\sigma^2\Hb-\bSigma.
\end{align}

We then present the present the stronger assumption used in the proof of the lower bound, which is referred to as the well-specified setting in the literature \citep{zou2021benign}:
\begin{assumption}[Strong label noise condition]\label{assumption:noise_lower}
We assume that the label noise $\xi$ is independent of $\xb$, and $\EE[\xi^2]=\sigma^2$. In other words, $\bSigma=\sigma^2\Hb$.
\end{assumption}

\section{Main Results}

In this section, we present the upper and lower bounds of the excess risk, which is the difference between the risk function evaluated at the output weight vector $\owb_N$ and at the ground truth weight vector $\wb_*$. Before we present the main results, we introduce the shorthand notation of sub-matrices: For any positive integers $k_1\le k_2$,
\begin{align*}
\Hb_{k_1:k_2}&\coloneqq\diag(0, \dots, 0, \lambda_{k_1+1}, \dots, \lambda_{k_2}, 0, \dots),\\
\Hb_{k_2:\infty}&\coloneqq\diag(0, \dots, 0, \lambda_{k_2+1}, \lambda_{k_2+2}, \dots).
\end{align*}

\subsection{Upper and Lower Bounds of Excess Risk}\label{subsection:upper_lower_bounds}

\begin{theorem}[Upper bound]\label{theorem:EMA_excess_risk}
Suppose that Assumptions \ref{assumption:second}, \ref{assumption:fourth} and \ref{assumption:noise} hold, and the hyperparameters satisfy
\begin{align*}
N(1-\alpha)\ge1,\qquad\delta<1/(\psi\tr(\Hb)).
\end{align*}
Then the excess risk satisfies
\begin{align*}
&\EE[L(\owb_N)]-L(\wb_*)\le\textcolor{red}{\mathrm{EffectiveBias}}+\textcolor{blue}{\mathrm{EffectiveVar}},
\end{align*}
where the effective bias satisfies
\begin{gather*}
\textcolor{red}{\mathrm{EffectiveBias}}=\sum_{i=1}^d(\wb_0-\wb_*)_i^2\lambda_i\cdot b_i^2,\text{\quad where\quad}b_i\coloneqq\frac{(\delta\lambda_i)\alpha^N-(1-\alpha)(1-\delta\lambda_i)^N}{\delta\lambda_i-(1-\alpha)},
\end{gather*}
and the effective variance satisfies
\begin{align*}
\textcolor{blue}{\mathrm{EffectiveVar}}&\le\bigg[k^*(1-\alpha)^2+\delta^2\sum_{i>k^*}\lambda_i^2\bigg]\cdot\frac{\psi(\|\wb_0-\wb_*\|_{\Ib_{0:k^\dagger}}^2+N\delta\|\wb_0-\wb_*\|_{\Hb_{k^\dagger:\infty}}^2)}{\delta(1-\psi\delta\tr(\Hb))}\\
&\qquad+\frac{\sigma^2}{1-\psi\delta\tr(\Hb)}\bigg[(1-\alpha)k^*+\delta\sum_{i=k^*+1}^{k^\dagger}\lambda_i+N\delta^2\sum_{i>k^\dagger}\lambda_i^2\bigg],
\end{align*}
where the eigenvalue cutoffs are defined as
\begin{align*}
k^*\coloneqq\max\Big\{i:\lambda_i\ge\frac{1-\alpha}{\delta}\Big\},\qquad k^\dagger\coloneqq\max\Big\{i:\lambda_i\ge\frac1{N\delta}\Big\}.
\end{align*}
\end{theorem}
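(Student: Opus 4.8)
The plan is to follow the standard bias--variance decomposition and then introduce a reformulation that makes the exponential averaging tractable. First I would center the iterates, writing the SGD recursion for $\wb_t-\wb_*$ as $\wb_t-\wb_*=(\Ib-\delta\xb_t\xb_t^\top)(\wb_{t-1}-\wb_*)+\delta\xi_t\xb_t$, and split it pathwise into a \emph{bias part} $\mathbf{b}_t=(\Ib-\delta\xb_t\xb_t^\top)\mathbf{b}_{t-1}$ with $\mathbf{b}_0=\wb_0-\wb_*$ and a \emph{variance part} $\mathbf{v}_t=(\Ib-\delta\xb_t\xb_t^\top)\mathbf{v}_{t-1}+\delta\xi_t\xb_t$ with $\mathbf{v}_0=\zero$, so that $\wb_t-\wb_*=\mathbf{b}_t+\mathbf{v}_t$ and, by \eqref{eq:weighted_sum}, $\owb_N-\wb_*=\overline{\mathbf{b}}_N+\overline{\mathbf{v}}_N$ with the same EMA weights $w_t=(1-\alpha)\alpha^{N-1-t}$. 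Since $L(\wb)-L(\wb_*)=\tfrac12\|\wb-\wb_*\|_\Hb^2$, the excess risk is $\tfrac12\EE\|\owb_N-\wb_*\|_\Hb^2$, and I would bound it by a constant multiple of $\EE\|\overline{\mathbf{b}}_N\|_\Hb^2+\EE\|\overline{\mathbf{v}}_N\|_\Hb^2$, controlling the bias--variance cross term through the mean-zero property $\EE[\mathbf{v}_t]=\zero$ (exact in the well-specified case of Assumption~\ref{assumption:noise_lower}, and via Cauchy--Schwarz under the weaker Assumption~\ref{assumption:noise}).

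The heart of the argument is a reformulation of each averaged second moment that exposes the averaging weights. Using the semigroup identity $\EE[\mathbf{b}_s\mathbf{b}_t^\top]=\mathbf{B}_s(\Ib-\delta\Hb)^{t-s}$ for $s\le t$ (and the analogue for the variance process, where the fresh mean-zero noise at steps $>s$ drops out of the expectation), where $\mathbf{B}_s:=\EE[\mathbf{b}_s\mathbf{b}_s^\top]$, I would write
\[
\EE\|\overline{\mathbf{b}}_N\|_\Hb^2=\sum_{s,t}w_sw_t\big\langle\Hb,\mathbf{B}_{s\wedge t}(\Ib-\delta\Hb)^{|s-t|}\big\rangle,
\]
and the analogous identity for the variance with $\mathbf{B}_s$ replaced by $\mathbf{C}_s:=\delta^2\sum_{j<s}\mathcal{T}^{\,j}\circ\bSigma$. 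Here $\mathcal{T}\circ\mathbf{A}:=\EE[(\Ib-\delta\xb\xb^\top)\mathbf{A}(\Ib-\delta\xb\xb^\top)]$ governs one-step propagation and $\mathbf{B}_s=\mathcal{T}^{\,s}\circ(\wb_0-\wb_*)(\wb_0-\wb_*)^\top$. This identity, valid for the whole class of averaging schemes, is the promised reformulation; the specific EMA weights $w_t$ are substituted only at the end, which is what lets the same machinery cover iterate and tail averaging.

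To extract the clean effective bias I would split $\mathcal{T}=\widetilde{\mathcal{T}}+\delta^2\mathcal{R}$, where $\widetilde{\mathcal{T}}\circ\mathbf{A}=(\Ib-\delta\Hb)\mathbf{A}(\Ib-\delta\Hb)$ is diagonal and $\mathcal{R}\circ\mathbf{A}=\cM\circ\mathbf{A}-\Hb\mathbf{A}\Hb\preceq\psi\tr(\Hb\mathbf{A})\Hb$ by Assumption~\ref{assumption:fourth}. Keeping only $\widetilde{\mathcal{T}}$ turns $\mathbf{B}_{s\wedge t}(\Ib-\delta\Hb)^{|s-t|}$ into the rank-one diagonal object $[(\Ib-\delta\Hb)^s(\wb_0-\wb_*)][(\Ib-\delta\Hb)^t(\wb_0-\wb_*)]^\top$; substituting the EMA weights and summing the geometric series coordinatewise collapses to $b_i=\frac{(\delta\lambda_i)\alpha^N-(1-\alpha)(1-\delta\lambda_i)^N}{\delta\lambda_i-(1-\alpha)}$, giving exactly $\mathrm{EffectiveBias}=\sum_i(\wb_0-\wb_*)_i^2\lambda_i b_i^2=\|\EE[\overline{\mathbf{b}}_N]\|_\Hb^2$. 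The residual $\mathcal{R}$-driven fluctuation of the bias, together with the full variance term, I would then dominate via $\cM\circ\mathbf{A}\preceq\psi\tr(\Hb\mathbf{A})\Hb$ and $\bSigma\preceq\sigma^2\Hb$ (eq.~\eqref{eq:Sigma_bound}); the hypothesis $\delta<1/(\psi\tr(\Hb))$ makes the Neumann-type series $\sum_j\mathcal{T}^{\,j}$ converge and produces the denominators $1-\psi\delta\tr(\Hb)$.

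The main obstacle, where essentially all the work lies, is evaluating the double sum $\sum_{s,t}w_sw_t\langle\Hb,\cdot\,(\Ib-\delta\Hb)^{|s-t|}\rangle$ under the EMA weights and tracking, coordinate by coordinate, how the contribution of each $\lambda_i$ saturates against the two competing geometric factors $\alpha^{|s-t|}$ and $(1-\delta\lambda_i)^{|s-t|}$. Carrying out the summations, the per-coordinate variance contribution to the risk behaves as $\sigma^2\min\{1-\alpha,\delta\lambda_i,N\delta^2\lambda_i^2\}$: large eigenvalues make both factors decay quickly and the sum saturates at $1-\alpha$; intermediate ones are governed by $\delta\lambda_i$; and for the smallest eigenvalues only $\cO(N)$ effective terms survive, giving $N\delta^2\lambda_i^2$. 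The transitions occur at $\lambda_i\asymp(1-\alpha)/\delta$ and $\lambda_i\asymp1/(N\delta)$, defining the cutoffs $k^*$ and $k^\dagger$ (ordered as $k^*\le k^\dagger$ precisely because $N(1-\alpha)\ge1$). Collecting the three regimes $i\le k^*$, $k^*<i\le k^\dagger$, $i>k^\dagger$ yields the $\sigma^2$ part of $\mathrm{EffectiveVar}$, and the same bookkeeping applied to the $\mathcal{R}$-fluctuation of the bias gives its $\psi$-dependent, $\|\wb_0-\wb_*\|^2$-weighted contribution; adding the two bounds completes the proof.
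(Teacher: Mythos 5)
Your proposal follows essentially the same route as the paper: the same bias--variance split with the cross term absorbed at constant-factor cost, the same semigroup identity $\EE[\bmeta_s\otimes\bmeta_t]=\Bb_s(\Ib-\delta\Hb)^{t-s}$ feeding a weighted double sum, the same splitting of the one-step operator into $\tilde\cB$ plus the fourth-moment residual bounded by $\psi\tr(\Hb\Ab)\Hb$, the same Neumann-series origin of the $1-\psi\delta\tr(\Hb)$ denominators, and the same three-regime bookkeeping producing $k^*$ and $k^\dagger$. Your Duhamel-type expansion of $\mathcal{T}^m$ around $\widetilde{\mathcal{T}}^m$ is algebraically equivalent to the paper's telescoping ``square minus square'' identity (Lemma~\ref{lemma:telescope}), which isolates the effective bias as $\|\EE[\overline{\bmeta}_N]\|_\Hb^2$ and the residual terms $(\cB-\tilde\cB)\circ\Bb_t$ exactly as you describe.
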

The proof of Theorem~\ref{theorem:EMA_excess_risk} is given in Appendix~\ref{subsection:proof_EMA_excess_risk}.
Theorem~\ref{theorem:EMA_excess_risk} characterizes the first instance-dependent excess risk bound of SGD with EMA.
The excess risk bound includes the effective bias and the effective variance, both decomposed into each eigen-subspace of $\Hb$.
The effective bias corresponds to the convergence rate of the risk function if GD is applied instead of SGD.
In the eigen-subspace corresponding to $\lambda_i$, the effective bias is $\lambda_i(\wb_0-\wb_*)_i^2$, which is the initial bias error in the eigen-subspace of $\lambda_i$, multiplied by the square of the decay rate $b_i$, which will be discussed in detail in Subsection~\ref{subsection:bi}.
The effective variance stems from the stochastic gradient, including the randomness of both $\xb_t$ and $y_t$.
We will discuss key elements of the effective variance in Subsection \ref{subsection:var}.

We also obtain the lower bound of the excess risk of SGD with EMA:

\begin{theorem}[Lower bound]\label{theorem:EMA_excess_risk_lower}
Suppose that Assumptions \ref{assumption:second}, \ref{assumption:fourth_lower} and \ref{assumption:noise_lower} hold, and the hyperparameters satisfy
\begin{align*}
\delta\le1/\lambda_1,\qquad\alpha^{N-1}\le1/N,\qquad N\ge2.
\end{align*}
The excess risk then satisfies
\begin{align*}
\EE[L(\owb_N)]-L(\wb_*)&=(\textcolor{red}{\mathrm{EffectiveBias}}+\textcolor{blue}{\mathrm{EffectiveVar}})/2,
\end{align*}
where the effective bias is identical to that in Theorem~\ref{theorem:EMA_excess_risk}, and the effective variance satisfies
\begin{align*}
&\textcolor{blue}{\mathrm{EffectiveVar}}\ge\frac{\beta e^{-2}\|\wb_0-\wb_*\|_{\Hb_{k^\dagger:\infty}}^2+\sigma^2}2\cdot\bigg[\frac{3\alpha^2(1-\alpha)k^*}{16}+\frac{\delta}{100}\sum_{i=k^*+1}^{k^\dagger}\lambda_i+\frac{N\delta^2}{180}\sum_{i>k^\dagger}\lambda_i^2\bigg].
\end{align*}
\end{theorem}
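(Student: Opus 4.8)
The plan is to reduce the excess risk to a second-moment computation and then exploit the well-specified structure of Assumption~\ref{assumption:noise_lower} to obtain an \emph{exact} bias--variance decomposition. Writing $\bar\eta_N\coloneqq\owb_N-\wb_*$ and using that $L(\wb)-L(\wb_*)=\tfrac12\|\wb-\wb_*\|_\Hb^2$ (the cross term $\EE[\xi\xb]$ vanishes), I would first establish $\EE[L(\owb_N)]-L(\wb_*)=\tfrac12\EE[\|\bar\eta_N\|_\Hb^2]$. I then split the centered iterate into a noise-free \emph{bias} part $\eta_t^{\mathrm{bias}}$, driven by $\eta_t^{\mathrm{bias}}=(\Ib-\delta\xb_t\xb_t^\top)\eta_{t-1}^{\mathrm{bias}}$ from $\eta_0=\wb_0-\wb_*$, and a \emph{variance} part $\eta_t^{\mathrm{var}}$, driven by the same map with additive forcing $\delta\xi_t\xb_t$ from $\zero$. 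Applying the EMA weights of \eqref{eq:weighted_sum} to each part gives $\bar\eta_N=\bar\eta_N^{\mathrm{bias}}+\bar\eta_N^{\mathrm{var}}$; since $\xi_t$ is independent of $\xb_t$ and mean zero (Assumption~\ref{assumption:noise_lower}), conditioning on $\{\xb_t\}$ shows the cross term vanishes, so $\EE[\|\bar\eta_N\|_\Hb^2]=\EE[\|\bar\eta_N^{\mathrm{bias}}\|_\Hb^2]+\EE[\|\bar\eta_N^{\mathrm{var}}\|_\Hb^2]$.

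The equality in the theorem is then definitional. I would set $\mathrm{EffectiveBias}\coloneqq\|\EE[\bar\eta_N^{\mathrm{bias}}]\|_\Hb^2$ and collect everything else into $\mathrm{EffectiveVar}$. Since $\EE[\eta_t^{\mathrm{bias}}]=(\Ib-\delta\Hb)^t\eta_0$, the EMA average in coordinate $i$ is $(\EE[\bar\eta_N^{\mathrm{bias}}])_i=\big(\alpha^N+(1-\alpha)\sum_{t=0}^{N-1}\alpha^{N-1-t}(1-\delta\lambda_i)^t\big)(\eta_0)_i$, and collapsing the geometric sum yields exactly the factor $b_i$ of Theorem~\ref{theorem:EMA_excess_risk}, so $\mathrm{EffectiveBias}=\sum_i(\eta_0)_i^2\lambda_i b_i^2$ as claimed. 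Defining $\mathrm{EffectiveVar}\coloneqq\big(\EE[\|\bar\eta_N^{\mathrm{bias}}\|_\Hb^2]-\|\EE[\bar\eta_N^{\mathrm{bias}}]\|_\Hb^2\big)+\EE[\|\bar\eta_N^{\mathrm{var}}\|_\Hb^2]$ then makes the displayed identity hold with no hyperparameter conditions; all remaining content is the lower bound on this quantity.

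For that lower bound I would treat both summands of $\mathrm{EffectiveVar}$ as averaged-variance iterates sharing a common effective noise floor. For the pure variance, the well-specified condition $\bSigma=\sigma^2\Hb$ together with dropping the PSD term $\delta^2\cM\circ(\cdot)$ gives $\EE[\eta_t^{\mathrm{var}}(\eta_t^{\mathrm{var}})^\top]\succeq\delta^2\sigma^2\sum_{s<t}(\Ib-\delta\Hb)^s\Hb(\Ib-\delta\Hb)^s$. For the bias fluctuation, writing $\mu_t\coloneqq\EE[\eta_t^{\mathrm{bias}}]$ and centering gives $\EE[\eta_t^{\mathrm{bias}}(\eta_t^{\mathrm{bias}})^\top]-\mu_t\mu_t^\top=(\Ib-\delta\Hb)(\cdots)(\Ib-\delta\Hb)+\delta^2\,\cM\circ\EE[\eta_{t-1}^{\mathrm{bias}}(\eta_{t-1}^{\mathrm{bias}})^\top]$; the fourth-moment lower bound (Assumption~\ref{assumption:fourth_lower}) turns the forcing into at least $\delta^2\beta\,\tr(\Hb\,\EE[\eta_{t-1}^{\mathrm{bias}}(\eta_{t-1}^{\mathrm{bias}})^\top])\,\Hb\succeq\delta^2\beta\|\mu_{t-1}\|_\Hb^2\,\Hb$, and since for $i>k^\dagger$ one has $\delta\lambda_i<1/N$, the decay factor $(1-\delta\lambda_i)^{2t}$ stays bounded below by a constant over the horizon (producing the factor $e^{-2}$). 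Thus both summands are lower bounded by averaged-variance iterates with additive forcing $\delta^2\tilde\sigma^2\Hb$ where $\tilde\sigma^2=\beta e^{-2}\|\wb_0-\wb_*\|_{\Hb_{k^\dagger:\infty}}^2+\sigma^2$, yielding the common prefactor $\tilde\sigma^2/2$.

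It remains to lower bound the averaged diagonal variance $\EE[(\bar\eta_N^{\mathrm{var}})_i^2]$ in each coordinate, which is a double sum $(1-\alpha)^2\sum_{t,s}\alpha^{2N-2-t-s}(1-\delta\lambda_i)^{|t-s|}v_{\min(t,s)}^{(i)}$ with $v_r^{(i)}\gtrsim\delta^2\tilde\sigma^2\lambda_i\sum_{u<r}(1-\delta\lambda_i)^{2u}$. I would bound this across the three spectral regimes: for $i\le k^\dagger$ the inner variance has saturated, leaving the averaging factor, which contributes $\Theta(\alpha^2(1-\alpha))$ for $i\le k^*$ (where $\delta\lambda_i\ge1-\alpha$) and $\Theta(\delta\lambda_i)$ for $k^*<i\le k^\dagger$; for $i>k^\dagger$ the variance is still growing and contributes $\Theta(N\delta^2\lambda_i^2)$. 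The hyperparameter conditions enter precisely here: $\delta\le1/\lambda_1$ keeps every contraction factor $(1-\delta\lambda_i)\in[0,1)$ so the geometric sums are valid, $N\ge2$ controls the transient, and $\alpha^{N-1}\le1/N$ guarantees the EMA has mixed enough that the zero-initialized variance has essentially reached its stationary level on the averaging window, which is what makes the $i\le k^*$ term scale like $(1-\alpha)$ rather than being washed out. \textbf{The main obstacle} is exactly this last step: extracting the stated absolute constants ($\tfrac{3\alpha^2}{16}$, $\tfrac1{100}$, $\tfrac1{180}$) from the coupled double geometric sum, since a naive bound destroys the tight interplay between the averaging weights $\alpha^{N-1-t}$ and the decay $(1-\delta\lambda_i)^t$. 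I expect to need a careful restriction of the summation range to a constant fraction of recent iterates in each regime in order to retain a constant fraction of the mass.
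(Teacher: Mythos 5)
Your outline follows essentially the same route as the paper: the exact decomposition under Assumption~\ref{assumption:noise_lower} (Lemma~\ref{lemma:bias_variance_decomp_lower}), the identification of $\mathrm{EffectiveBias}=\|\EE[\obmeta_N^{\mathrm{bias}}]\|_{\Hb}^2=\sum_i\eta_{0,i}^2\lambda_i b_i^2$ with the bias fluctuation folded into $\mathrm{EffectiveVar}$, the use of Assumption~\ref{assumption:fourth_lower} together with $\tr(\Hb\Bb_t)\ge e^{-2}\|\wb_0-\wb_*\|_{\Hb_{k^\dagger:\infty}}^2$ to give both summands a common forcing level, and the three-regime spectral case analysis (Lemmas~\ref{lemma:variance_lower_bound} and~\ref{lemma:bias_lower_bound}).

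The one step you flag as the ``main obstacle'' is exactly where the paper's argument does its real work, and your proposed workaround (restricting the double sum over $t,s$ to recent iterates) is not how it is resolved. The paper never lower-bounds the raw double sum $(1-\alpha)^2\sum_{t,s}\alpha^{2N-2-t-s}(1-\delta\lambda_i)^{|t-s|}v_{\min(t,s)}$ directly; instead, Lemma~\ref{lemma:telescope} first rewrites $\EE[\obmeta_N^{\mathrm{var}}\otimes\obmeta_N^{\mathrm{var}}]$ (and the analogous bias covariance) as a telescoped sum of \emph{perfect squares},
\begin{align*}
(1-\alpha)^2\sum_{t=0}^{N-1}\bigg(\sum_{k}\alpha^{\cdot}(\Ib-\delta\Hb)^k\bigg)\big((\cB-\tilde\cB)\circ\Cb_t+\delta^2\bSigma\big)\bigg(\sum_{k}\alpha^{\cdot}(\Ib-\delta\Hb)^k\bigg),
\end{align*}
so that after dropping the PSD term the coordinate-wise quantity becomes $J_i=(1-\alpha)^2(\delta\lambda_i)^2\sum_t\big(\sum_{k=0}^{t-1}\alpha^{t-1-k}(1-\delta\lambda_i)^k\big)^2$, a closed-form geometric object from which the constants $3\alpha^2/16$, $1/100$, $1/180$ are extracted by elementary case analysis (e.g., via the monotone function $f(x)=(1-x)(1+\alpha x)/((1-\alpha x)(1+x))$ in the $i\le k^*$ regime). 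Without this reformulation your coupled double sum, in which $v_{\min(t,s)}$ is itself a growing partial sum, does not obviously yield clean constants, so you should supply (or import) the identity of Lemma~\ref{lemma:telescope} before the regime analysis; with it, the rest of your plan goes through as written.
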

The proof of Theorem \ref{theorem:EMA_excess_risk_lower} is presented in Appendix \ref{subsection:proof_EMA_excess_risk_lower}. The lower bound is matching with the upper bound except for the first term of the effective variance, which will be discussed in Subsection \ref{subsection:var}. Although Theorem \ref{theorem:EMA_excess_risk_lower} requires a stronger condition about $N$ and $\alpha$, it is still a mild condition in practice because $\alpha^{N-1}$, which is the weight of $\wb_0$ in \eqref{eq:weighted_sum}, should be smaller than the average weight $1/N$.

\subsection{Discussion of Variance Error}\label{subsection:var}

Both the upper bound and the lower bound of the effective variance contain two terms:
The second term stems from the label noise, which is referred to as the \emph{(real) variance error}. The upper bound and the lower bound are matching for this term up to constant factors.
The first term comes from the randomness of the feature vector, and is thus nonzero even if there is no label noise. The upper and lower bounds are not matching for this term due to the additional term $\|\wb_0-\wb_*\|_{\Ib_{0:k^\dagger}}^2$ in the upper bound, which is similar to the case of SGD with tail averaging~\citep{zou2021benign}. We conjecture that finer analysis can bridge the gap.

\paragraph{Effective dimensions.}
The cutoffs $k^*$ and $k^\dagger$ are referred to as \textit{effective dimensions}, which can be significantly smaller than the real model dimension $d$, especially when the eigenvalue spectrum decays fast.
Similar quantities also appear in previous works analyzing high-dimensional linear regression~\citep{zou2021benign, wu2022last, li2023risk}, and the double effective dimensions $k^*$ and $k^\dagger$ for SGD with EMA is very similar to that of SGD with tail averaging~\citep{zou2021benign}.
We will draw more connections between SGD with EMA and SGD with tail averaging in Section~\ref{section:compare_averaging}.

We then discuss the influence of hyperparameters $\delta$, $\alpha$, and $N$ on the effective variance bound. The following equalities about the effective variance will be useful in our discussion:
\begin{gather}
k^*(1-\alpha)^2+\delta^2\sum_{i>k^*}\lambda_i^2=\sum_{i=1}^d(\min\{1-\alpha, \delta\lambda_i\})^2;\label{eq:effective_var_1a}\\
\|\wb_0-\wb_*\|_{\Ib_{0:k^\dagger}}^2+N\delta\|\wb_0-\wb_*\|^2=\sum_{i=1}^d\lambda_i(\wb_0-\wb_*)_i^2\min\{1, N\delta\lambda_i\};\label{eq:effective_var_1b}\\
(1-\alpha)k^*+\delta\sum_{i=k^*+1}^{k^\dagger}\lambda_i+N\delta^2\sum_{i>k^\dagger}\lambda_i^2=\sum_{i=1}^d\min\{1-\alpha, \delta\lambda_i, N\delta^2\lambda_i^2\}.\label{eq:effective_var_2}
\end{gather}

\paragraph{Learning rate $\delta$.}
In the upper bound of the excess risk (Theorem \ref{theorem:EMA_excess_risk}), we require that $\delta<1/(\psi\tr(\Hb))$ similar to \citet{zou2021benign}, to ensure that $(1-\psi\delta\tr(\Hb))^{-1}$ is positive. Larger learning rates may cause the effect of the fourth moment to accumulate and diverge.

\paragraph{Number of iterations $N$.}
Due to \eqref{eq:effective_var_1b} and \eqref{eq:effective_var_2}, the effective variance increases as $N$ increases. Furthermore, as $N$ goes to infinity, the effective dimension $k^\dagger$ also goes to infinity, while $k^*$ remains unchanged.% The variance error thus becomes
%\begin{align*}
%\mathrm{Variance}=\Theta\bigg(\sigma^2\bigg((1-\alpha)k^*+\delta\sum_{i>k^*}\lambda_i\bigg)\bigg).
%\end{align*}

\paragraph{Averaging parameter $\alpha$.}
Due to \eqref{eq:effective_var_1a} and \eqref{eq:effective_var_2}, the effective decreases as $\alpha$ increases. However, choosing $\alpha$ very close to $1$ does not truly benefit the learning process because the reduced variance error stems partly from the large weight of $\wb_0$ (which has no randomness) in \eqref{eq:weighted_sum}. We will further elaborate this point in the next subsection.

\subsection{Decay Rate of Bias Error}\label{subsection:bi}
We then study the quantity $b_i$ in Theorems~\ref{theorem:EMA_excess_risk} and~\ref{theorem:EMA_excess_risk_lower}, which is the decay rate of the effective bias in the eigen-subspace of $\lambda_i$. We first note that
\begin{align*}
b_i=(1-\delta\lambda_i)^N+(\delta\lambda_i)\sum_{t=0}^{N-1}\alpha^t(1-\delta\lambda_i)^{N-1-t},
\end{align*}
so the smaller $\alpha$ is, the faster $b_i$ decays. Together with the analysis of the effective variance in Subsection \ref{subsection:var}, we conclude that there exists a bias-variance trade-off concerning the choice of $\alpha$: Larger $\alpha$ brings about smaller effective variance, but makes the effective bias decay slower.

The following proposition presents a finer characterization of the decay rate $b_i$:
\begin{proposition}\label{prop:exponential}
For any $i\in[d]$, the exponential decay rate $b_i$ satisfies
\begin{itemize}[leftmargin=*]
\item[1.] When $(1-\delta\lambda_i)/\alpha\le(N-1)/N$, we have
\begin{align*}
b_i=\Theta\bigg(\frac{(\delta\lambda_i)\alpha^N}{\delta\lambda_i-(1-\alpha)}\bigg);
\end{align*}
\item[2.] When $(N-1)/N<(1-\delta\lambda_i)/\alpha\le1$, we have
\begin{align*}
b_i=\Theta(\alpha^N+(1-\alpha)N\alpha^{N-1});
\end{align*}
\item[3.] When $1<(1-\delta\lambda_i)/\alpha\le N/(N-1)$, we have
\begin{align*}
b_i=\Theta((1-\delta\lambda_i)^N+N\delta\lambda_i(1-\delta\lambda_i)^{N-1});
\end{align*}
\item[4.] When $(1-\delta\lambda_i)/\alpha>N/(N-1)$, we have
\begin{align*}
b_i=\Theta\bigg(\frac{(1-\alpha)(1-\delta\lambda_i)^N}{(1-\alpha)-\delta\lambda_i}\bigg).
\end{align*}
\end{itemize}
\end{proposition}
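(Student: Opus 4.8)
The plan is to reduce everything to a single scalar ratio and then treat the four regimes with two complementary tools. Write $x:=1-\delta\lambda_i\in[0,1)$ (using $\delta\lambda_i\in(0,1]$) and $r:=x/\alpha=(1-\delta\lambda_i)/\alpha$, so that the four cases are exactly the comparisons of $r$ with the thresholds $(N-1)/N<1<N/(N-1)$. I will use two equivalent expressions for $b_i$: the geometric-sum form stated just before the proposition,
\[ b_i=x^N+(1-x)\sum_{t=0}^{N-1}\alpha^t x^{N-1-t}, \]
and, whenever $\alpha\ne x$, the closed form obtained by summing that geometric series,
\[ b_i=\frac{(1-x)\alpha^N-(1-\alpha)x^N}{\alpha-x}, \]
where I have used $\alpha-x=\delta\lambda_i-(1-\alpha)$. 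The only analytic inputs are the elementary monotonicity bounds $(1-1/N)^N\le e^{-1}$ and $(1-1/N)^{N-1}\ge e^{-1}$, which yield the two facts invoked repeatedly: $r\le(N-1)/N$ implies $r^N\le e^{-1}$, and $r\ge(N-1)/N$ implies $r^{N-1}\ge e^{-1}$ (together with their mirror images for $1/r$). Everything reduces to comparing powers of $r$ against these absolute constants. (When $N=1$ one checks directly that $b_i=1$ and every claimed expression is $\Theta(1)$, so I assume $N\ge2$ below.)

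\emph{Extreme regimes (Cases 1 and 4).} Here $r$ is bounded away from $1$, so I use the closed form and read off $b_i$ as the target minus a small correction. In Case 1 ($r\le(N-1)/N$, hence $\alpha>x$) the stated target is $T=(1-x)\alpha^N/(\alpha-x)$, and the closed form gives $b_i=T-(1-\alpha)x^N/(\alpha-x)$. The correction is nonnegative, and its ratio to $T$ equals $\tfrac{1-\alpha}{1-x}\,r^N\le e^{-1}$, because $1-\alpha\le1-x$ (as $x<\alpha$) and $r^N\le e^{-1}$. Hence $b_i\in[(1-e^{-1})T,\,T]=\Theta(T)$; note $\alpha-x=\alpha(1-r)\ge\alpha/N>0$, so $T$ is well-defined and positive. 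Case 4 ($r>N/(N-1)$, hence $x>\alpha$) is the mirror image under swapping $\alpha\leftrightarrow x$: with $T'=(1-\alpha)x^N/(x-\alpha)$ one finds $b_i=T'-(1-x)\alpha^N/(x-\alpha)$, whose correction-to-$T'$ ratio is $\tfrac{1-x}{1-\alpha}(1/r)^N\le e^{-1}$, giving $b_i=\Theta(T')$.

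\emph{Transition regimes (Cases 2 and 3).} Here $r$ is close to $1$ and the closed form is numerically unstable, so I use the sum form together with the observation that all $N$ summands $\alpha^t x^{N-1-t}$ agree up to a constant factor. In Case 3 ($1<r\le N/(N-1)$, so $x>\alpha$) the terms range over $[\,e^{-1}x^{N-1},\,x^{N-1}]$, the extreme ratio being $(\alpha/x)^{N-1}=(1/r)^{N-1}\ge e^{-1}$, so the sum is $\Theta(Nx^{N-1})$ and $b_i=x^N+(1-x)\cdot\Theta(Nx^{N-1})=\Theta\big(x^N+N(1-x)x^{N-1}\big)$, which is the claim. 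Case 2 ($(N-1)/N<r\le1$, so $x\le\alpha$) is the same estimate with base $\alpha$: every term lies in $[\,e^{-1}\alpha^{N-1},\,\alpha^{N-1}]$, so the sum is $\Theta(N\alpha^{N-1})$, and moreover $x^N=r^N\alpha^N=\Theta(\alpha^N)$ (since $r^N\in[(1-1/N)^N,1]\subseteq[1/4,1]$ for $N\ge2$). Thus $b_i=\Theta(\alpha^N)+\Theta\big(N(1-x)\alpha^{N-1}\big)$.

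\emph{Main obstacle: finishing Case 2.} The one genuinely delicate point is that $1-x$ need not be $\Theta(1-\alpha)$, since both can be simultaneously tiny, so I cannot simply replace $1-x$ by $1-\alpha$ to reach the stated form $\Theta(\alpha^N+N(1-\alpha)\alpha^{N-1})$. I resolve this using the two-sided bound $1-\alpha\le1-x\le1-\alpha+\alpha/N$, which follows from $\alpha(N-1)/N<x\le\alpha$. If $N(1-x)\alpha^{N-1}\le\alpha^N$, then $b_i=\Theta(\alpha^N)$ and, since $N(1-\alpha)\alpha^{N-1}\le N(1-x)\alpha^{N-1}\le\alpha^N$, the target also collapses to $\Theta(\alpha^N)$. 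Otherwise $N(1-x)\alpha^{N-1}$ dominates $\alpha^N$, and the bounds $N(1-x)\alpha^{N-1}\le N(1-\alpha)\alpha^{N-1}+\alpha^N$ and $N(1-\alpha)\alpha^{N-1}\le N(1-x)\alpha^{N-1}$ pin both $b_i$ and the target to $\Theta(\alpha^N+N(1-\alpha)\alpha^{N-1})$. This bookkeeping, plus checking that every hidden constant ($e^{-1}$, $1/4$, $1-e^{-1}$) is absolute and uniform in $N$, is the only subtle part; Cases 1, 3, and 4 follow immediately once the closed form and the term-comparison are in place.
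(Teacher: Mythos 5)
Your proof is correct and follows essentially the same route as the paper's: the closed form with a dropped or bounded correction term in the extreme cases, the geometric-sum form with termwise comparison in the transition cases, and the same elementary bounds $(1-1/N)^N\le e^{-1}$ and $(1-1/N)^{N-1}\ge e^{-1}$. The only divergence is cosmetic: in Case 2 the paper works from the representation $b_i=\alpha^N+(1-\alpha)\sum_{k=0}^{N-1}\alpha^{N-1-k}(1-\delta\lambda_i)^k$, which already carries the prefactor $1-\alpha$, so the $(1-x)$-to-$(1-\alpha)$ conversion that you flag as the main obstacle is sidestepped entirely there.
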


Proposition~\ref{prop:exponential} implies that (i) the effective bias decays exponentially in $N$ within every eigen-subspace of $\Hb$; (ii) the decay rate of the effective bias has a phase transition at the eigen-subspace corresponding to $\lambda_{k^*}$: The decay rate is $\alpha^{2N}$ in the eigen-subspace of large eigenvalues, and is $(1-\delta\lambda_i)^{2N}$ in the eigen-subspace of small eigenvalues, and (iii) when $1-\delta\lambda_i$ is close to $\alpha$, the decay rate of the effective bias contains additional factors polynomial in $N$.

\section{Comparing EMA with Other Averaging Schemes}\label{section:compare_averaging}

In this section, we compare the excess risk of SGD with EMA against SGD without averaging and other averaging schemes, including iterate averaging from the beginning and tail averaging. Similar to EMA, the excess risk of all averaging schemes of interest can be decomposed into effective bias and effective variance~\citep{zou2021benign}. For each averaging scheme, we focus on its comparison with EMA in terms of effective variance (including the effective dimension) and the decay rate of the effective bias, i.e., $b_i$.

\paragraph{Comparison with SGD without averaging.}
SGD without averaging is equivalent to EMA with $\alpha=0$.
Specifically, the effective dimension $k^*$ becomes $0$, and the decay rate of the effective bias is $b_i^{\mathrm{w/o}}=(1-\delta\lambda_i)^{N-1}$.
Based on the discussion about the impact of $\alpha$ on the excess risk bound in Subsections \ref{subsection:var} and \ref{subsection:bi}, we conclude that \emph{SGD with EMA has a smaller effective variance, but its effective variance decays slower than that of SGD without averaging.}

\paragraph{Comparison with iterate averaging.}
\citet{zou2021benign} studies SGD with iterate averaging, which is defined as
$\owb_N^{\mathrm{IA}}\coloneqq N^{-1}\sum_{t=0}^{N-1}\wb_t$.
The variance error of SGD with iterate averaging is
\begin{align*}
&\Theta\bigg(\sigma^2\bigg(\sum_{i=1}^d\min\{1/N, N\delta^2\lambda_i^2\}\bigg)\bigg).
\end{align*}
If $N$ is not too large, i.e., $N\alpha^{N-1}=\Theta(1)$ , the difference between $1/N$ and $1-\alpha$ is only $\mathrm{polylog}(N)$.
In this case, \emph{SGD with EMA achieves a variance error similar to that of SGD with iterate averaging.}
Due to the gap between the upper and lower bounds of SGD with EMA, we leave the comparison of the remaining part of the effective variance for future work.
The decay rate of effective bias of SGD with iterate averaging is 
\begin{align*}
b_i^{\mathrm{IA}}=\frac{1-(1-\delta\lambda_i)^N}{N\delta\lambda_i}=\Theta(\min\{1/(N\delta\lambda_i), 1\}).
\end{align*}
Therefore, \emph{SGD with EMA enjoys the advantage of exponentially decaying effective variance compared with SGD with iterate averaging.}

\paragraph{Comparison with tail averaging.}
\citet{zou2021benign} also studies SGD with tail averaging. In a total of $N$ iterations, averaging is only performed for the last $N-s$ iterates, i.e.,
$\owb_{s:N}^{\mathrm{TA}}\coloneqq(N-s)^{-1}\sum_{t=s}^{N-1}\wb_t$.
Similar to the case in Subsection \ref{subsection:upper_lower_bounds}, the upper and lower bounds of the excess risk of SGD with tail averaging are not matching in \citet{zou2021benign}, so we focus on the comparison of the effective dimension and the real variance error in the effective variance. According to \citet{zou2021benign}, the effective dimensions of SGD with tail averaging are
\begin{align*}
k^*_{\mathrm{TA}}=\max\{i:\lambda_i\ge1/((N-s)\delta)\},\qquad k^\dagger_{\mathrm{TA}}=\max\{i:\lambda_i\ge1/(N\delta)\}.
\end{align*}
We thus observe that \emph{$k^\dagger_{\mathrm{TA}}$ is exactly the same as $k^\dagger$ in SGD with EMA, and $k^*_{\mathrm{TA}}=k^*$ under the condition $(1-\alpha)(N-s)=1$.} Furthermore, the real variance of SGD with tail averaging is
\begin{align*}
\mathrm{Variance}=\Theta\bigg(\sum_{i=1}^d\min\Big\{\frac1{N-s}, \delta\lambda_i, N\delta^2\lambda_i^2\Big\}\bigg),
\end{align*}
which also \emph{matches that of SGD with EMA if $(1-\alpha)(N-s)=1$.} For the decay rate of the effective bias, we have
\begin{align*}
b_i^{\mathrm{TA}}=\frac{(1-\delta\lambda_i)^s-(1-\delta\lambda_i)^N}{(N-s)\delta\lambda_i}.
\end{align*}
We then compare $b_i$ with $b_i^{\mathrm{TA}}$ under the condition $(1-\alpha)(N-s)=1$. When $\alpha\ge1/2$ (which is a mild condition in practice), we have $\log\alpha\ge(\alpha-1)/2$, and
\begin{align*}
1/\sqrt e=e^{(\alpha-1)(N-s)/2}\le e^{(N-s)\log\alpha}=\alpha^{N-s}.
\end{align*}
We thus have
\begin{align*}
b_i&=(1-\alpha)\sum_{t=s}^{N-1}\alpha^{N-1-t}(1-\delta\lambda_i)^t+\alpha^{N-s}\frac{(\delta\lambda_i)\alpha^s-(1-\alpha)(1-\delta\lambda_i)^s}{\delta\lambda_i-(1-\alpha)}\\
&\ge\frac{1-\alpha}{\sqrt e}\sum_{t=s}^{N-1}(1-\delta\lambda_i)^t=\frac{(1-\delta\lambda_i)^s-(1-\delta\lambda_i)^N}{\sqrt e(N-s)\delta\lambda_i},
\end{align*}
where the inequality holds due to a dropped positive term and $\alpha^{N-s}\ge 1/\sqrt e$.
Therefore, the exponential decay rate of SGD with EMA $b_i$ is $\Omega(b_i^\mathrm{TA})$. However, $b_i$ is exponential in $N$ while $b_i^{\mathrm{TA}}$ is exponential only in $s$, which means that \emph{SGD with EMA has the advantage that the effective bias in every eigen-subspace decays exponentially fast in $N$ compared with polynomial decay in $N$ for SGD with tail averaging if $s$ is fixed before training.}

\section{Extension to Mini-Batch SGD}

We now extend our analysis of SGD with EMA to mini-batch SGD. Let $B$ be the batch size, and $\{(\xb_{t, i}, y_{t, i})\}_{i=1}^B$ be the mini-batch sampled from the distribution $\cD$ at iteration $t$. An iterate of mini-batch SGD is
\begin{align*}
\wb_t^{\mathrm{MB}}=\wb_{t-1}^{\mathrm{MB}}+\frac{\delta}{B}\sum_{i=1}^B(y_{i, t}-\langle\wb_{t-1}^{\mathrm{MB}}, \xb_{t, i}\rangle)\xb_{t, i}.
\end{align*}
We then consider the excess risk of the exponential moving average of the mini-batch SGD iterates, defined as
\begin{align*}
\owb_N^{\mathrm{MB}}=\alpha^N\wb_0^{\mathrm{MB}}+(1-\alpha)\sum_{t=0}^{N-1}\alpha^{N-1-t}\wb_t^{\mathrm{MB}}.
\end{align*}

\begin{theorem}\label{theorem:minibatch}
Suppose that Assumptions \ref{assumption:second}, \ref{assumption:fourth}, and \ref{assumption:noise} hold, and the learning rate satisfies $\delta<\min\{B/(2\psi\tr(\Hb)), 1/\|\Hb\|_2\}$. Then the excess risk of mini-batch SGD satisfies
\begin{align*}
\EE[L(\owb_N)]-L(\wb_*)&\le\textcolor{red}{\mathrm{EffectiveBias}}+\textcolor{blue}{\mathrm{EffectiveVar}},
\end{align*}
where the effective bias is identical to that in Theorem~\ref{theorem:EMA_excess_risk}, and the excess variance satisfies
\begin{align*}
&\textcolor{blue}{\mathrm{EffectiveVar}}\le\bigg[k^*(1-\alpha)^2+\delta^2\sum_{i>k^*}\lambda_i^2\bigg]\cdot\frac{2\psi(\|\wb_0-\wb_*\|_{\Ib_{0:k^\dagger}}^2+N\delta\|\wb_0-\wb_*\|_{\Hb_{k^\dagger:\infty}}^2)}{\delta B}\\
&\qquad+\frac{2\sigma^2}{B}\bigg[(1-\alpha)k^*+\delta\sum_{i=k^*+1}^{k^\dagger}\lambda_i+N\delta^2\sum_{i>k^\dagger}\lambda_i^2\bigg].
\end{align*}
\end{theorem}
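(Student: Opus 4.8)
The plan is to reduce the mini-batch analysis to the single-sample analysis that already underlies Theorem~\ref{theorem:EMA_excess_risk}, by identifying the effective noise covariance and fourth-moment operator produced by averaging $B$ i.i.d.\ samples per step. Writing $\bar{\Hb}_t:=\frac1B\sum_{i=1}^B\xb_{t,i}\xb_{t,i}^\top$ and using $y_{t,i}=\langle\wb_*,\xb_{t,i}\rangle+\xi_{t,i}$, the iterate satisfies
\[
\wb_t^{\mathrm{MB}}-\wb_*=(\Ib-\delta\bar{\Hb}_t)(\wb_{t-1}^{\mathrm{MB}}-\wb_*)+\frac{\delta}{B}\sum_{i=1}^B\xi_{t,i}\xb_{t,i},
\]
which is exactly the single-sample recursion with $\xb_t\xb_t^\top$ replaced by $\bar{\Hb}_t$ and $\xi_t\xb_t$ replaced by the batch-averaged gradient noise. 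First I would compute the two statistics entering the error-covariance recursion. Since the $B$ samples are independent and $\EE[\xi\xb]=\zero$, the cross terms cancel, so the batch noise covariance is $\bSigma^{\mathrm{MB}}=\frac1B\bSigma\preceq\frac{\sigma^2}{B}\Hb$, and the batch fourth-moment operator is
\[
\cM^{\mathrm{MB}}\circ\Ab:=\EE[\bar{\Hb}_t\Ab\bar{\Hb}_t]=\frac1B\,\cM\circ\Ab+\frac{B-1}{B}\,\Hb\Ab\Hb,
\]
the $B$ diagonal pairs summing to $\frac1B\cM\circ\Ab$ and the $B(B-1)$ cross pairs to $\frac{B-1}{B}\Hb\Ab\Hb$.

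The crucial reformulation is to isolate the deterministic gradient-descent part. With $\cG\circ\Ab:=(\Ib-\delta\Hb)\Ab(\Ib-\delta\Hb)$, the error covariance $\Ab_t:=\EE[(\wb_t^{\mathrm{MB}}-\wb_*)(\wb_t^{\mathrm{MB}}-\wb_*)^\top]$ evolves as $\Ab_t=\cT^{\mathrm{MB}}\circ\Ab_{t-1}+\delta^2\bSigma^{\mathrm{MB}}$, where the one-step operator factors as
\[
\cT^{\mathrm{MB}}\circ\Ab=\cG\circ\Ab+\frac{\delta^2}{B}\big(\cM\circ\Ab-\Hb\Ab\Hb\big),
\]
whereas the single-sample operator is $\cT\circ\Ab=\cG\circ\Ab+\delta^2(\cM\circ\Ab-\Hb\Ab\Hb)$. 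The two share the identical deterministic part $\cG$ and differ only in that the fourth-moment excess-noise term is damped by $1/B$. Since $\cM\circ\Ab-\Hb\Ab\Hb\preceq\psi\tr(\Hb\Ab)\Hb$ for every PSD $\Ab$ by Assumption~\ref{assumption:fourth} and $\Hb\Ab\Hb\succeq\zero$, the mini-batch process meets exactly the abstract hypotheses behind Theorem~\ref{theorem:EMA_excess_risk} after the substitutions $\psi\mapsto\psi/B$ and $\sigma^2\mapsto\sigma^2/B$. I would then re-run the averaging-scheme framework verbatim with these effective constants. Because $\cG$ is untouched, the effective-bias contribution — governed solely by the deterministic EMA-weighted decay $b_i$ — is literally unchanged, which is why the EffectiveBias in Theorem~\ref{theorem:minibatch} coincides with that of Theorem~\ref{theorem:EMA_excess_risk}; the effective variance inherits the $1/B$ factor from $\psi/B$ (its first, bias-induced term) and from $\sigma^2/B$ (its second, label-noise term), while the cutoffs $k^*,k^\dagger$, depending only on $\alpha,\delta,N$, are unaffected.

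Finally I would discharge the learning-rate conditions. Under $\psi\mapsto\psi/B$ the single-sample self-amplification factor $(1-\psi\delta\tr(\Hb))^{-1}$ becomes $(1-\tfrac{\psi\delta\tr(\Hb)}{B})^{-1}$; the hypothesis $\delta<B/(2\psi\tr(\Hb))$ forces $\tfrac{\psi\delta\tr(\Hb)}{B}<\tfrac12$ and hence bounds this factor by $2$, which is precisely the constant in front of both variance terms. The second hypothesis $\delta<1/\|\Hb\|_2$ was automatic in the single-sample regime (there $\delta<1/(\psi\tr(\Hb))$ together with $\psi\ge1$ and $\tr(\Hb)\ge\|\Hb\|_2$ already yields $\delta<1/\|\Hb\|_2$) but must now be imposed separately, because the relaxed bound on $\delta$ may exceed $1/\|\Hb\|_2$; it guarantees $0<1-\delta\lambda_i<1$ for all $i$, keeping $\cG$ a non-expansion so that the bias decay and every geometric summation in the framework remain valid.

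The main obstacle I anticipate is not a single calculation but verifying that the single-sample proof never uses a finer property of $\cM$ or $\bSigma$ than the two inequalities $\cM\circ\Ab-\Hb\Ab\Hb\preceq\psi\tr(\Hb\Ab)\Hb$ and $\bSigma\preceq\sigma^2\Hb$ (plus the self-adjoint, PSD-preserving structure of the operators), so that $(\psi,\sigma^2)\mapsto(\psi/B,\sigma^2/B)$ is legitimate at every step — in particular inside the cross-covariance terms $\EE[(\wb_s^{\mathrm{MB}}-\wb_*)(\wb_t^{\mathrm{MB}}-\wb_*)^\top]$ created by the EMA weights, whose joint dynamics must still be controlled by the same $\cG$ and the $1/B$-damped excess-noise operator. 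Confirming that the damping propagates correctly through these second-moment recursions, and that the factor-of-$2$ slack absorbs the accumulated constants under the relaxed learning rate, is the crux of making the reduction rigorous.
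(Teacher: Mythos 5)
Your proposal is correct and follows essentially the same route as the paper: identify the mini-batch second-moment operator, observe that $\cB-\tilde\cB=\tfrac{\delta^2}{B}(\cM-\tilde\cM)$ so the excess fourth-moment term and the noise covariance are each damped by $1/B$, and then rerun the single-sample argument with $\psi\mapsto\psi/B$ and $\sigma^2\mapsto\sigma^2/B$. Your additional bookkeeping (the explicit diagonal/cross-pair computation of $\EE[\bar{\Hb}_t\Ab\bar{\Hb}_t]$ and the verification that $\delta<B/(2\psi\tr(\Hb))$ bounds the prefactor by $2$) just makes explicit what the paper leaves implicit.
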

Appendix~\ref{subsection:proof_minibatch} shows the proof of Theorem~\ref{theorem:minibatch}.
%The proof of Theorem~\ref{theorem:EMA_excess_risk} is given in Appendix~\ref{subsection:proof_EMA_excess_risk}.
The lower bound can be proved similar to Theorem \ref{theorem:EMA_excess_risk_lower}.

Based on Theorem \ref{theorem:minibatch}, we aim to derive the critical batch size~\citep{zhang2024does}, which is the batch size that causes a phase transition on the excess risk bound. Since the effective variance decays exponentially in $N$, we present the following corollary for only the effective variance:
\begin{corollary}\label{corollary:special_spectrum}
Suppose the eigenvalue spectrum satisfies $\lambda_i=i^{-a}$, and the initialization satisfies $\lambda_i(\wb_0-\wb_*)_i^2=i^{-b}$ where $b<a+1$. Let $M$ be the number of examples. Then under the same assumptions as Theorem \ref{theorem:minibatch}, we have
\begin{align*}
&\textcolor{blue}{\mathrm{EffectiveVar}}=\Theta(B^{-1}\delta^{1/a}(1-\alpha)^{1-1/a})+\Theta(B^{-1}\delta^{(2-b)/a}(1-\alpha)^{2-1/a}N^{1-(b-1)/a}).
\end{align*}
\end{corollary}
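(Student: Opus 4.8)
The plan is to evaluate directly the asymptotic order of the effective-variance expression supplied by Theorem~\ref{theorem:minibatch}, since under a power-law spectrum every sum in it collapses to a $p$-series. Write that bound as $\mathrm{EffectiveVar}\le A\cdot\frac{2\psi}{\delta B}\,C+\frac{2\sigma^2}{B}\,D$, where $A\coloneqq k^*(1-\alpha)^2+\delta^2\sum_{i>k^*}\lambda_i^2$ is the feature-noise prefactor, $C\coloneqq\|\wb_0-\wb_*\|_{\Ib_{0:k^\dagger}}^2+N\delta\|\wb_0-\wb_*\|_{\Hb_{k^\dagger:\infty}}^2$ is the initialization factor, and $D\coloneqq(1-\alpha)k^*+\delta\sum_{i=k^*+1}^{k^\dagger}\lambda_i+N\delta^2\sum_{i>k^\dagger}\lambda_i^2$ is the real-variance factor. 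The first step is to locate the effective dimensions: from $\lambda_i=i^{-a}$, solving $i^{-a}\ge(1-\alpha)/\delta$ and $i^{-a}\ge1/(N\delta)$ yields $k^*=\Theta((\delta/(1-\alpha))^{1/a})$ and $k^\dagger=\Theta((N\delta)^{1/a})$.

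Next I would evaluate $A$, $C$, and $D$ separately by comparing each partial sum to the matching integral. Since $\tr(\Hb)=\sum_i i^{-a}<\infty$ forces $a>1$, the tail $\sum_{i>k^*}i^{-2a}=\Theta((k^*)^{1-2a})$ converges, and substituting $k^*$ shows that both $k^*(1-\alpha)^2$ and $\delta^2\sum_{i>k^*}\lambda_i^2$ equal $\Theta(\delta^{1/a}(1-\alpha)^{2-1/a})$, so $A=\Theta(\delta^{1/a}(1-\alpha)^{2-1/a})$. For $C$, the hypothesis $\lambda_i(\wb_0-\wb_*)_i^2=i^{-b}$ gives $(\wb_0-\wb_*)_i^2=i^{a-b}$, so the bulk term is $\sum_{i\le k^\dagger}(\wb_0-\wb_*)_i^2=\Theta((k^\dagger)^{a-b+1})$, whose exponent is positive exactly because $b<a+1$; the tail $N\delta\sum_{i>k^\dagger}i^{-b}$ contributes at the same order, whence $C=\Theta((N\delta)^{(a+1-b)/a})=\Theta((N\delta)^{1-(b-1)/a})$. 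For $D$, the same estimates give $(1-\alpha)k^*=\Theta(\delta^{1/a}(1-\alpha)^{1-1/a})$, the middle sum $\delta\sum_{i=k^*+1}^{k^\dagger}\lambda_i=\Theta(\delta^{1/a}(1-\alpha)^{1-1/a})$ (controlled by its lower endpoint since $a>1$), and $N\delta^2\sum_{i>k^\dagger}\lambda_i^2=\Theta(N^{1/a-1}\delta^{1/a})$; invoking the standing condition $N(1-\alpha)\ge1$ together with $a>1$ shows $(1-\alpha)k^*$ dominates the last term, so $D=\Theta(\delta^{1/a}(1-\alpha)^{1-1/a})$.

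Finally I would assemble the two contributions. The feature-randomness term is $A\cdot\frac{2\psi}{\delta B}\,C=\Theta\big(\tfrac1{\delta B}\delta^{1/a}(1-\alpha)^{2-1/a}(N\delta)^{1-(b-1)/a}\big)=\Theta\big(B^{-1}\delta^{(2-b)/a}(1-\alpha)^{2-1/a}N^{1-(b-1)/a}\big)$, which matches the second claimed term after collecting the exponents of $\delta$; the real-variance term is $\frac{2\sigma^2}{B}\,D=\Theta(B^{-1}\delta^{1/a}(1-\alpha)^{1-1/a})$, which matches the first. Summing the two (treating $\psi$, $\sigma^2$ as constants) gives the corollary.

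The hard part will be the bookkeeping in the second step rather than any single estimate: each of $A$, $C$, $D$ is a sum split at an effective dimension, and one must check that the pieces on the two sides of the split are of matching order, so the whole split is $\Theta$ of a single clean power. This is precisely where the hypotheses earn their keep---$a>1$ makes the $\sum_{i>k}i^{-2a}$ and $\sum_{i>k}i^{-a}$ tails converge and be governed by their lower endpoints, $b<a+1$ makes the initialization mass $\sum_{i\le k^\dagger}(\wb_0-\wb_*)_i^2$ grow polynomially in $k^\dagger$ and hence dominate $C$, and $N(1-\alpha)\ge1$ is exactly what suppresses the residual $N^{1/a-1}$ term in $D$, leaving the real-variance term $N$-free as stated. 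Once these matchings are verified, substituting the closed forms of $k^*$ and $k^\dagger$ and collecting exponents is routine.
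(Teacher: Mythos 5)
Your calculation is the intended one: the paper gives no separate proof of this corollary, and the formula is obtained exactly as you do it, by substituting the power-law closed forms $k^*=\Theta((\delta/(1-\alpha))^{1/a})$ and $k^\dagger=\Theta((N\delta)^{1/a})$ into the effective-variance bound of Theorem~\ref{theorem:minibatch} and evaluating each $p$-series; all of your exponent bookkeeping checks out, including the use of $N(1-\alpha)\ge1$ to suppress the $N^{1/a-1}\delta^{1/a}$ tail of $D$ and of $b<a+1$ to make the head of $C$ grow like $(k^\dagger)^{a-b+1}$.

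Two caveats. First, you only bound $\mathrm{EffectiveVar}$ from above: you compute the order of the right-hand side of Theorem~\ref{theorem:minibatch}, which gives ``$\le$'', not ``$=\Theta$''. The two-sided claim requires pairing this with the mini-batch analogue of Theorem~\ref{theorem:EMA_excess_risk_lower}; for the real-variance term this is immediate since the lower bound carries the same bracket $D$ up to constants, but for the feature-noise term the lower bound's prefactor is $\|\wb_0-\wb_*\|_{\Hb_{k^\dagger:\infty}}^2=\Theta((N\delta)^{(1-b)/a})$ rather than $C=\Theta((N\delta)^{1+(1-b)/a})$, so the upper and lower feature-noise terms differ by a factor $N(1-\alpha)$ and genuinely match only when $N(1-\alpha)=\Theta(1)$; this gap is inherited from the paper's own upper/lower mismatch discussed in Subsection~\ref{subsection:var} and is not something your argument could have repaired. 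Second, your tail estimates $\sum_{i>k^\dagger}i^{-b}=\Theta((k^\dagger)^{1-b})$ and the nondegeneracy of $k^*$ implicitly require $b>1$ and $1-\alpha\le\delta\lambda_1$; these are standard implicit assumptions in this setting (the former is forced by $\|\wb_0-\wb_*\|_{\Hb}^2<\infty$), but it would be worth stating them.
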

The assumption of the eigenvalue spectrum and the initialization is referred to as the source condition \citep{caponnetto2007optimal, zhang2024does}.
The assumption of $b<a+1$ ensures that upper bound and the lower bounds are matching. If we further let $N=M/B$ where $M$ is the total number of samples, then the critical batch size is
$B^*=\cO(M\delta^{\frac{1-b}{a-b+1}}(1-\alpha)^{\frac{a}{a-b+1}})$.
We observe that the critical batch size of SGD with EMA is sharply different from SGD with iterate averaging in \citet{zhang2024does}. This is because the critical batch size is determined by both the effective bias and the effective variance for SGD with iterate averaging due to the effective bias that decays only polynomially in $N$. However, the effective bias of SGD with EMA decays exponentially in $N$, making it negligible in the analysis of the critical batch size.

\section{Overview of Proof Techniques}

In this section, we present the proof technique that is not only used in our analysis of EMA, but also applicable to a class of averaging schemes.

We first introduce the class of averaging schemes that covers EMA and iterate averaging, among others. In \eqref{eq:def_EMA}, instead of using a uniform $\alpha$ in all iterates, we allow the averaging parameter to depend on $t$, i.e.,
\begin{align*}
\owb_0=\wb_0;\qquad\owb_t=\textcolor{red}{\alpha_{t-1}}\owb_{t-1}+\textcolor{red}{(1-\alpha_{t-1})}\wb_{t-1}.
\end{align*}
where $\alpha_t\in[0, 1]$ is the time-dependent averaging parameter. The final output can be written as
\begin{align*}
\owb_N=\textcolor{red}{\beta_0}\wb_0+\sum_{t=0}^{N-1}\textcolor{red}{(\beta_{t+1}-\beta_t)}\wb_t,
\end{align*}
where $\beta_t$ is defined as $\beta_t=\prod_{k=t}^{N-1}\alpha_t$. Most averaging schemes belong to this class, e.g.,
\begin{shaded}
\begin{itemize}[leftmargin=*]
\item EMA: $\alpha_t=\alpha$, and $\beta_t=\alpha^{N-t}$.
\item SGD without averaging: $\alpha_t=0$, $\beta_N=1$, and $\beta_t=0$ for all $t=0, \dots, N-1$.
\item Iterate averaging: $\alpha_t=t/(t+1)$, and $\beta_t=t/N$.
\item Tail averaging:
\begin{align*}
\alpha_t=\begin{cases}
0 & t<s,\\
\frac{t-s}{t-s+1} & t\ge s;
\end{cases},\qquad\beta_t=\begin{cases}
0 & t<s,\\
\frac{t-s}{N-s} & t\ge s.
\end{cases}
\end{align*}
\end{itemize}
\end{shaded}

We now define several notations following~\citet{zou2021benign}. We first define the centered SGD iterate as $\bmeta_t=\wb_t-\wb_*$, and the EMA of the centered SGD iterates is $\obmeta_N=\owb_N-\wb_*$. We define the centered bias and variance vectors recursively as
\begin{gather*}
\bmeta_0^{\mathrm{bias}}=\bmeta_0,\quad\bmeta_t^{\mathrm{bias}}=(\Ib-\delta\xb_t\xb_t^\top)\bmeta_{t-1}^{\mathrm{bias}};\\
\bmeta_0^{\mathrm{var}}=\zero,\quad\bmeta_t^{\mathrm{var}}=(\Ib-\delta\xb_t\xb_t^\top)\bmeta_{t-1}^{\mathrm{var}}+\delta\xi_t\xb_t.
\end{gather*}
We can define the EMA of the centered vectors $\obmeta_N$, $\obmeta_N^{\mathrm{bias}}$, and $\obmeta_N^{\mathrm{var}}$ similar to the definition of $\owb_N$ in \eqref{eq:weighted_sum}. Following previous works \citep{defossez2015averaged, dieulevuet2017harder, jain2018parallelizing, berthier2020tight, zou2021benign, wu2022last, lin2024scaling, li2023risk}, under Assumption \ref{assumption:noise}, the excess risk can be decomposed as (See Lemma \ref{lemma:bias_variance_decomp} for details)
$\EE[L(\owb_N)]-L(\wb_*)\le\mathrm{bias}+\mathrm{var}$,
where the bias and variance errors are defined as
\begin{align}\label{eq:def_bias_var}
\mathrm{bias}=\langle\Hb, \EE[\obmeta_N^{\mathrm{bias}}\otimes\obmeta_N^{\mathrm{bias}}]\rangle,\quad\mathrm{var}=\langle\Hb, \EE[\obmeta_N^{\mathrm{var}}\otimes\obmeta_N^{\mathrm{var}}]\rangle.
\end{align}
Since $\obmeta_N^{\mathrm{bias}}$ and $\obmeta_N^{\mathrm{var}}$ are the weighted sums of $\bmeta_t^{\mathrm{bias}}$ and $\bmeta_t^{\mathrm{var}}$, respectively,
in order to bound $\mathrm{bias}$ and $\mathrm{var}$ which depends on the covariance matrix of $\obmeta_N^{\mathrm{bias}}$ and $\obmeta_N^{\mathrm{var}}$, it suffices to (i) study terms of the form $\EE[\bmeta_t^{\mathrm{bias}}\otimes\bmeta_k^{\mathrm{bias}}]$ and $\EE[\bmeta_t^{\mathrm{var}}\otimes\bmeta_k^{\mathrm{var}}]$, and (ii) represent the bias and variance errors in a tractable form. For Step (i), following \citet{zou2021benign}, we define the covariance matrices as
$\Bb_t=\EE[\bmeta_t^{\mathrm{bias}}\otimes\bmeta_t^{\mathrm{bias}}]$ and $\Cb_t=\EE[\bmeta_t^{\mathrm{var}}\otimes\bmeta_t^{\mathrm{var}}]$.
With these definitions, for $k\ge t$, we have
$\EE[\bmeta_t^{\mathrm{bias}}\otimes\bmeta_k^{\mathrm{bias}}]=\Bb_t(\Ib-\delta\Hb)^{k-t}$ and $\EE[\bmeta_t^{\mathrm{var}}\otimes\bmeta_k^{\mathrm{var}}]=\Cb_t(\Ib-\delta\Hb)^{k-t}$.
We are now ready to represent $\EE[\obmeta_N^{\mathrm{bias}}\otimes\obmeta_N^{\mathrm{bias}}]$ using $\Bb_t$:
\begin{align}
&\EE[\obmeta_N^{\mathrm{bias}}\otimes\obmeta_N^{\mathrm{bias}}]=\beta_0^2\Bb_0+\sum_{t=0}^{N-1}\beta_0(\beta_t-\beta_{t+1})[(\Ib-\delta\Hb)^t\Bb_0+\Bb_0(\Ib-\delta\Hb)^t]\nonumber\\
&\qquad+\sum_{t=0}^{N-1}(\beta_t-\beta_{t+1})\bigg[(\beta_t-\beta_{t+1})\Bb_t+\sum_{k=t+1}^{N-1}(\beta_k-\beta_{k+1})[(\Ib-\delta\Hb)^{k-t}\Bb_t+\Bb_t(\Ib-\delta\Hb)^{k-t}]\bigg].\label{eq:cov_ver1}
\end{align}
For Step (ii), the analysis in \citet{zou2021benign, wu2022last} that adds the terms $\Bb_t$ and transforms \eqref{eq:cov_ver1} into a ``triangular'' sum does not work due to the \emph{inhomogeneous $\beta_t-\beta_{t+1}$}. To tackle this issue, we make the critical observation that
\begin{align*}
&(\beta_t-\beta_{t+1})\bigg[(\beta_t-\beta_{t+1})\Bb_t+\sum_{k=t+1}^{N-1}(\beta_k-\beta_{k+1})[(\Ib-\delta\Hb)^{k-t}\Bb_t+\Bb_t(\Ib-\delta\Hb)^{k-t}]\bigg]\\
&=\bigg[\sum_{k=t}^{N-1}(\beta_k-\beta_{k+1})(\Ib-\delta\Hb)^{k-t}\bigg]\cdot\Bb_t\cdot\bigg[\sum_{k=t}^{N-1}(\beta_k-\beta_{k+1})(\Ib-\delta\Hb)^{k-t}\bigg]\\
&\qquad-\bigg[\sum_{k=t+1}^{N-1}(\beta_k-\beta_{k+1})(\Ib-\delta\Hb)^{k-t-1}\bigg]\cdot(\tilde\cB\circ\Bb_t)\cdot\bigg[\sum_{k=t+1}^{N-1}(\beta_k-\beta_{k+1})(\Ib-\delta\Hb)^{k-t-1}\bigg],
\end{align*}
where the matrix operator $\tilde\cB$ is defined as $\tilde\cB=(\Ib-\delta\Hb)\otimes(\Ib-\delta\Hb)$.
Similar properties were first used in \citet{li2023risk} to study the generalization of SGD with Nesterov momentum. Using this property, by applying the telescope sum, \eqref{eq:cov_ver1} can be reformulated as
\begin{align}
&\EE[\obmeta_N^{\mathrm{bias}}\otimes\obmeta_N^{\mathrm{bias}}]=\bigg[\beta_0\Ib+\sum_{k=0}^{N-1}(\beta_k-\beta_{k+1})(\Ib-\delta\Hb)^k\bigg]\cdot\Bb_0\cdot\bigg[\beta_0\Ib+\sum_{k=0}^{N-1}(\beta_k-\beta_{k+1})(\Ib-\delta\Hb)^k\bigg]\nonumber\\
&\qquad+\sum_{t=1}^{N-1}\bigg[\sum_{k=t}^{N-1}(\beta_k-\beta_{k+1})(\Ib-\delta\Hb)^{k-t}\bigg]\cdot(\Bb_t-\tilde\cB\circ\Bb_{t-1})\cdot\bigg[\sum_{k=t}^{N-1}(\beta_k-\beta_{k+1})(\Ib-\delta\Hb)^{k-t}\bigg],\label{eq:cov_ver2}
\end{align}
where the first term corresponds to the effective bias, and the second term contributes to the effective variance.
A similar reformulation can also be applied to the variance error. Further simplifications are possible due to the fact that $\Cb_0=\zero$, so the variance term corresponding to the first term in \eqref{eq:cov_ver2} is zero. Afterwards, $\Bb_t$ and $\Cb_t$ can be further characterized by the analysis similar to \citet{zou2021benign}.

\begin{figure}[ht]
\centering
\subfigure[Bias error]{\includegraphics*[width=0.45\linewidth]{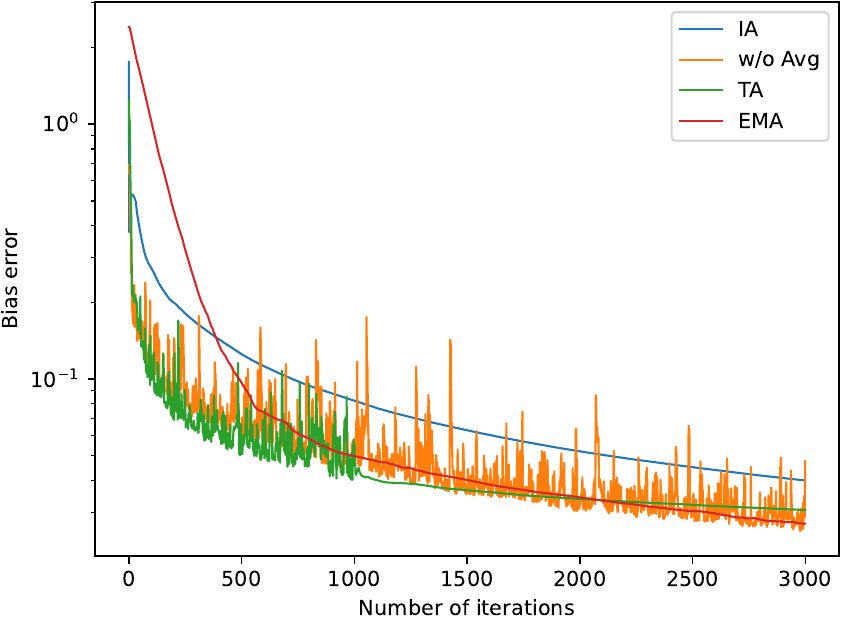}}
%\caption{Bias error of SGD with different averaging schemes.}
\subfigure[Variance error]{\includegraphics*[width=0.45\linewidth]{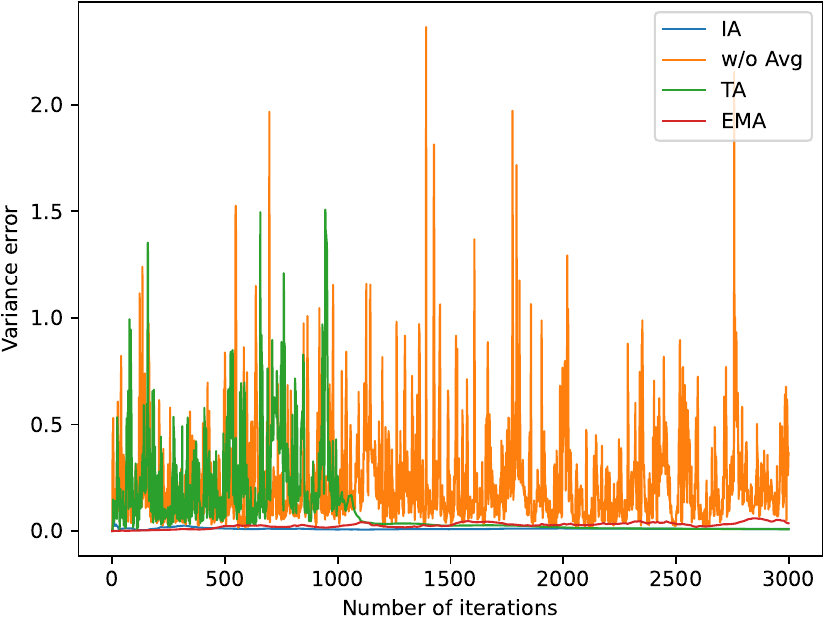}}
\caption{Comparison of SGD with different averaging schemes. The bias error of SGD with EMA is more stable than SGD without averaging, and decays faster than iterate averaging and tail averaging when $N$ is large. The variance error of SGD with EMA remains relatively small, and is comparable to that of SGD with iterate averaging or tail averaging.}
\label{fig:avg}
%\caption{Variance error of SGD with different averaging schemes.}
\end{figure}
\begin{figure}[ht]
\centering
\subfigure[Bias error]{\includegraphics*[width=0.45\linewidth]{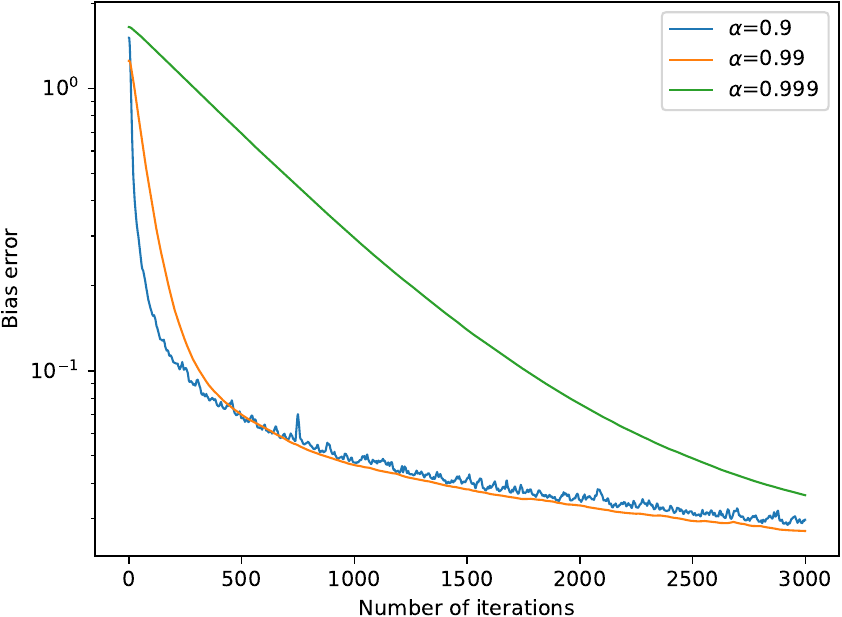}}
%\caption{Bias error of SGD with different averaging schemes.}
\subfigure[Variance error]{\includegraphics*[width=0.45\linewidth]{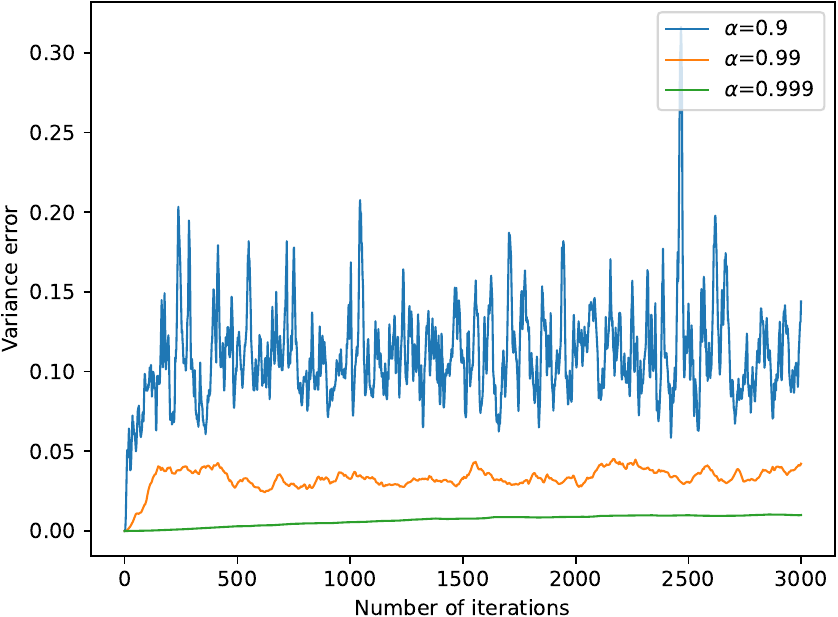}}
\caption{Comparison of SGD with EMA with different $\alpha$. The bias error of SGD with EMA with smaller alpha decays faster at the beginning of training, but the advantage is less significant when $N$ is large. The variance error of SGD with EMA decreases as $\alpha$ increases.}
\label{fig:alpha}
%\caption{Variance error of SGD with different averaging schemes.}
\end{figure}

\section{Experiments}

In this section, we verify our theoretical findings with empirical experiments.
We (i) compare the generalization performance of SGD with different schemes, and (ii) explore the impact of the choice of the averaging parameter $\alpha$ on the excess risk of SGD with EMA.
We consider the well specified setting (Assumption~\ref{assumption:noise_lower}) with $\sigma^2=1$. The data feature vectors follow the Gaussian distribution $\xb_t\sim\cN(\zero, \Hb)$ where the eigenvalue spectrum of $\Hb$ is $\lambda_i=i^{-2}$ with $d=2000$, which is also the experiment setting in \citet{zou2021benign, wu2022last, li2023risk}. The centered model weight vector is initialized as a Gaussian random vector $\wb_0-\wb_*\sim\cN(\zero, \Ib)$. According to Theorem \ref{theorem:EMA_excess_risk}, the learning rate $\delta$ should satisfy $\delta<1/(\psi\tr(\Hb))=2/\pi^2\approx0.203$, so we choose $\delta=0.2$. The total number of iterations is fixed as $N=3000$. In all experiments, we record both the bias error and the variance error as defined in \eqref{eq:def_bias_var}.

\paragraph{Comparison of different averaging schemes.}
In the comparison of EMA with other averaging schemes, the averaging parameter of EMA is $\alpha=0.995$, and $s=1000$ in tail averaging. The comparisons of the bias error and the variance error are shown in Figures \ref{fig:avg}(a) and \ref{fig:avg}(b), respectively. Although the bias error of SGD with EMA decays slowly at the beginning, it achieves a fast decay rate similar to that of SGD without averaging. However, the bias error of SGD with EMA is far more stable than without averaging, due to the reduced variance of the data feature. The variance error of SGD with EMA remains at a low level though slightly larger than SGD with iterate averaging or tail averaging (because $(1-\alpha)(N-s)\gg1$). We also conclude that averaging in general is crucial in variance reduction due to the observation that the variance error of SGD with tail averaging decays sharply when averaging starts at $t=1000$.

\paragraph{Comparison of SGD with EMA with different $\alpha$.}
We compare SGD with EMA with $\alpha=0.9$, $0.99$ and $0.999$, and the experiments results are the average of 10 independent runs.
The variance error (Figure \ref{fig:alpha}(a)) of SGD with EMA with larger $\alpha$ is significantly smaller than that with smaller $\alpha$, and the bias error (Figure \ref{fig:alpha}(b)) is also more stable.
The bias error of SGD with EMA when $\alpha=0.9$ or $0.99$ decays much faster than when $\alpha=0.999$, but they all approach a similar level when $N=3000$.
We conjecture that this is because the decay rate of the bias error is dominated by the slowest decaying component, which is the bias error in the eigen-subspaces of the smallest eigenvalues. As we have pointed out in Proposition \ref{prop:exponential}, the exponential decay rate of the bias error in such eigen-subspaces is irrelevant to $\alpha$.

\section{Conclusion}

In this work, we study the generalization of SGD with EMA in the high-dimensional linear regression setting.
Our excess risk bound of SGD with EMA depends solely on the eigenvalue spectrum, which is instance-dependent and dimension-free.
Similar results can also be derived for mini-batch SGD.
In a comparison with SGD with other averaging schemes, we reveal the two-fold advantage of SGD with EMA: the exponentially decaying effective bias error and the modest effective variance error.
Our analysis provides the framework for the study of a class of averaging schemes we proposed.

\appendix
\hypersetup{linkcolor=black ,urlcolor=black}
\onecolumn
\renewcommand{\appendixpagename}{\centering \LARGE Appendix}
\appendixpage
%\begingroup % start a TeX group
%\color{blue}% or whatever color you wish to use

\startcontents[section]
\printcontents[section]{l}{1}{\setcounter{tocdepth}{2}}
%\endgroup   % end of TeX group
% \onecolumn
\hypersetup{linkcolor=linkcolor,urlcolor=black}
\vspace{20ex}

% \tableofcontents
\newpage

\section{Additional Notations}

\paragraph{Linear Operators on Matrices.}
We define the following linear operators on matrix following \citet{zou2021benign}:
\begin{gather*}
\cI=\Ib\otimes\Ib,\qquad\cM=\EE[\xb\otimes\xb\otimes\xb\otimes\xb],\qquad\tilde\cM=\Hb\otimes\Hb,\\
\cB=\EE[(\Ib-\delta\xb\xb^\top)\otimes(\Ib-\delta\xb\xb^\top)],\qquad\tilde\cB=(\Ib-\delta\Hb)\otimes(\Ib-\delta\Hb)
\end{gather*}
Denote the $\sigma$-algebra generated by samples $\{(\xb_k, y_k)\}_{k=1}^t$ as $\cF_t$.
Due to the optimality of $\wb_*$, we have $\nabla L(\wb_*)=\zero$, which implies that
\begin{align}\label{eq:noise_zero_mean}
\zero=\nabla L(\wb_*)=\EE[\xb(\xb^\top\wb_*-y)]=\Hb\wb_*-\EE[\xb\cdot y].
\end{align}
Due to the equality above, we have
\begin{align*}
\EE[\bmeta_t^{\mathrm{bias}}|\cF_{t-1}]=(\Ib-\delta\Hb)\bmeta_{t-1}^{\mathrm{bias}},\qquad\EE[\bmeta_t^{\mathrm{var}}|\cF_{t-1}]=(\Ib-\delta\Hb)\bmeta_{t-1}^{\mathrm{var}}.
\end{align*}
Iterating this property, using the double expectation formula, we have for any $k\le t$, we have
\begin{align}\label{eq:iterate_first_moment}
\EE[\bmeta_t^{\mathrm{bias}}|\cF_k]=(\Ib-\delta\Hb)^{t-k}\bmeta_k^{\mathrm{bias}},\qquad\EE[\bmeta_t^{\mathrm{var}}|\cF_k]=(\Ib-\delta\Hb)^{t-k}\bmeta_{k}^{\mathrm{var}},
\end{align}
which indicates that $\EE[\bmeta_t^{\mathrm{var}}]=\zero$.
We also have
\begin{align}
\Bb_t&=\EE[\EE[\bmeta_t^{\mathrm{bias}}\otimes\bmeta_t^{\mathrm{bias}}|\cF_{t-1}]]\nonumber\\
&=\EE[\EE[((\Ib-\delta\xb_t\xb_t^\top)\otimes(\Ib-\delta\xb_t\xb_t^\top))\cdot(\bmeta_{t-1}^{\mathrm{bias}}\otimes\bmeta_{t-1}^{\mathrm{bias}})|\cF_{t-1}]]\nonumber\\
&=\EE[\cB\circ(\bmeta_{t-1}^{\mathrm{bias}}\otimes\bmeta_{t-1}^{\mathrm{bias}})]\nonumber\\
&=\cB\circ\Bb_{t-1},\label{eq:iterate_Bt}
\end{align}
and
\begin{align}
\Cb_t&=\EE[\EE[\bmeta_t^{\mathrm{var}}\otimes\bmeta_t^{\mathrm{var}}|\cF_{t-1}]]\nonumber\\
&=\EE\Big[\EE[((\Ib-\delta\xb_t\xb_t^\top)\otimes(\Ib-\delta\xb_t\xb_t^\top))\cdot(\bmeta_{t-1}^{\mathrm{var}}\otimes\bmeta_{t-1}^{\mathrm{var}})+\delta^2\xi_t^2\xb_t\xb_t^\top\nonumber\\
&\qquad-\delta\xi_t\xb_t(\bmeta_{t-1}^{\mathrm{var}})^\top(\Ib-\delta\xb_t\xb_t^\top)-\delta\xi_t(\Ib-\delta\xb_t\xb_t^\top)\bmeta_{t-1}^{\mathrm{var}}\xb_t^\top|\cF_{t-1}]\Big]\nonumber\\
&=\cB\circ\Cb_{t-1}+\delta^2\bSigma,
\end{align}
where the last equality holds because $(\xb_t, y_t)$ is independent from $\bmeta_{t-1}^{\mathrm{var}}$ and $\EE[\bmeta_{t-1}^{\mathrm{var}}]=\zero$.

Several other key properties of the centered iterates and the linear operators are given in Appendix~\ref{section:linear_operator_property}.

\section{Proof of Main Results}\label{section:proof_main}

\subsection{Proof of Theorem~\ref{theorem:EMA_excess_risk}}\label{subsection:proof_EMA_excess_risk}
To prove Theorem~\ref{theorem:EMA_excess_risk}, we first decompose the excess risk into the bias error and the variance error (Lemma~\ref{lemma:bias_variance_decomp}), and then bound them separately (Lemma~\ref{lemma:variance_upper_bound} and Lemma~\ref{lemma:bias_upper_bound}).
\begin{lemma}\label{lemma:bias_variance_decomp}
The excess risk can be decomposed as
\begin{align*}
\EE[L(\owb_N)]-L(\wb_*)\le\mathrm{bias}+\mathrm{var},
\end{align*}
where
\begin{align*}
\mathrm{bias}=\langle\Hb, \EE[\obmeta_N^{\mathrm{bias}}\otimes\obmeta_N^{\mathrm{bias}}]\rangle,\qquad\mathrm{var}=\langle\Hb, \EE[\obmeta_N^{\mathrm{var}}\otimes\obmeta_N^{\mathrm{var}}]\rangle.
\end{align*}
\end{lemma}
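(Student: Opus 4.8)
The plan is to exploit that $L$ is quadratic with Hessian $\Hb$, which reduces the excess risk to a weighted norm of the centered output, and then to split that norm into its bias and variance contributions. First I would establish the exact identity
\[
L(\wb)-L(\wb_*)=\frac12\|\wb-\wb_*\|_{\Hb}^2\qquad\text{for all }\wb\in\cH.
\]
This comes from expanding $L(\wb)=\frac12\EE[(y-\langle\wb,\xb\rangle)^2]$ about $\wb_*$: writing $y-\langle\wb,\xb\rangle=\xi-\langle\wb-\wb_*,\xb\rangle$ with $\xi=y-\langle\wb_*,\xb\rangle$, the square yields the noise term $\frac12\EE[\xi^2]=L(\wb_*)$, the quadratic term $\frac12(\wb-\wb_*)^\top\Hb(\wb-\wb_*)$, and a cross term $-\langle\wb-\wb_*,\EE[\xi\xb]\rangle$ that vanishes by the optimality condition \eqref{eq:noise_zero_mean}. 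Applying this to the random output $\owb_N$ and taking expectations gives
\[
\EE[L(\owb_N)]-L(\wb_*)=\frac12\,\EE\big[\|\obmeta_N\|_{\Hb}^2\big]=\frac12\,\big\langle\Hb,\EE[\obmeta_N\otimes\obmeta_N]\big\rangle,
\]
using $\langle\Hb,\obmeta_N\otimes\obmeta_N\rangle=\obmeta_N^\top\Hb\obmeta_N=\|\obmeta_N\|_{\Hb}^2$ together with linearity of expectation.

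Next I would justify the additive decomposition $\obmeta_N=\obmeta_N^{\mathrm{bias}}+\obmeta_N^{\mathrm{var}}$. A one-line induction on the recursion $\bmeta_t=(\Ib-\delta\xb_t\xb_t^\top)\bmeta_{t-1}+\delta\xi_t\xb_t$ shows $\bmeta_t=\bmeta_t^{\mathrm{bias}}+\bmeta_t^{\mathrm{var}}$ for every $t$, since the two component recursions share the linear map $\Ib-\delta\xb_t\xb_t^\top$ and only the inhomogeneous term $\delta\xi_t\xb_t$ is carried by the variance part. Because the EMA weights in \eqref{eq:weighted_sum} sum to one, subtracting $\wb_*$ commutes through, so $\obmeta_N$ is the same fixed weighted average of the $\bmeta_t$ that $\owb_N$ is of the $\wb_t$; linearity then transfers the decomposition to $\obmeta_N$. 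I would then apply the elementary bound $\|\obmeta_N^{\mathrm{bias}}+\obmeta_N^{\mathrm{var}}\|_{\Hb}^2\le 2\|\obmeta_N^{\mathrm{bias}}\|_{\Hb}^2+2\|\obmeta_N^{\mathrm{var}}\|_{\Hb}^2$, which combined with the previous display gives
\[
\EE[L(\owb_N)]-L(\wb_*)\le \EE\big[\|\obmeta_N^{\mathrm{bias}}\|_{\Hb}^2\big]+\EE\big[\|\obmeta_N^{\mathrm{var}}\|_{\Hb}^2\big]=\mathrm{bias}+\mathrm{var},
\]
after re-expressing each norm as $\langle\Hb,\EE[\cdot\otimes\cdot]\rangle$ exactly as in the first step.

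The one genuinely delicate point—and the reason the statement is an inequality rather than an equality—is the cross term. Expanding $\langle\Hb,\EE[\obmeta_N\otimes\obmeta_N]\rangle$ honestly leaves $2\langle\Hb,\EE[\obmeta_N^{\mathrm{bias}}\otimes\obmeta_N^{\mathrm{var}}]\rangle$, which I do not expect to vanish: although $\EE[\bmeta_t^{\mathrm{var}}]=\zero$, the bias and variance iterates are correlated because they are driven by the same random contractions $\Ib-\delta\xb_t\xb_t^\top$, and the weak noise condition (Assumption~\ref{assumption:noise}) does not make $\xi$ conditionally mean-zero given $\xb$. Rather than attempt to control this term, I absorb it through $2ab\le a^2+b^2$; the factor $2$ this introduces is exactly cancelled by the $\frac12$ in the definition of $L$, producing the clean bound $\mathrm{bias}+\mathrm{var}$ with no stray constants. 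Everything else is routine once the quadratic identity and the linear decomposition are in place.
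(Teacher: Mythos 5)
Your proof is correct and follows essentially the same route as the paper: the quadratic identity $L(\wb)-L(\wb_*)=\tfrac12\|\wb-\wb_*\|_{\Hb}^2$ (the paper's Lemma~\ref{lemma:excess_risk_equivalence}), the inductive decomposition $\obmeta_N=\obmeta_N^{\mathrm{bias}}+\obmeta_N^{\mathrm{var}}$ (Lemma~\ref{lemma:vector_decomp}), and the bound $\|a+b\|_{\Hb}^2\le2\|a\|_{\Hb}^2+2\|b\|_{\Hb}^2$, which the paper phrases equivalently as adding the nonnegative term $\langle\Hb,(a-b)\otimes(a-b)\rangle$. Your remark that the cross term need not vanish under the weak noise assumption is also consistent with the paper, which only kills that term under Assumption~\ref{assumption:noise_lower} in the lower-bound counterpart.
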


\begin{lemma}\label{lemma:variance_upper_bound}
Under Assumption~\ref{assumption:fourth}, the variance error satisfies
\begin{align*}
\mathrm{var}\le\frac{\sigma^2}{1-\psi\delta\tr(\Hb)}\bigg[(1-\alpha)k^*+\delta\sum_{i=k^*}^{k^\dagger}\lambda_i+N\delta^2\sum_{i>k^\dagger}\lambda_i^2\bigg].
\end{align*}
\end{lemma}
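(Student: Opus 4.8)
The plan is to instantiate the general reformulation \eqref{eq:cov_ver2} for the variance iterate and then collapse everything to a single per-eigenvalue geometric sum. Write $\mathbf{S}_t\coloneqq\sum_{k=t}^{N-1}(\beta_k-\beta_{k+1})(\Ib-\delta\Hb)^{k-t}$ for the weighted contraction operator appearing in \eqref{eq:cov_ver2}; since $\mathbf{S}_t$ is a polynomial in $\Hb$ it is diagonal and commutes with $\Hb$. Because $\Cb_0=\zero$, the ``first term'' of the variance analogue of \eqref{eq:cov_ver2} vanishes, leaving
\[
\mathrm{var}=\sum_{t=1}^{N-1}\big\langle\Hb\,\mathbf{S}_t^2,\ \Cb_t-\tilde\cB\circ\Cb_{t-1}\big\rangle,
\]
where I used $\mathbf{S}_t\Hb\mathbf{S}_t=\Hb\mathbf{S}_t^2$. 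First I would simplify the increment: from $\Cb_t=\cB\circ\Cb_{t-1}+\delta^2\bSigma$ and the operator identity $\cB-\tilde\cB=\delta^2(\cM-\tilde\cM)$ (obtained by expanding $\cB\circ\Ab$ and $\tilde\cB\circ\Ab$), I get $\Cb_t-\tilde\cB\circ\Cb_{t-1}=\delta^2\big(\cM\circ\Cb_{t-1}-\Hb\Cb_{t-1}\Hb+\bSigma\big)$.

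The second step is to pass to a scalar prefactor. Since $\Hb\mathbf{S}_t^2\succeq\zero$, the operations of dropping the PSD term $-\Hb\Cb_{t-1}\Hb$, applying Assumption~\ref{assumption:fourth} as $\cM\circ\Cb_{t-1}\preceq\psi\tr(\Hb\Cb_{t-1})\Hb$, and bounding $\bSigma\preceq\sigma^2\Hb$ via \eqref{eq:Sigma_bound} all preserve the inner product, so
\[
\mathrm{var}\le\delta^2\sum_{t=1}^{N-1}\big[\psi\tr(\Hb\Cb_{t-1})+\sigma^2\big]\,\tr(\Hb^2\mathbf{S}_t^2).
\]
To control the coupling term $\tr(\Hb\Cb_{t-1})$ I would use a self-bounding argument: since $\cB$ is monotone on the PSD cone and $\Cb_0=\zero$, induction gives $\Cb_t\preceq\Cb_\infty$, where $\Cb_\infty$ solves $\Cb_\infty=\cB\circ\Cb_\infty+\delta^2\bSigma$; pairing the fixed-point equation with $\Ib$ and invoking Assumption~\ref{assumption:fourth} again yields $\tr(\Hb\Cb_\infty)\le\delta\sigma^2\tr(\Hb)/(2-\psi\delta\tr(\Hb))$. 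Substituting this uniform bound and collecting the resulting geometric factor converts the prefactor into $\sigma^2/(1-\psi\delta\tr(\Hb))$ (using the elementary comparison $2/(2-x)\le 1/(1-x)$, valid since $\delta<1/(\psi\tr(\Hb))$), reducing the claim to
\[
\mathrm{var}\le\frac{\sigma^2}{1-\psi\delta\tr(\Hb)}\cdot\delta^2\sum_{i=1}^d\lambda_i^2\sum_{t=1}^{N-1}[\mathbf{S}_t]_{ii}^2 ,
\]
where I used that $\Hb$ and $\mathbf{S}_t$ are simultaneously diagonal, so $\tr(\Hb^2\mathbf{S}_t^2)=\sum_i\lambda_i^2[\mathbf{S}_t]_{ii}^2$.

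It then remains to prove the purely scalar estimate $\delta^2\lambda_i^2\sum_{t=1}^{N-1}[\mathbf{S}_t]_{ii}^2\le\min\{1-\alpha,\ \delta\lambda_i,\ N\delta^2\lambda_i^2\}$ for each $i$; summing over $i$ and applying the identity \eqref{eq:effective_var_2} would then recover exactly the claimed three-term bound with cutoffs $k^*$ and $k^\dagger$. Here $[\mathbf{S}_t]_{ii}^2=(1-\alpha)^2\big(\sum_{j=0}^{N-1-t}\alpha^{N-1-t-j}(1-\delta\lambda_i)^{j}\big)^2$ is a truncated geometric sum governed by the ratio $r_i\coloneqq(1-\delta\lambda_i)/\alpha$, and the three regimes correspond to $r_i<1$ (i.e.\ $\lambda_i\ge(1-\alpha)/\delta$, so $i\le k^*$), $r_i>1$ with $N\delta\lambda_i\ge1$ (so $k^*<i\le k^\dagger$), and $N\delta\lambda_i<1$ (so $i>k^\dagger$). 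I expect \textbf{this scalar double sum to be the main obstacle}: the crude bound $\sum_j r_i^j\le(1-r_i)^{-1}$ blows up as $r_i\to1$, precisely at the phase transition near $k^*$, so on that boundary one must instead use $\sum_{j=0}^m r_i^j=\Theta(m+1)$ and then sum the resulting polynomial-times-geometric weights over $t$. This is the variance counterpart of the delicate case analysis already carried out for the bias decay rate $b_i$ in Proposition~\ref{prop:exponential}, and I would organize it in the same way, splitting the inner sum at $r_i=1$ and using $\alpha^{N-1-t}\le1$ together with $(1-\delta\lambda_i)^{j}\le1$ to obtain the matching upper bound in each regime (the loose constants being absorbed into the $(1-\psi\delta\tr(\Hb))^{-1}$ factor).
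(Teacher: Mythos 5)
Your overall architecture matches the paper's: you telescope the covariance of $\obmeta_N^{\mathrm{var}}$ into per-step increments (this is exactly Lemma~\ref{lemma:telescope} with $\Cb_0=\zero$), bound the increment by $\delta^2(\psi\tr(\Hb\Cb_{t-1})+\sigma^2)\Hb$ via Assumption~\ref{assumption:fourth} and \eqref{eq:Sigma_bound}, control $\tr(\Hb\Cb_t)$ uniformly in $t$, and reduce to a per-eigenvalue scalar sum. Your stationary-covariance argument for $\tr(\Hb\Cb_t)$ is a legitimate variant of the paper's Lemma~\ref{lemma:Ct_bound} (the paper bounds $\Cb_t$ by recursing and extending $\sum_k\tilde\cB^k\circ\Hb$ to infinity, obtaining $\delta\sigma^2\tr(\Hb)/(1-\psi\delta\tr(\Hb))$; your pairing of the fixed-point equation with $\Ib$ gives the slightly tighter $/(2-\psi\delta\tr(\Hb))$, and both yield the prefactor $\sigma^2/(1-\psi\delta\tr(\Hb))$), modulo the routine verification that $\Cb_\infty=\sum_{k\ge0}\cB^k\circ(\delta^2\bSigma)$ converges under $\delta<1/(\psi\tr(\Hb))$.

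The genuine gap is in your final step. The lemma asserts the bound with constant exactly $1$ in front of $\min\{1-\alpha,\delta\lambda_i,N\delta^2\lambda_i^2\}$, and your plan to handle the phase transition at $r_i=(1-\delta\lambda_i)/\alpha\approx1$ by a case analysis with ``loose constants being absorbed into the $(1-\psi\delta\tr(\Hb))^{-1}$ factor'' cannot work: as $\delta\to0$ that factor tends to $1$, so there is no slack to absorb a constant such as $2$ or $3$. The paper avoids the case analysis entirely. For the $\min\{1-\alpha,\delta\lambda_i\}$ part it extends the outer sum over $t$ to infinity and evaluates the double geometric sum in closed form,
\begin{align*}
(1-\alpha)^2(\delta\lambda_i)^2\sum_{t=0}^\infty\bigg(\sum_{k=0}^{t}\alpha^{t-k}(1-\delta\lambda_i)^k\bigg)^2=\frac{(1-\alpha)\delta\lambda_i}{1-\alpha+\alpha\delta\lambda_i}\cdot\frac{1+\alpha-\alpha\delta\lambda_i}{(1+\alpha)(2-\delta\lambda_i)},
\end{align*}
in which the apparent singularity at $\alpha=1-\delta\lambda_i$ cancels; since $1-\alpha+\alpha\delta\lambda_i\ge\max\{1-\alpha,\delta\lambda_i\}$ and the second factor is at most $1$, this is $\le\min\{1-\alpha,\delta\lambda_i\}$ uniformly in $i$, with no regime splitting. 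The $N\delta^2\lambda_i^2$ part follows trivially from $1-\delta\lambda_i\le1$ and $1-\alpha^{N-1-t}\le1$. You should replace your anticipated case analysis near $r_i=1$ with this closed-form evaluation (or an equally sharp uniform bound); otherwise you will only prove the lemma up to an unabsorbable constant factor.
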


\begin{lemma}\label{lemma:bias_upper_bound}
Under Assumption~\ref{assumption:fourth}, the bias error satisfies
\begin{align*}
\mathrm{bias}&\le\frac{\psi(\|\wb_0-\wb_*\|_{\Ib_{0:k^\dagger}}^2+N\delta\|\wb_0-\wb_*\|_{\Hb_{k^\dagger:\infty}}^2)}{\delta(1-\psi\delta\tr(\Hb))}\bigg(k^*(1-\alpha)^2+\delta^2\sum_{i>k^*}\lambda_i^2\bigg)\\
&\qquad+\sum_{i=1}^n(\wb_0-\wb_*)_i^2\lambda_i\bigg[\frac{(\delta\lambda_i)\alpha^N-(1-\alpha)(1-\delta\lambda_i)^N}{\delta\lambda_i-(1-\alpha)}\bigg]^2.
\end{align*}
\end{lemma}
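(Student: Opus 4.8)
The plan is to specialize the covariance reformulation \eqref{eq:cov_ver2} to EMA (where $\beta_t=\alpha^{N-t}$, so $\beta_t-\beta_{t+1}=-(1-\alpha)\alpha^{N-1-t}$) and then take the inner product with $\Hb$. Because $\Hb$, $\Ib-\delta\Hb$, and every weighting operator appearing in \eqref{eq:cov_ver2} are simultaneously diagonal, the whole expression collapses to scalar sums over the eigen-index $i$, and the two terms of \eqref{eq:cov_ver2} will match the two terms of the lemma: the leading term (built from $\Bb_0$) produces the effective bias $\sum_i\lambda_i(\wb_0-\wb_*)_i^2 b_i^2$, while the fluctuation term (built from the residuals $\Bb_t-\tilde\cB\circ\Bb_{t-1}$) produces the first summand.

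For the leading term I would use that $\bmeta_0^{\mathrm{bias}}=\wb_0-\wb_*$ is deterministic, so $\Bb_0=(\wb_0-\wb_*)(\wb_0-\wb_*)^\top$ is rank one and this term reduces to the contribution of the deterministic gradient-descent trajectory $\EE[\bmeta_t^{\mathrm{bias}}]=(\Ib-\delta\Hb)^t(\wb_0-\wb_*)$. Its $i$-th coordinate carries the scalar factor $\alpha^N+(1-\alpha)\sum_{t=0}^{N-1}\alpha^{N-1-t}(1-\delta\lambda_i)^t$, and summing this geometric series (ratio $(1-\delta\lambda_i)/\alpha$) collapses it to exactly $b_i$; since the factor enters squared, taking $\langle\Hb,\cdot\rangle$ yields $\sum_i\lambda_i(\wb_0-\wb_*)_i^2 b_i^2$. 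The only subtlety is the removable singularity at $1-\delta\lambda_i=\alpha$, handled by continuity exactly as in Proposition~\ref{prop:exponential}.

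For the fluctuation term the key identity is $\Bb_t-\tilde\cB\circ\Bb_{t-1}=(\cB-\tilde\cB)\circ\Bb_{t-1}=\delta^2(\cM-\tilde\cM)\circ\Bb_{t-1}$ (via \eqref{eq:iterate_Bt} and the operator definitions), which by Assumption~\ref{assumption:fourth}, after discarding the PSD term $\Hb\Bb_{t-1}\Hb$, is bounded in the PSD order by $\delta^2\psi\tr(\Hb\Bb_{t-1})\Hb$. Substituting and diagonalizing, the fluctuation term is at most $\delta^2\psi\sum_{t=1}^{N-1}\tr(\Hb\Bb_{t-1})\sum_i\lambda_i^2 h_{t,i}^2$, where $h_{t,i}=\sum_{k=t}^{N-1}(\beta_k-\beta_{k+1})(1-\delta\lambda_i)^{k-t}$. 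Two elementary geometric-series estimates give a bound on $h_{t,i}$ uniform in $t$: bounding the $\alpha$-powers by one yields $|h_{t,i}|\le(1-\alpha)/(\delta\lambda_i)$, and bounding the $(1-\delta\lambda_i)$-powers by one yields $|h_{t,i}|\le1$, so $h_{t,i}^2\le(\min\{(1-\alpha)/(\delta\lambda_i),1\})^2$. This decouples the two sums; by \eqref{eq:effective_var_1a}, $\delta^2\sum_i\lambda_i^2(\min\{(1-\alpha)/(\delta\lambda_i),1\})^2=\sum_i(\min\{1-\alpha,\delta\lambda_i\})^2=k^*(1-\alpha)^2+\delta^2\sum_{i>k^*}\lambda_i^2$, leaving only a factor $\psi\sum_{t}\tr(\Hb\Bb_{t-1})$ to control.

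The remaining and technically heaviest ingredient---which I expect to be the main obstacle---is the trace-accumulation bound $\sum_{t=0}^{N-1}\tr(\Hb\Bb_t)\le(\|\wb_0-\wb_*\|_{\Ib_{0:k^\dagger}}^2+N\delta\|\wb_0-\wb_*\|_{\Hb_{k^\dagger:\infty}}^2)/(\delta(1-\psi\delta\tr(\Hb)))$. In the noiseless gradient-descent limit this is a clean geometric-series computation giving $\delta^{-1}\sum_i(\wb_0-\wb_*)_i^2\min\{1,N\delta\lambda_i\}$, matching \eqref{eq:effective_var_1b} once split at the cutoff $k^\dagger$; but for SGD the fourth-moment corrections $\delta^2(\cM-\tilde\cM)\circ\Bb_{t-1}$ compound across iterations, and controlling this compounding is precisely where $\delta<1/(\psi\tr(\Hb))$ is needed and where the inflation factor $(1-\psi\delta\tr(\Hb))^{-1}$ arises. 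I would establish it through the operator estimates of Appendix~\ref{section:linear_operator_property} (a PSD upper bound on $\sum_t\cB^t\circ\Hb$ dominated by $\tilde\cB$ plus a summable fourth-moment remainder), and then combine the three pieces to reach the stated bound.
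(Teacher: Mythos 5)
Your proposal follows essentially the same route as the paper: specialize the reformulation \eqref{eq:cov_ver2} to EMA, read off the leading $\Bb_0$ term as $\sum_i\lambda_i(\wb_0-\wb_*)_i^2b_i^2$, bound the fluctuation term via $(\cB-\tilde\cB)\circ\Bb_{t-1}\preceq\psi\delta^2\tr(\Hb\Bb_{t-1})\Hb$ with the same two geometric-series estimates giving $\min\{(1-\alpha)^2,\delta^2\lambda_i^2\}$, and close with the trace-accumulation bound, which is exactly the paper's Lemma~\ref{lemma:St_bound} proved by the self-bounding recursion you describe. The argument is correct and the piece you flag as the main obstacle is indeed where the paper expends its technical effort.
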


\subsection{Proof of Theorem~\ref{theorem:EMA_excess_risk_lower}}\label{subsection:proof_EMA_excess_risk_lower}

The lower bound can be proved using the bias-variance decomposition similar to proof of the upper bound.

\begin{lemma}\label{lemma:bias_variance_decomp_lower}
Under Assumption~\ref{assumption:noise_lower}, the excess risk can be decomposed as
\begin{align*}
\EE[L(\owb_N)]-L(\wb_*)=\frac12(\mathrm{bias}+\mathrm{var}).
\end{align*}
\end{lemma}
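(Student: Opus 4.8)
The plan is to reduce the excess risk to a single quadratic form in $\obmeta_N\coloneqq\owb_N-\wb_*$ and then split that form \emph{exactly} into its bias and variance parts, with the exactness being where Assumption~\ref{assumption:noise_lower} enters. First I would write, for a fresh test sample $(\xb,y)$ and the random output $\owb_N$, the identity $y-\langle\owb_N,\xb\rangle=\xi-\langle\obmeta_N,\xb\rangle$ with $\xi=y-\langle\wb_*,\xb\rangle$, and expand the squared loss. The cross term $\EE_{(\xb,y)}[\xi\langle\obmeta_N,\xb\rangle]=\langle\obmeta_N,\EE[\xi\xb]\rangle$ vanishes by the first-order optimality condition \eqref{eq:noise_zero_mean}, so that, after averaging over both the test sample and the training randomness,
\begin{align*}
\EE[L(\owb_N)]-L(\wb_*)=\tfrac12\langle\Hb,\EE[\obmeta_N\otimes\obmeta_N]\rangle.
\end{align*}
This step uses only optimality and would hold under either noise condition.

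Next I would record the linear splitting $\bmeta_t=\bmeta_t^{\mathrm{bias}}+\bmeta_t^{\mathrm{var}}$, which follows by induction from the shared recursion $\bmeta_t=(\Ib-\delta\xb_t\xb_t^\top)\bmeta_{t-1}+\delta\xi_t\xb_t$ together with the defining recursions of the bias and variance iterates. Since $\obmeta_N$, $\obmeta_N^{\mathrm{bias}}$, and $\obmeta_N^{\mathrm{var}}$ are the same deterministic weighted sum \eqref{eq:weighted_sum} of their respective iterates, linearity gives $\obmeta_N=\obmeta_N^{\mathrm{bias}}+\obmeta_N^{\mathrm{var}}$. Expanding the outer product then produces two diagonal terms equal, after pairing with $\Hb$, to $\mathrm{bias}$ and $\mathrm{var}$ as defined in \eqref{eq:def_bias_var}, plus the two cross terms $\EE[\obmeta_N^{\mathrm{bias}}\otimes\obmeta_N^{\mathrm{var}}]$ and its transpose.

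The crux is to show these cross terms vanish exactly. Because the weights in \eqref{eq:weighted_sum} are deterministic, it suffices to prove $\EE[\bmeta_t^{\mathrm{bias}}\otimes\bmeta_s^{\mathrm{var}}]=\zero$ for all $t,s$. Unrolling the variance recursion gives $\bmeta_s^{\mathrm{var}}=\delta\sum_{k=1}^s\big(\prod_{j=k+1}^s(\Ib-\delta\xb_j\xb_j^\top)\big)\xi_k\xb_k$, while $\bmeta_t^{\mathrm{bias}}=\prod_{j=1}^t(\Ib-\delta\xb_j\xb_j^\top)\bmeta_0$ is a function of the feature vectors alone. For each $k$, after conditioning on all feature vectors $\{\xb_j\}$ the corresponding summand is linear in $\xi_k$; under Assumption~\ref{assumption:noise_lower} the noise $\xi_k$ is independent of the features with mean zero, so the conditional expectation of each summand is $\zero$, and the tower rule yields $\EE[\bmeta_t^{\mathrm{bias}}\otimes\bmeta_s^{\mathrm{var}}]=\zero$. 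This is precisely the place that requires the strong label-noise condition: under the weak condition of Assumption~\ref{assumption:noise} the conditional mean of $\xi_k$ need not vanish, which is why Lemma~\ref{lemma:bias_variance_decomp} only yields an inequality, losing a factor of two through the PSD bound $\obmeta_N\otimes\obmeta_N\preceq 2\,\obmeta_N^{\mathrm{bias}}\otimes\obmeta_N^{\mathrm{bias}}+2\,\obmeta_N^{\mathrm{var}}\otimes\obmeta_N^{\mathrm{var}}$. Combining the three steps gives $\langle\Hb,\EE[\obmeta_N\otimes\obmeta_N]\rangle=\mathrm{bias}+\mathrm{var}$, and the claimed $\tfrac12(\mathrm{bias}+\mathrm{var})$ follows. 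The only genuine obstacle is justifying this exact cancellation of the cross terms; everything else is bookkeeping with the linear recursions.
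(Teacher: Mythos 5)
Your proposal is correct and follows essentially the same route as the paper: reduce the excess risk to $\tfrac12\langle\Hb,\EE[\obmeta_N\otimes\obmeta_N]\rangle$ via the optimality condition, split $\obmeta_N=\obmeta_N^{\mathrm{bias}}+\obmeta_N^{\mathrm{var}}$ by the linear recursion, and kill the cross term by unrolling $\bmeta_s^{\mathrm{var}}$ and using that $\xi_k$ is zero-mean and independent of the features under Assumption~\ref{assumption:noise_lower}. This matches the paper's proof, which invokes Lemma~\ref{lemma:excess_risk_equivalence}, Lemma~\ref{lemma:vector_decomp}, and the identical cross-term cancellation.
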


\begin{lemma}\label{lemma:variance_lower_bound}
Assume that the hyperparameters satisfy $\delta\le1/\lambda_i$, $N\ge2$ and $\alpha^{N-1}\le1/N$. Then the variance error satisfies
\begin{align*}
\mathrm{var}\ge\sigma^2\bigg[\frac{3\alpha^2(1-\alpha)k^*}{16}+\frac{\delta}{100}\sum_{i=k^*+1}^{k^\dagger}\lambda_i+\frac{N\delta^2}{180}\sum_{i>k^\dagger}\lambda_i^2\bigg].
\end{align*}
\end{lemma}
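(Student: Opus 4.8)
The plan is to diagonalize the variance error and reduce it to a one-dimensional double sum in each eigen-coordinate, then estimate that sum separately in the three eigenvalue bands determined by $k^*$ and $k^\dagger$. Since $\Hb$ is diagonal, $\mathrm{var}=\langle\Hb,\EE[\obmeta_N^{\mathrm{var}}\otimes\obmeta_N^{\mathrm{var}}]\rangle=\sum_i\lambda_i\,\EE[(\obmeta_N^{\mathrm{var}})_i^2]$, and because $\bmeta_0^{\mathrm{var}}=\zero$ we have $(\obmeta_N^{\mathrm{var}})_i=\sum_{t=0}^{N-1}w_t(\bmeta_t^{\mathrm{var}})_i$ with EMA weights $w_t\coloneqq(1-\alpha)\alpha^{N-1-t}$. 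Using the cross-moment identity $\EE[\bmeta_t^{\mathrm{var}}\otimes\bmeta_k^{\mathrm{var}}]=\Cb_{\min\{t,k\}}(\Ib-\delta\Hb)^{|k-t|}$ and reading off the $i$-th diagonal entry gives $\EE[(\obmeta_N^{\mathrm{var}})_i^2]=\sum_{t,k}w_tw_k(\Cb_{\min\{t,k\}})_{ii}(1-\delta\lambda_i)^{|k-t|}$. The hypothesis $\delta\le1/\lambda_1$ forces $1-\delta\lambda_i\in[0,1]$, so every summand is nonnegative and it suffices to lower bound the diagonal entries $(\Cb_t)_{ii}$.

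First I would establish a clean per-coordinate lower bound on $\Cb_t$. From $\Cb_t=\cB\circ\Cb_{t-1}+\delta^2\bSigma$ with $\bSigma=\sigma^2\Hb$ (Assumption~\ref{assumption:noise_lower}) together with the always-valid operator inequality $\cM\circ\Ab\succeq\Hb\Ab\Hb$ — which holds because $\cM\circ\Ab-\Hb\Ab\Hb=\EE[(\xb\xb^\top-\Hb)\Ab(\xb\xb^\top-\Hb)]\succeq\zero$ for PSD $\Ab$ — I get $\Cb_t\succeq(\Ib-\delta\Hb)\Cb_{t-1}(\Ib-\delta\Hb)+\delta^2\sigma^2\Hb$. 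Unrolling this recursion from $\Cb_0=\zero$ and reading off the diagonal yields $(\Cb_t)_{ii}\ge\delta^2\sigma^2\lambda_i\sum_{s=0}^{t-1}(1-\delta\lambda_i)^{2s}\ge\frac{\delta\sigma^2}{2}\bigl(1-(1-\delta\lambda_i)^{2t}\bigr)$, using $2-\delta\lambda_i\le2$. Writing $\rho\coloneqq1-\delta\lambda_i$, this collapses the whole problem to the scalar estimate
\[
V_i\coloneqq\lambda_i\,\EE[(\obmeta_N^{\mathrm{var}})_i^2]\;\ge\;\frac{\delta\sigma^2\lambda_i}{2}\sum_{t,k=0}^{N-1}w_tw_k\bigl(1-\rho^{2\min\{t,k\}}\bigr)\rho^{|k-t|}.
\]

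I would then bound the scalar sum in each band and add up. For small eigenvalues $i>k^\dagger$ ($\rho>1-1/N$), the factor $1-\rho^{2\min\{t,k\}}$ is $\Theta(\delta\lambda_i\min\{t,k\})$, so $(\Cb_t)_{ii}$ grows linearly in $t$; restricting the double sum to $t,k\ge N/2$, where $\min\{t,k\}\ge N/2$ and $\alpha^{N-1}\le1/N$ guarantees the captured weight is $\Theta(1)$, gives $V_i=\Omega(\sigma^2N\delta^2\lambda_i^2)$. For medium eigenvalues $k^*<i\le k^\dagger$ ($\alpha<\rho\le1-1/N$) the coordinate is near its stationary variance $\tfrac{\delta\sigma^2}{2}$, and the geometric double sum satisfies $\sum_{t,k}w_tw_k\rho^{|k-t|}=\Theta\bigl(\frac{(1-\alpha)(1+\alpha\rho)}{(1+\alpha)(1-\alpha\rho)}\bigr)=\Theta(1)$, the last equality using $\rho>\alpha$ to keep $1-\alpha\rho$ of order $1-\alpha$; hence $V_i=\Omega(\sigma^2\delta\lambda_i)$. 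For large eigenvalues $i\le k^*$ ($\rho\le\alpha$) the product $\delta\lambda_i\cdot\sum_{t,k}w_tw_k\rho^{|k-t|}$ is $\Theta(1-\alpha)$ uniformly over $\rho\in[0,\alpha]$, giving $V_i=\Omega(\sigma^2(1-\alpha))$, which dominates $\sigma^2\alpha^2(1-\alpha)$. Summing these three bounds over the respective ranges of $i$ and absorbing the slack into the explicit constants $3/16$, $1/100$, $1/180$ yields the claim.

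The main obstacle is the medium band: its diagonal contribution $\sum_t w_t^2=\Theta(1-\alpha)$ is a full factor $(1-\alpha)$ short of the target $\Theta(1)$, so the correct order must be extracted from the off-diagonal correlations $\rho^{|k-t|}$, and it is precisely the defining inequality $\rho>\alpha$ of $k^*$ (i.e.\ $\lambda_i<(1-\alpha)/\delta$) that prevents $1-\alpha\rho$ from collapsing below order $1-\alpha$. The remaining work is bookkeeping: turning the infinite-series evaluations into rigorous finite-$N$ bounds, discarding the earliest iterates (where $\Cb_t$ has not yet saturated) at negligible cost via $\alpha^{N-1}\le1/N$, controlling the $1-\rho^{2\min\{t,k\}}$ factor by restricting to recent indices, and tracking the numerical constants. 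This argument mirrors the tail-averaging lower bound of \citet{zou2021benign} under the correspondence $(1-\alpha)(N-s)=1$, with the EMA weights $w_t$ replacing the uniform tail weights.
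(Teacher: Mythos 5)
Your proposal is correct in substance and, after a Fubini-type reorganization, computes exactly the same lower bound as the paper. The paper's route goes through the telescope reformulation (Lemma~\ref{lemma:telescope}), which writes $\EE[\obmeta_N^{\mathrm{var}}\otimes\obmeta_N^{\mathrm{var}}]$ as a sum over injection times $t$ of $(\cB-\tilde\cB)\circ\Cb_t+\delta^2\bSigma$ sandwiched between deterministic propagators, and then simply drops the PSD term $(\cB-\tilde\cB)\circ\Cb_t$; you instead expand the EMA covariance as a double sum over iterate pairs $(t,k)$ via $\EE[\bmeta_t^{\mathrm{var}}\otimes\bmeta_k^{\mathrm{var}}]=\Cb_{\min\{t,k\}}(\Ib-\delta\Hb)^{|t-k|}$ and lower-bound $\Cb_t$ by unrolling $\Cb_t\succeq\tilde\cB\circ\Cb_{t-1}+\delta^2\sigma^2\Hb$. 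Interchanging the order of summation (sum over the injection index $s<\min\{t,k\}$ first) turns your double sum into the paper's $\sum_s\bigl(\sum_{t>s}w_t\rho^{t-s-1}\bigr)^2$, so the two discard the same information and the three-band case analysis that follows (using $\rho\le\alpha$, $\alpha<\rho\le1-1/N$, $\rho>1-1/N$, and $\alpha^{N-1}\le1/N$ to control the unsaturated early iterates) is identical in both arguments. What your version buys is avoiding the telescope lemma entirely at the cost of an extra object ($\Cb_{\min\{t,k\}}$) inside the sum. The one concrete caveat: the closed-form simplification $(\Cb_t)_{ii}\ge\tfrac{\delta\sigma^2}{2}(1-\rho^{2t})$ gives away a factor $(2-\delta\lambda_i)/2\in[1/2,1]$ relative to the exact geometric sum the paper retains, and your band estimates are stated only up to $\Theta(\cdot)$; since the lemma asserts the explicit constants $3\alpha^2/16$, $1/100$, $1/180$, you would either need to keep the exact $\sum_{s<t}\rho^{2s}$ (which costs nothing) and redo the paper's exact geometric evaluations, or settle for the same bound with slightly worse numerical constants.
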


\begin{lemma}\label{lemma:bias_lower_bound}
Under the same assumptions as Lemma~\ref{lemma:variance_lower_bound}, the bias error satisfies
\begin{align*}
\mathrm{bias}&\ge\beta e^{-2}\|\bmeta_0\|_{\Hb_{k^\dagger:\infty}}^2\bigg[\frac{3\alpha^2(1-\alpha)k^*}{16}+\frac{\delta}{100}\sum_{i=k^*+1}^{k^\dagger}\lambda_i+\frac{N\delta^2}{180}\sum_{i>k^\dagger}\lambda_i^2\bigg]\\
&\qquad+\sum_{i=1}^d\eta_{0, i}^2\lambda_i\bigg[\frac{(\delta\lambda_i)\alpha^N-(1-\alpha)(1-\delta\lambda_i)^N}{\delta\lambda_i-(1-\alpha)}\bigg]^2.
\end{align*}
\end{lemma}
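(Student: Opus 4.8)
The plan is to begin from the reformulation \eqref{eq:cov_ver2} of $\EE[\obmeta_N^{\mathrm{bias}}\otimes\obmeta_N^{\mathrm{bias}}]$, which already separates the two contributions appearing in the statement. Writing $M_0\coloneqq\beta_0\Ib+\sum_{k=0}^{N-1}(\beta_{k+1}-\beta_k)(\Ib-\delta\Hb)^k$ and $M_t\coloneqq\sum_{k=t}^{N-1}(\beta_{k+1}-\beta_k)(\Ib-\delta\Hb)^{k-t}$ for the symmetric matrices in \eqref{eq:cov_ver2}, pairing with $\Hb$ splits $\mathrm{bias}=\mathrm{bias}_1+\mathrm{bias}_2$, where $\mathrm{bias}_1=\langle\Hb,M_0\Bb_0 M_0\rangle$ and $\mathrm{bias}_2=\sum_{t=1}^{N-1}\langle\Hb,M_t(\Bb_t-\tilde\cB\circ\Bb_{t-1})M_t\rangle$. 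Both are nonnegative, so it suffices to lower bound them separately; $\mathrm{bias}_1$ will yield the $b_i^2$ term and $\mathrm{bias}_2$ the term carrying $\beta e^{-2}\|\bmeta_0\|_{\Hb_{k^\dagger:\infty}}^2$.

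The term $\mathrm{bias}_1$ is handled by direct evaluation. Since $\bmeta_0^{\mathrm{bias}}=\bmeta_0$ is deterministic, $\Bb_0=\bmeta_0\bmeta_0^\top$, and everything is diagonal in the eigenbasis of $\Hb$. Substituting the EMA weights $\beta_0=\alpha^N$ and $\beta_{k+1}-\beta_k=(1-\alpha)\alpha^{N-1-k}$ into $M_0$ and summing the geometric series shows that the $i$-th diagonal entry of $M_0$ equals $b_i$, so $\mathrm{bias}_1=\sum_i\lambda_i\eta_{0,i}^2 b_i^2$, matching the second term of the statement exactly.

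The heart of the argument is $\mathrm{bias}_2$, where the key observation is that it has exactly the same shape as the variance error, with the noise injection $\delta^2\bSigma=\delta^2\sigma^2\Hb$ of the variance recursion replaced by the increment $\Bb_t-\tilde\cB\circ\Bb_{t-1}$. First, from $\Bb_t=\cB\circ\Bb_{t-1}$ and the identity $(\cB-\tilde\cB)\circ\Ab=\delta^2(\cM\circ\Ab-\Hb\Ab\Hb)$, Assumption~\ref{assumption:fourth_lower} gives $\Bb_t-\tilde\cB\circ\Bb_{t-1}\succeq\delta^2\beta\tr(\Hb\Bb_{t-1})\Hb$. Second --- and this is where I expect the main difficulty --- I need a lower bound on $\tr(\Hb\Bb_{t-1})$ that is uniform over $t\le N$. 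The same fourth-moment inequality yields $\cB\circ\Ab\succeq\tilde\cB\circ\Ab$ for PSD $\Ab$, and since $\tilde\cB$ preserves the PSD order I can iterate to obtain $\Bb_{t-1}\succeq\tilde\cB^{t-1}\circ\Bb_0=(\Ib-\delta\Hb)^{t-1}\bmeta_0\bmeta_0^\top(\Ib-\delta\Hb)^{t-1}$, hence $\tr(\Hb\Bb_{t-1})\ge\sum_i\lambda_i\eta_{0,i}^2(1-\delta\lambda_i)^{2(t-1)}$. Restricting this sum to the tail $i>k^\dagger$, where $\delta\lambda_i<1/N$, and using $\log(1-x)\ge -x/(1-x)$ shows $(1-\delta\lambda_i)^{2(t-1)}\ge e^{-2}$ uniformly for $t\le N$, so $\tr(\Hb\Bb_{t-1})\ge e^{-2}\|\bmeta_0\|_{\Hb_{k^\dagger:\infty}}^2$. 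The delicate point is that only on this tail subspace does the contraction factor stay bounded below by a constant across all $N$ iterations; on the leading eigen-subspaces it would decay, which is precisely why $k^\dagger$ appears.

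Combining these two estimates gives $\Bb_t-\tilde\cB\circ\Bb_{t-1}\succeq\delta^2\beta e^{-2}\|\bmeta_0\|_{\Hb_{k^\dagger:\infty}}^2\Hb$, whence $\mathrm{bias}_2\ge\beta e^{-2}\|\bmeta_0\|_{\Hb_{k^\dagger:\infty}}^2\sum_{t=1}^{N-1}\delta^2\langle\Hb,M_t\Hb M_t\rangle$. The remaining sum is identical to the quantity bounded in the proof of Lemma~\ref{lemma:variance_lower_bound}: there the variance reformulation has the same brackets $M_t$, its $\Cb_0$ term vanishes, and $\Cb_t-\tilde\cB\circ\Cb_{t-1}\succeq\delta^2\sigma^2\Hb$ reduces the variance to $\sigma^2\sum_{t=1}^{N-1}\delta^2\langle\Hb,M_t\Hb M_t\rangle$ before the spectral bound is applied. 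I would therefore import that computation directly to get $\sum_{t=1}^{N-1}\delta^2\langle\Hb,M_t\Hb M_t\rangle\ge\frac{3\alpha^2(1-\alpha)k^*}{16}+\frac{\delta}{100}\sum_{i=k^*+1}^{k^\dagger}\lambda_i+\frac{N\delta^2}{180}\sum_{i>k^\dagger}\lambda_i^2$, producing the first term of the statement and completing the proof.
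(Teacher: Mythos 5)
Your proposal is correct and follows essentially the same route as the paper: it starts from the telescoped reformulation, evaluates the $\Bb_0$ term exactly to get the $b_i^2$ sum, applies Assumption~\ref{assumption:fourth_lower} to extract $\beta\delta^2\tr(\Hb\Bb_{t-1})\Hb$, lower bounds $\tr(\Hb\Bb_{t-1})$ via $\Bb_{t-1}\succeq\tilde\cB^{t-1}\circ\Bb_0$ restricted to the tail subspace $i>k^\dagger$ to obtain the uniform factor $e^{-2}\|\bmeta_0\|_{\Hb_{k^\dagger:\infty}}^2$, and imports the $J_i$ bounds from the variance lower bound. This matches the paper's proof step for step.
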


The proofs of Lemma~\ref{lemma:bias_variance_decomp} and Lemma~\ref{lemma:bias_variance_decomp_lower} are given in Appendix~\ref{subsection:bias_variance_decomp}. The proofs of Lemma~\ref{lemma:variance_upper_bound} and Lemma~\ref{lemma:variance_lower_bound} are given in Appendix~\ref{subsection:variance}. The proofs of Lemma~\ref{lemma:bias_upper_bound} and Lemma~\ref{lemma:bias_lower_bound} are given in Appendix~\ref{subsection:bias}.

\subsection{Proof of Theorem~\ref{theorem:minibatch}}\label{subsection:proof_minibatch}

In this subsection, we modify the proof of Theorem~\ref{theorem:EMA_excess_risk} to derive the excess risk upper bound for mini-batch SGD.
\begin{proof}[Proof of Theorem~\ref{theorem:minibatch}]
Define the residual vector of mini-batch SGD in the same way as SGD. We then define the bias and variance residual vectors as
\begin{gather*}
\bmeta_0^{\mathrm{bias}}=\bmeta_0,\qquad\bmeta_t^{\mathrm{bias}}=\bigg(\Ib-\frac\delta B\sum_{i=1}^B\xb_{t, i}\xb_{t, i}^\top\bigg)\bmeta_{t-1}^{\mathrm{bias}};\\
\bmeta_0^{\mathrm{var}}=\zero,\qquad\bmeta_t^{\mathrm{var}}=\bigg(\Ib-\frac\delta B\sum_{i=1}^B\xb_{t, i}\xb_{t, i}^\top\bigg)\bmeta_{t-1}^{\mathrm{var}}+\frac\delta B\sum_{i=1}^B\xi_{t, i}\xb_{t, i}.
\end{gather*}
We define the exponential moving average of the bias and variance residual vectors as well as the second moment matrices $\Bb_t$ and $\Cb_t$ in the same way as SGD. We then have the bias-variance decomposition lemma similar to Lemma~\ref{lemma:bias_variance_decomp}.

We define all linear matrix operators in the same way as SGD except for $\cB$, which is defined as
\begin{align*}
\cB\coloneqq\EE\bigg[\bigg(\Ib-\frac\delta B\sum_{i=1}^B\xb_{t, i}\xb_{t, i}^\top\bigg)\otimes\bigg(\Ib-\frac\delta B\sum_{i=1}^B\xb_{t, i}\xb_{t, i}^\top\bigg)\bigg],
\end{align*}
then $\Bb_t$ and $\Cb_t$ satisfy the following recursive formulas:
\begin{align*}
\Bb_{t+1}=\cB\circ\Bb_t,\qquad\Cb_{t+1}=\cB\circ\Cb_t+\frac{\delta^2}{B}\bSigma.
\end{align*}
We also note that $\cB-\tilde\cB=\delta^2/B\cdot(\cM-\tilde\cM)$ is still a PSD operator, and for any PSD matrix $\Ab$, we have
\begin{align*}
(\cB-\tilde\cB)\circ\Ab\preceq\frac{\psi\delta^2}{B}\tr(\Hb\Ab)\Hb.
\end{align*}
Therefore, we can substitute the parameters in Theorem~\ref{theorem:EMA_excess_risk} as $\sigma^2\gets\sigma^2/B$ and $\psi\gets\psi/B$, and obtain the upper bound for the excess risk of mini-batch SGD.
\end{proof}

\section{Discussion about Decay Rate of Bias Error}
In this section, we study the term
\begin{align*}
b_i&=\alpha^N+(1-\alpha)\sum_{k=0}^{N-1}\alpha^{N-1-k}(1-\delta\lambda_i)^k\\
&=\frac{(\delta\lambda_i)\alpha^N-(1-\alpha)(1-\delta\lambda_i)^N}{\delta\lambda_i-(1-\alpha)}\\
&=(1-\delta\lambda_i)^N+(\delta\lambda_i)\sum_{k=0}^{N-1}\alpha^{N-1-k}(1-\delta\lambda_i)^k.
\end{align*}
To upper bound $b_i$, when $i\le k^*$, i.e., $1-\delta\lambda_i\le\alpha$, we have
\begin{align*}
b_i&=\frac{(\delta\lambda_i)\alpha^N-(1-\alpha)(1-\delta\lambda_i)^N}{\delta\lambda_i-(1-\alpha)}\le\frac{\delta\lambda_i}{\delta\lambda_i-(1-\alpha)}\alpha^N,
\end{align*}
where the inequality holds because $(1-\alpha)(1-\delta\lambda_i)^N\ge0$. We also have
\begin{align*}
b_i&=\alpha^N+(1-\alpha)\sum_{k=0}^{N-1}\alpha^{N-1-k}(1-\delta\lambda_i)^k\le\alpha^N+(1-\alpha)\sum_{k=0}^N\alpha^{N-1-k}\alpha^k=\alpha^N+N(1-\alpha)\alpha^{N-1},
\end{align*}
where the inequality holds because $1-\delta\lambda_i\le\alpha$.

When $i>k^*$, i.e., $1-\delta\lambda_i>\alpha$, we have
\begin{align*}
b_i&=(1-\delta\lambda_i)^N+(\delta\lambda_i)\sum_{k=0}^{N-1}\alpha^{N-1-k}\cdot(1-\delta\lambda_i)^k\\
&\le(1-\delta\lambda_i)^N+(\delta\lambda_i)\sum_{k=0}^{N-1}(1-\delta\lambda_i)^{N-1-k}\cdot(1-\delta\lambda_i)^k\\
&=(1-\delta\lambda_i)^N+N\delta\lambda_i(1-\delta\lambda_i)^{N-1},
\end{align*}
where the inequality holds because $\alpha\le1-\delta\lambda_i$. We also have
\begin{align*}
b_i&=\frac{(1-\alpha)(1-\delta\lambda_i)^N-(\delta\lambda_i)\alpha^N}{1-\alpha-\delta\lambda_i}\le\frac{1-\alpha}{1-\alpha-\delta\lambda_i}(1-\delta\lambda_i)^N,
\end{align*}
where the inequality holds because $(\delta\lambda_i)\alpha^N\ge0$.

To lower bound $b_i$, we consider the following cases:

\paragraph{Case 1.}
When $(1-\delta\lambda_i)/\alpha\le1-1/N$, we have
\begin{align*}
b_i&=\frac{(\delta\lambda_i)\alpha^N-(1-\alpha)(1-\delta\lambda_i)^N}{\delta\lambda_i-(1-\alpha)}\ge\frac{\delta\lambda_i(\alpha^N-(1-\delta\lambda_i)^N)}{\delta\lambda_i-(1-\alpha)}\\
&\ge\frac{(\delta\lambda_i)(1-(1-1/N)^N)}{\delta\lambda_i-(1-\alpha)}\alpha^N\ge\frac{(1-e^{-1})\delta\lambda_i}{\delta\lambda_i-(1-\alpha)}\alpha^N,
\end{align*}
where the first inequality holds because $1-\alpha\le\delta\lambda_i$, the second inequality holds because $1-\delta\lambda_i/\alpha\le1-1/N$, and the last inequality holds because $(1-1/N)^N\le1/e$.

\paragraph{Case 2.}
When $1-1/N<(1-\delta\lambda_i)/\alpha\le1$, we have
\begin{align*}
b_i&\ge\alpha^N+(1-\alpha)\sum_{k=0}^{N-1}\alpha^N(1-1/N)^k=\alpha^N+(1-\alpha)\alpha^{N-1}\cdot N(1-(1-1/N)^N)\\
&\ge\alpha^N+(1-e^{-1})(1-\alpha)N\alpha^{N-1},
\end{align*}
where the first inequality holds because $1-\delta\lambda_i\ge(1-1/N)\alpha$, and the second inequality holds because $(1-1/N)^N\le1/e$.

\paragraph{Case 3.}
When $1<(1-\delta\lambda_i)/\alpha\le N/(N-1)$, similar to Case 2, we have
\begin{align*}
b_i&\ge(1-\delta\lambda_i)^N(1-e^{-1})N\delta\lambda_i(1-\delta\lambda_i)^{N-1}.
\end{align*}

\paragraph{Case 4.}
When $(1-\delta\lambda_i)/\alpha>N/(N-1)$, similar to Case 1, we have
\begin{align*}
b_i&\ge\frac{(1-e^{-1})(1-\alpha)}{1-\alpha-\delta\lambda_i}(1-\delta\lambda_i)^N.
\end{align*}

\section{Proof of Lemmas in Appendix~\ref{section:proof_main}}\label{section:Proof_proof_EMA_excess_risk}

\subsection{Bias-Variance Decomposition}\label{subsection:bias_variance_decomp}

In this subsection, we prove Lemma~\ref{lemma:bias_variance_decomp} and Lemma~\ref{lemma:bias_variance_decomp_lower}. The proof is similar to \citet{zou2021benign}, and is presented here for completeness.
\begin{proof}[Proof of Lemma~\ref{lemma:bias_variance_decomp}]
By Lemma~\ref{lemma:excess_risk_equivalence}, the excess risk can be written as
\begin{align*}
\EE[L(\owb_N)]-L(\wb_*)&=\frac12\langle\Hb, \EE[\obmeta_N\otimes\obmeta_N]\rangle\\
&=\frac12\EE\Big[\langle\Hb, (\obmeta_N^{\mathrm{bias}}+\obmeta_N^{\mathrm{var}})\otimes(\obmeta_N^{\mathrm{bias}}+\obmeta_N^{\mathrm{var}})\rangle\Big]\\
&\le\frac12\EE\Big[\Hb, (\obmeta_N^{\mathrm{bias}}+\obmeta_N^{\mathrm{var}})\otimes(\obmeta_N^{\mathrm{bias}}+\obmeta_N^{\mathrm{var}})+\langle\Hb, (\obmeta_N^{\mathrm{bias}}-\obmeta_N^{\mathrm{var}})\otimes(\obmeta_N^{\mathrm{bias}}-\obmeta_N^{\mathrm{var}})\rangle\Big]\\
&=\langle\Hb, \EE[\obmeta_N^{\mathrm{bias}}\otimes\obmeta_N^{\mathrm{bias}}]\rangle+\langle\Hb, \obmeta_N^{\mathrm{var}}\otimes\obmeta_N^{\mathrm{var}}\rangle\\
&=\mathrm{bias}+\mathrm{var},
\end{align*}
where the second equality holds due to Lemma~\ref{lemma:vector_decomp}, and the inequality holds because a positive term is added.
\end{proof}

\begin{proof}[Proof of Lemma~\ref{lemma:bias_variance_decomp_lower}]
By Lemma~\ref{lemma:vector_decomp}, the excess risk can be written as
\begin{align*}
\EE[L(\owb_N)]-L(\wb_*)&=\frac12\EE\Big[\langle\Hb, (\obmeta_N^{\mathrm{bias}}+\obmeta_N^{\mathrm{var}})\otimes(\obmeta_N^{\mathrm{bias}}+\obmeta_N^{\mathrm{var}})\rangle\Big]\\
&=\frac12\langle\Hb, \EE[\obmeta_N^{\mathrm{bias}}\otimes\obmeta_N^{\mathrm{bias}}]\rangle+\frac12\langle\Hb, \EE[\obmeta_N^{\mathrm{var}}\otimes\obmeta_N^{\mathrm{var}}]\rangle+\langle\Hb, \EE[\obmeta_N^{\mathrm{var}}\otimes\obmeta_N^{\mathrm{bias}}]\rangle.
\end{align*}
It then suffices to show that $\EE[\obmeta_N^{\mathrm{var}}\otimes\obmeta_N^{\mathrm{bias}}]=\zero$, and it further suffices to prove that $\EE[\bmeta_t^{\mathrm{var}}\otimes\bmeta_s^{\mathrm{bias}}]=\zero$ for all $t$ and $s$. According to the recursive formulas of the residual vectors, we have
\begin{align*}
\bmeta_t^{\mathrm{var}}&=\delta\sum_{k=1}^t\prod_{l=k+1}^t(\Ib-\delta\xb_l\xb_l^\top)(\xi_k\xb_k)\\
\bmeta_s^{\mathrm{bias}}&=\prod_{j=1}^s(\Ib-\delta\xb_j\xb_j^\top)\bmeta_0.
\end{align*}
We then have
\begin{align*}
\EE[\bmeta_t^{\mathrm{var}}\otimes\bmeta_s^{\mathrm{bias}}]&=\delta\sum_{k=1}^t\EE\bigg[\bigg(\prod_{l=k+1}^t(\Ib-\delta\xb_l\xb_l^\top)(\xi_k\xb_k)\bigg)\otimes\bigg(\prod_{j=1}^s(\Ib-\delta\xb_j\xb_j^\top)\bmeta_0\bigg)\bigg]=\zero,
\end{align*}
where the second inequality holds because $\xi_k$ is zero-mean and independent of feature vectors (Assumption~\ref{assumption:noise_lower}).
\end{proof}

\subsection{Variance Bound}\label{subsection:variance}

We need the following lemma to prove Lemma~\ref{lemma:variance_upper_bound}.
\begin{lemma}\label{lemma:Ct_bound}
Suppose that $\delta\le1/(\psi\tr(\Hb))$. Then for any $t\ge0$, the inner product of $\Cb_t$ and $\Hb$ is upper bounded by
\begin{align*}
\tr(\Hb\Cb_t)\le\frac{\sigma^2\delta\tr(\Hb)}{1-\psi\delta\tr(\Hb)}.
\end{align*}
\end{lemma}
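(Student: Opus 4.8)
The statement to prove is Lemma~\ref{lemma:Ct_bound}, which gives a uniform-in-$t$ bound on $\tr(\Hb\Cb_t)$. The plan is to exploit the recursive formula for the variance covariance matrix, namely $\Cb_t=\cB\circ\Cb_{t-1}+\delta^2\bSigma$, and to track only the scalar quantity $\tr(\Hb\Cb_t)$ rather than the full matrix. First I would apply $\tr(\Hb\cdot)$ to both sides of the recursion to obtain $\tr(\Hb\Cb_t)=\tr(\Hb(\cB\circ\Cb_{t-1}))+\delta^2\tr(\Hb\bSigma)$. The two summands must be handled separately: the second is a fixed constant, while the first must be compared against $\tr(\Hb\Cb_{t-1})$ to set up a contraction.

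For the first summand, I would write $\cB=\tilde\cB+(\cB-\tilde\cB)$, where $\tilde\cB=(\Ib-\delta\Hb)\otimes(\Ib-\delta\Hb)$. The homogeneous part gives $\tr\big(\Hb((\Ib-\delta\Hb)\Cb_{t-1}(\Ib-\delta\Hb))\big)$; since $\Hb$ and $(\Ib-\delta\Hb)$ are simultaneously diagonalizable and $\zero\preceq\Ib-\delta\Hb\preceq\Ib$ under $\delta\le1/\|\Hb\|_2$ (which follows from $\delta\le1/(\psi\tr(\Hb))$ and $\|\Hb\|_2\le\psi\tr(\Hb)$), this term is bounded by $\tr(\Hb\Cb_{t-1})$, and in fact contracts. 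For the remainder, Assumption~\ref{assumption:fourth} gives $(\cB-\tilde\cB)\circ\Cb_{t-1}=\delta^2(\cM-\tilde\cM)\circ\Cb_{t-1}\preceq\psi\delta^2\tr(\Hb\Cb_{t-1})\Hb$ (using that $\tilde\cM\circ\Cb_{t-1}=\Hb\Cb_{t-1}\Hb\succeq\zero$), so that $\tr\big(\Hb((\cB-\tilde\cB)\circ\Cb_{t-1})\big)\le\psi\delta^2\tr(\Hb\Cb_{t-1})\tr(\Hb^2)\le\psi\delta^2\tr(\Hb)\tr(\Hb\Cb_{t-1})$, where the last step uses $\tr(\Hb^2)\le\tr(\Hb)^2$ crudely or, more carefully, $\|\Hb\|_2\le\tr(\Hb)$. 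Combining, I expect a recursion of the form $\tr(\Hb\Cb_t)\le(1-c\delta+\psi\delta^2\tr(\Hb))\tr(\Hb\Cb_{t-1})+\delta^2\tr(\Hb\bSigma)$ for some contraction rate; the crude version suffices is $\tr(\Hb\Cb_t)\le\psi\delta\tr(\Hb)\cdot\tr(\Hb\Cb_{t-1})+\delta^2\tr(\Hb\bSigma)$ if I absorb the homogeneous part loosely, but to land exactly on the stated bound I would keep the contraction factor as $\psi\delta\tr(\Hb)<1$.

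The remaining step is to unroll this linear recursion. With contraction factor $q<1$ and additive constant $c_0=\delta^2\tr(\Hb\bSigma)$, and using $\Cb_0=\zero$, I get $\tr(\Hb\Cb_t)\le c_0/(1-q)$ for all $t$. Bounding $\tr(\Hb\bSigma)\le\sigma^2\tr(\Hb^2)$ via $\bSigma\preceq\sigma^2\Hb$ from \eqref{eq:Sigma_bound} is too lossy; the correct route is to note that the factor $\tr(\Hb^2)$ appearing alongside $\bSigma$ should itself be tracked through the operator $\cM$ so that the final bound reads $\sigma^2\delta\tr(\Hb)/(1-\psi\delta\tr(\Hb))$. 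The main obstacle I anticipate is getting the constants to match exactly: the stated bound has numerator $\sigma^2\delta\tr(\Hb)$, not $\sigma^2\delta^2\tr(\Hb^2)$, which strongly suggests that one should bound $\tr(\Hb\cdot(\cM\circ\Cb_{t-1}))$ and $\tr(\Hb\bSigma)$ jointly through the fourth-moment operator rather than splitting $\cB$ naively, so that a single factor of $\tr(\Hb)$ (and one factor of $\delta$) survives. Reconciling this scaling — ensuring the additive noise term contributes $\sigma^2\delta\tr(\Hb)$ rather than $\sigma^2\delta^2\tr(\Hb^2)$ after summing the geometric series — is the delicate part, and I would handle it by applying $\langle\Hb,\cdot\rangle$ to the bound $\cM\circ\Cb_{t-1}\preceq\psi\tr(\Hb\Cb_{t-1})\Hb$ so the trace contraction comes out as $\psi\delta\tr(\Hb)$ with the leftover $\delta$ pairing with the noise source.
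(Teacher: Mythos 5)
There is a genuine gap: the reduction to a scalar linear recursion in $\tr(\Hb\Cb_t)$ does not work, and you have correctly sensed the symptom (the constants refuse to come out as $\sigma^2\delta\tr(\Hb)$) without identifying the cause. The homogeneous part of your recursion is $\tr\big(\Hb(\Ib-\delta\Hb)\Cb_{t-1}(\Ib-\delta\Hb)\big)=\sum_i\lambda_i(1-\delta\lambda_i)^2(\Cb_{t-1})_{ii}$, whose contraction factor relative to $\tr(\Hb\Cb_{t-1})$ is $\sup_i(1-\delta\lambda_i)^2$. In the overparameterized/infinite-dimensional setting the eigenvalues $\lambda_i$ accumulate at $0$, so this supremum is $1$: there is no uniform spectral gap, the claimed recursion $\tr(\Hb\Cb_t)\le\psi\delta\tr(\Hb)\cdot\tr(\Hb\Cb_{t-1})+\delta^2\tr(\Hb\bSigma)$ is false (the right-hand side is missing the dominant term $\approx\tr(\Hb\Cb_{t-1})$ itself), and the geometric-series step $c_0/(1-q)$ has nothing to sum. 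Your proposed repair --- applying $\langle\Hb,\cdot\rangle$ to $\cM\circ\Cb_{t-1}\preceq\psi\tr(\Hb\Cb_{t-1})\Hb$ so that ``the trace contraction comes out as $\psi\delta\tr(\Hb)$'' --- does not produce that quantity either; it produces $\psi\tr(\Hb\Cb_{t-1})\tr(\Hb^2)$, and no single-step trace manipulation converts $\tr(\Hb^2)$ into $\tr(\Hb)/\delta$.

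The missing idea is to unroll the recursion \emph{at the matrix level before taking the trace}, so that the damping is exploited direction-by-direction. The paper's proof bounds $\Cb_t\preceq\delta^2(\psi\tr(\Hb\Cb_t)+\sigma^2)\sum_{k\ge0}\tilde\cB^k\circ\Hb$, where the replacement of $\tr(\Hb\Cb_{t-1-k})$ by $\tr(\Hb\Cb_t)$ across all lags uses the monotonicity $\Cb_0\preceq\Cb_1\preceq\cdots$ (Lemma~\ref{lemma:Ct_increasing}), an ingredient absent from your plan; this yields a \emph{self-bounding} inequality for $\tr(\Hb\Cb_t)$ rather than a contraction. Only then does one take the inner product with $\Hb$, and the crucial computation is
\begin{align*}
\sum_{k=0}^\infty\tr\big(\Hb(\Ib-\delta\Hb)^k\Hb(\Ib-\delta\Hb)^k\big)\le\sum_{k=0}^\infty\tr\big(\Hb^2(\Ib-\delta\Hb)^k\big)=\sum_i\frac{\lambda_i^2}{\delta\lambda_i}=\frac{\tr(\Hb)}{\delta},
\end{align*}
in which the per-direction geometric sum $1/(\delta\lambda_i)$ cancels exactly one power of $\lambda_i$ and one power of $\delta$. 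This is the mechanism that turns $\delta^2\cdot\sigma^2\Hb$ into $\sigma^2\delta\tr(\Hb)$, and it cannot be reproduced after the trace has already been taken, because the scalar $\tr(\Hb\Cb)$ has forgotten how the mass of $\Cb$ is distributed across eigendirections.
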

The proof of Lemma~\ref{lemma:Ct_bound} is given in Appendix~\ref{subsection:proof_Ct_bound}. We now provide the proof for Lemma~\ref{lemma:variance_upper_bound}.
\begin{proof}[Proof of Lemma~\ref{lemma:variance_upper_bound}]
According to the definition of $\mathrm{var}$ and Lemma~\ref{lemma:telescope}, we have
\begin{align}
\mathrm{var}&=(1-\alpha)^2\sum_{t=0}^{N-1}\bigg\langle\Hb, \bigg(\sum_{k=0}^{N-2-t}\alpha^{N-2-t-k}(\Ib-\delta\Hb)^k\bigg)((\cB-\tilde\cB)\circ\Cb_t+\delta^2\bSigma)\bigg(\sum_{k=0}^{N-2-t}\alpha^{N-2-t-k}(\Ib-\delta\Hb)^k\bigg)\bigg\rangle\nonumber\\
&\le\sum_{t=0}^{N-1}(1-\alpha)^2\delta^2(\psi\tr(\Hb\Cb_t)+\sigma^2)\bigg\langle\Hb, \bigg(\sum_{k=0}^{N-2-t}\alpha^{N-2-t-k}(\Ib-\delta\Hb)^k\bigg)\Hb\bigg(\sum_{k=0}^{N-2-t}\alpha^{N-2-t-k}(\Ib-\delta\Hb)^k\bigg)\bigg\rangle\nonumber\\
&\le\frac{\sigma^2}{1-\psi\delta\tr(\Hb)}\sum_{i=1}^d\underbrace{(1-\alpha)^2(\delta\lambda_i)^2\sum_{t=0}^{N-1}\bigg(\sum_{k=0}^{N-2-t}\alpha^{N-2-t-k}(1-\delta\lambda_i)^k\bigg)^2}_{J_i},\label{eq:var_upper_bound}
\end{align}
where the first inequality holds due to Lemma~\ref{lemma:linear_operator_properties} part b and Assumption~\ref{assumption:noise}, and the second inequality holds due to Lemma~\ref{lemma:Ct_bound}.
We then study the upper bound for $J_i$. Firstly, we have
\begin{align}
J_i&\le(1-\alpha)^2(\delta\lambda_i)^2\sum_{t=0}^\infty\bigg(\sum_{k=0}^{N-2-t}\alpha^{N-2-t-k}(1-\delta\lambda_i)^k\bigg)^2\nonumber\\
&=\frac{(1-\alpha)\delta\lambda_i}{1-\alpha+\alpha\delta\lambda_i}\cdot\frac{1+\alpha-\alpha\delta\lambda_i}{(1+\alpha)(2-\delta\lambda_i)}\nonumber\\
&\le\frac{(1-\alpha)\delta\lambda_i}{1-\alpha+\alpha\delta\lambda_i}\cdot1\nonumber\\
&\le\min\{1-\alpha, \delta\lambda_i\},\label{eq:Ji_bound1}
\end{align}
where the first inequality holds because positive terms are added, the second inequality holds because $1+\alpha-\delta\lambda_i\le1+\alpha\le(1+\alpha)(2-\delta\lambda_i)$, and the second inequality holds because $1-\alpha+\alpha\delta\lambda_i\ge\max\{1-\alpha, \delta\lambda_i\}$.
Secondly, we have
\begin{align}\label{eq:Ji_bound2}
J_i&\le(1-\alpha)^2(\delta\lambda_i)^2\sum_{t=0}^{N-1}\bigg(\sum_{k=0}^{N-2-t}\alpha^{N-2-t-k}\bigg)^2=(\delta\lambda_i)^2\sum_{t=0}^{N-1}(1-\alpha^{N-t-1})^2\le N\delta^2\lambda_i^2,
\end{align}
where the first inequality holds because $1-\delta\lambda_i\le1$, and the second inequality holds because $1-\alpha^{N-1-t}\le1$. Substituting \eqref{eq:Ji_bound1} and \eqref{eq:Ji_bound2} into \eqref{eq:var_upper_bound}, we have
\begin{align*}
\mathrm{var}&\le\frac{\sigma^2}{1-\psi\delta\tr(\Hb)}\sum_{i=1}^d\min\bigg\{1-\alpha, \delta\lambda_i, N\delta^2\lambda_i^2\bigg\}\\
&=\frac{\sigma^2}{1-\psi\delta\tr(\Hb)}\bigg[(1-\alpha)k^*+\delta\sum_{i=k^*+1}^{k^\dagger}\lambda_i+N\delta^2\sum_{i>k^\dagger}\lambda_i^2\bigg].
\end{align*}
\end{proof}

\begin{proof}[Proof of Lemma~\ref{lemma:variance_lower_bound}]
According to the definition of $\mathrm{var}$ and Lemma~\ref{lemma:telescope}, we have
\begin{align*}
\mathrm{var}&=(1-\alpha)^2\sum_{t=0}^{N-1}\bigg\langle\Hb, \bigg(\sum_{k=0}^{N-2-t}\alpha^{N-2-t-k}(\Ib-\delta\Hb)^k\bigg)((\cB-\tilde\cB)\circ\Cb_t+\delta^2\bSigma)\\
&\qquad\bigg(\sum_{k=0}^{N-2-t}\alpha^{N-2-t-k}(\Ib-\delta\Hb)^k\bigg)\bigg\rangle\\
&\ge\sigma^2\sum_{t=0}^{N-1}(1-\alpha)^2\delta^2\bigg\langle\Hb, \bigg(\sum_{k=0}^{N-2-t}\alpha^{N-2-t-k}(\Ib-\delta\Hb)^k\bigg)\Hb\bigg(\sum_{k=0}^{N-2-t}\alpha^{N-2-t-k}(\Ib-\delta\Hb)^k\bigg)\bigg\rangle\\
&=\sigma^2\sum_{i=1}^d\underbrace{(1-\alpha)^2(\delta\lambda_i)^2\sum_{t=0}^{N-1}\bigg(\sum_{k=0}^{t-1}\alpha^{t-1-k}(1-\delta\lambda_i)^k\bigg)^2}_{J_i},
\end{align*}
where the inequality holds due to Lemma~\ref{lemma:linear_operator_properties} part b. We then study the lower bound for $J_i$, based on the regime that $\lambda_i$ falls into:

\paragraph{Case 1: $i\le k^*$}. In this case, $1-\delta\lambda_i\le\alpha$, and we have
\begin{align*}
J_i&=\frac{\delta\lambda_i(1-\alpha)(1+\alpha-\alpha\delta\lambda_i)(1-\alpha^{2N})}{(1-\alpha+\alpha\delta\lambda_i)(1+\alpha)(2-\delta\lambda_i)}\\
&\qquad-\frac{2\delta\lambda_i(1-\alpha)^2(1-\delta\lambda_i)\alpha^N}{(1-\alpha+\alpha\delta\lambda_i)(2-\delta\lambda_i)}\cdot\frac{\alpha^N-(1-\delta\lambda_i)^N}{\alpha-(1-\delta\lambda_i)}-\frac{(1-\alpha)^2\delta\lambda_i}{2-\delta\lambda_i}\bigg(\frac{\alpha^N-(1-\delta\lambda_i)^N}{\alpha-(1-\delta\lambda_i)}\bigg)^2\\
&\ge\frac{\delta\lambda_i(1-\alpha)(1+\alpha-\alpha\delta\lambda_i)(1-\alpha^{2N})}{(1-\alpha+\alpha\delta\lambda_i)(1+\alpha)(2-\delta\lambda_i)}-\frac{2\delta\lambda_i(1-\alpha)^2(1-\delta\lambda_i)\alpha^N}{(1-\alpha+\alpha\delta\lambda_i)(2-\delta\lambda_i)}-\frac{\delta\lambda_i(1-\alpha)^2}{2-\delta\lambda_i}\\
&=\frac{\delta\lambda_i(1-\alpha)(1+\alpha-\alpha\delta\lambda_i)(\alpha^2-\alpha^{2N})}{(1-\alpha+\alpha\delta\lambda_i)(1+\alpha)(2-\delta\lambda_i)}+\frac{2\delta\lambda_i(1-\alpha)^2(1-\delta\lambda_i)(\alpha-\alpha^N)}{(1-\alpha+\alpha\delta\lambda_i)(2-\delta\lambda_i)}\\
&\ge\frac{\delta\lambda_i(1-\alpha)(1+\alpha-\alpha\delta\lambda_i)(\alpha^2-\alpha^{2N})}{(1-\alpha+\alpha\delta\lambda_i)(1+\alpha)(2-\delta\lambda_i)},
\end{align*}
where the first inequality holds because $\frac{\alpha^N-(1-\delta\lambda_i)^N}{\alpha-(1-\delta\lambda_i)}\le N\alpha^{N-1}\le1$, and the second inequality holds because a positive term is dropped. We then consider the function
\begin{align*}
f(x)=\frac{(1-x)(1+\alpha x)}{(1-\alpha x)(1+x)}=1-\frac{2(1-\alpha)}{1/x-\alpha x+(1-\alpha)},\qquad x\in(0, \alpha],
\end{align*}
so $f(x)$ is decreasing in $x$, and $f(x)\ge f(\alpha)=(1+\alpha^2)/(1+\alpha)^2\ge1/2$ (Cauchy-Schwarz inequality). Since $\alpha^{N-1}\le1/N$, we also have $1-\alpha^{2(N-1)}\ge1-1/N^2\ge3/4$ because $N\ge2$. We thus have
\begin{align*}
J_i&=(1-\alpha)\cdot f(1-\delta\lambda_i)\cdot\frac{\alpha^2(1-\alpha^{2(N-1)})}{1+\alpha}\\
&\ge(1-\alpha)\cdot\frac12\cdot\frac{3\alpha^2}{4(1+\alpha)}\\
&\ge\frac{3(1-\alpha)\alpha^2}{16},
\end{align*}
where the last inequality holds because $\alpha\le1$.

\paragraph{Case 2:} $k^*<i\le k^\dagger$. In this case, $1-1/N\le1-\delta\lambda_i\le\alpha$, and for any $\mu\in(1, N)$, we have
\begin{align*}
J_i&\ge(1-\alpha)^2(\delta\lambda_i)^2\sum_{t=0}^{N-1}\bigg((1-\delta\lambda_i)^{t-1}\sum_{k=0}^{t-1}\alpha^k\bigg)^2\\
&=(\delta\lambda_i)^2\sum_{t=0}^{N-1}(1-\delta\lambda_i)^{2(t-1)}(1-\alpha^t)^2\\
&\ge(\delta\lambda_i)^2\sum_{t=\lceil\log_{1/\alpha}\mu\rceil}^{N-1}(1-\delta\lambda_i)^{2(t-1)}(1-\alpha^t)^2\\
&\ge(\delta\lambda_i)^2(1-1/\mu)^2\sum_{t=\lceil\log_{1/\alpha}\mu\rceil}^{N-1}(1-\delta\lambda_i)^{2(t-1)}\\
&=\frac{\delta\lambda_i(1-1/\mu)^2}{2-\delta\lambda_i}[(1-\delta\lambda_i)^{2(\lceil\log_{1/\alpha}\mu\rceil-1)}-(1-\delta\lambda_i)^{2(N-1)}],
\end{align*}
where the first inequality holds because $(1-\delta\lambda_i)^k\le(1-\delta\lambda_i)^{t-1}$, the second inequality holds because negative terms are dropped, and the last inequality holds because $\alpha^t\le\alpha^{\lceil\log_{1/\alpha}\mu\rceil}\le1/\mu$. Since $1-\delta\lambda_i\ge\alpha$, we have
\begin{align*}
(1-\delta\lambda_i)^{2(\lceil\log_{1/\alpha}\mu\rceil-1)}\ge\alpha^{2(\lceil\log_{1/\alpha}\mu\rceil-1)}\ge\alpha^{2\log_{1/\alpha}\mu}=\mu^{-2}.
\end{align*}
Furthermore, since $1-\delta\lambda_i\le1-1/N$, we have
\begin{align*}
(1-\delta\lambda_i)^{2(N-1)}\le(1-1/N)^{2(N-1)}\le(1/2)^2=1/4,
\end{align*}
where the second inequality holds because $(1-1/N)^{2N-2}$ is decreasing in $N$ when $N\ge2$. Therefore, by taking $\mu^{-1}=(1+\sqrt3)/4$, we have
\begin{align*}
J_i\ge\frac{\delta\lambda_i}{2}\cdot\frac{6\sqrt3-9}{64}\ge\frac{\delta\lambda_i}{100}.
\end{align*}

\paragraph{Case 3}: $i>k^\dagger$. In this case $\lambda_i\le1/N\delta$, and for all $k<N$, we have
\begin{align*}
(1-\delta\lambda_i)^k\ge(1-1/N)^{N-1}\ge e^{-1},
\end{align*}
where the second inequality holds because $(1-1/N)^{N-1}$ is decreasing in $N$ when $N\ge2$ and the limit is $e^{-1}$. We then have
\begin{align*}
J_i&\ge e^{-2}(1-\alpha)^2(\delta\lambda_i)^2\sum_{t=0}^{N-1}\bigg(\sum_{k=0}^{t-1}\alpha^k\bigg)^2\\
&=e^{-2}(\delta\lambda_i)^2\sum_{t=0}^{N-1}(1-\alpha^t)^2\\
&\ge e^{-2}(\delta\lambda_i)^2\sum_{t=\lfloor N/2\rfloor}^{N-1}(1-\alpha^t)^2\\
&\ge\frac{N}{2e^2}\delta^2\lambda_i^2(1-\alpha^{(N-1)/2})^2\\
&\ge N\delta^2\lambda_i^2\cdot\frac1{2e^2}\cdot(1-1/\sqrt 2)^2\ge\frac{N\delta^2\lambda_i^2}{180},
\end{align*}
where the second inequality holds because positive terms are dropped, the third inequality holds because for all $t\ge\lfloor N/2\rfloor$, we have $\alpha^t\le\alpha^{(N-1)/2}$, and the fourth inequality holds because $\alpha^{N-1}\le1/N\le1/2$.

Combining all the above, we have
\begin{align*}
\mathrm{var}\ge\sigma^2\bigg[\frac{3\alpha^2(1-\alpha)k^*}{16}+\frac{\delta}{100}\sum_{i=k^*+1}^{k^\dagger}\lambda_i+\frac{N\delta^2}{180}\sum_{i>k^\dagger}\lambda_i^2\bigg].
\end{align*}
\end{proof}

\subsection{Bias Bound}\label{subsection:bias}

We need the following lemma to prove Lemma~\ref{lemma:bias_upper_bound}
\begin{lemma}\label{lemma:St_bound}
The matrices $\Bb_t$ satisfies
\begin{align*}
\sum_{k=1}^t\tr(\Hb\Bb_k)\le\frac{1}{\delta(1-\psi\delta\tr(\Hb))}\sum_{i=1}^d\eta_{0, i}^2[1-(1-\delta\lambda_i)^t].
\end{align*}
\end{lemma}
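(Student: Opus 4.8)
The plan is to control the partial sum $\mathbf{S}_t:=\sum_{k=0}^{t-1}\Bb_k$ through a single telescoping (Lyapunov-type) identity and then read off $\tr(\Hb\mathbf{S}_t)=\sum_{k=0}^{t-1}\tr(\Hb\Bb_k)$ by pairing that identity with the \emph{identity} matrix. Starting from $\Bb_k=\cB\circ\Bb_{k-1}$ in \eqref{eq:iterate_Bt}, summing $\Bb_{k-1}-\Bb_k=(\cI-\cB)\circ\Bb_{k-1}$ over $k=1,\dots,t$ telescopes to $(\cI-\cB)\circ\mathbf{S}_t=\Bb_0-\Bb_t$. Expanding $\cB\circ\Ab=\Ab-\delta(\Hb\Ab+\Ab\Hb)+\delta^2\cM\circ\Ab$ recasts this as the Lyapunov form $\delta(\Hb\mathbf{S}_t+\mathbf{S}_t\Hb)=\Bb_0-\Bb_t+\delta^2\cM\circ\mathbf{S}_t$. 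The crucial first move is to take the \emph{trace} of this identity rather than to pair it with $\Hb$: the left-hand side then collapses to exactly $2\delta\tr(\Hb\mathbf{S}_t)$, which is the quantity we want, while the fourth-moment term becomes $\delta^2\tr(\cM\circ\mathbf{S}_t)$.

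Two estimates close the argument. First, Assumption~\ref{assumption:fourth} paired with $\Ib$ gives $\tr(\cM\circ\mathbf{S}_t)=\langle\Ib,\cM\circ\mathbf{S}_t\rangle\le\psi\tr(\Hb\mathbf{S}_t)\tr(\Hb)$; it is essential that the pairing is with $\Ib$, since pairing with $\Hb$ would produce $\tr(\Hb^2)$ rather than the $\tr(\Hb)$ that appears in the target denominator. Because this bound is proportional to the unknown $\tr(\Hb\mathbf{S}_t)$ itself, moving it to the left produces the self-improving factor $1/(1-\psi\delta\tr(\Hb))$. Second, to recover the per-coordinate decay factor $1-(1-\delta\lambda_i)^{t}$ I retain the term $\Bb_t$ (discarding it would only yield the crude bound $\sum_i\eta_{0,i}^2/(\delta(2-\psi\delta\tr(\Hb)))$ with no decay). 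The operator $\cB-\tilde\cB=\delta^2(\cM-\tilde\cM)$ is PSD-preserving, since $\EE[\xb\xb^\top\Ab\xb\xb^\top]\succeq\Hb\Ab\Hb$ is a covariance/Jensen inequality; hence $\cB^t\circ\Bb_0\succeq\tilde\cB^t\circ\Bb_0$ via the telescoping $\cB^t-\tilde\cB^t=\sum_j\cB^j(\cB-\tilde\cB)\tilde\cB^{t-1-j}$, and taking traces gives the lower bound $\tr(\Bb_t)\ge\sum_i(1-\delta\lambda_i)^{2t}\eta_{0,i}^2$.

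Substituting both estimates into the traced identity, together with $(\Bb_0)_{ii}=\eta_{0,i}^2$, yields $\tr(\Hb\mathbf{S}_t)\,\delta(2-\psi\delta\tr(\Hb))\le\sum_i\eta_{0,i}^2[1-(1-\delta\lambda_i)^{2t}]$. Two elementary inequalities then finish the proof: $1-u^{2t}\le 2(1-u^t)$ for $u=1-\delta\lambda_i\in[0,1]$, and $2-\psi\delta\tr(\Hb)\ge 2(1-\psi\delta\tr(\Hb))$ (both valid since $\delta<1/(\psi\tr(\Hb))$), which convert the right-hand side into precisely $\frac{1}{\delta(1-\psi\delta\tr(\Hb))}\sum_i\eta_{0,i}^2[1-(1-\delta\lambda_i)^{t}]$. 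The index range $\sum_{k=1}^{t}$ of the statement is recovered by the same telescoping applied to $\sum_{k=1}^{t}\Bb_k=\Bb_1-\Bb_{t+1}$, using the monotonicity estimate $\tr(\Bb_1)\le\tr(\Bb_0)$ (itself immediate from expanding $\cB\circ\Bb_0$ and applying the fourth-moment bound).

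The main obstacle is obtaining the sharp constant $1/(1-\psi\delta\tr(\Hb))$ and the per-coordinate decay $1-(1-\delta\lambda_i)^{t}$ \emph{simultaneously}: the decay factor lives in the residual $\Bb_t$, which any crude Lyapunov/trace argument throws away, whereas the sharp constant requires the fourth-moment correction to be measured against $\tr(\Hb)$ rather than $\tr(\Hb^2)$. The reconciliation — trace ($\Ib$) pairing of the telescoped identity, retention of $\Bb_t$, and its operator-monotone lower bound by $\tilde\cB^t\circ\Bb_0$ — is exactly the step that makes both appear at once, after which only the two scalar inequalities above remain.
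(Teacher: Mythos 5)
Your argument is correct, but it reaches the bound by a genuinely different route from the paper's. The paper works with the one-sided recursion $\Sbb_t^1=\cB\circ\Sbb_{t-1}^1+\Bb_0$ for $\Sbb_t^1=\sum_{k=0}^{t-1}\Bb_k$, replaces $\cB$ by $\tilde\cB$ plus the fourth-moment correction, unrolls into $\sum_{k=0}^{t-1}\tilde\cB^k\circ(\cdot)$ (which requires the monotonicity $\Sbb_{t-1-k}^1\preceq\Sbb_t^1$), and then pairs with $\Hb$; there the decay factor $1-(1-\delta\lambda_i)^t$ emerges from the finite geometric sum $\delta\lambda_i\sum_{k=0}^{t-1}(1-\delta\lambda_i)^k$, and the constant $1/(1-\psi\delta\tr(\Hb))$ from extending the fourth-moment sum to infinity. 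You instead telescope $\Bb_{k-1}-\Bb_k=(\cI-\cB)\circ\Bb_{k-1}$ into the exact Lyapunov identity
\begin{align*}
\delta(\Hb\Sbb_t^1+\Sbb_t^1\Hb)=\Bb_0-\Bb_t+\delta^2\cM\circ\Sbb_t^1
\end{align*}
and take the trace, so the decay factor comes from the retained residual $\tr(\Bb_t)\ge\sum_i\eta_{0,i}^2(1-\delta\lambda_i)^{2t}$ (justified exactly as the paper does inside the proof of Lemma~\ref{lemma:bias_lower_bound}, via $\cB^t\circ\Bb_0\succeq\tilde\cB^t\circ\Bb_0$), while the sharp constant comes from pairing the fourth-moment term with $\Ib$ rather than $\Hb$. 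Both are self-bounding arguments driven by Assumption~\ref{assumption:fourth}; yours trades the paper's unrolled geometric sums for an exact identity followed by two scalar relaxations, $1-u^{2t}\le2(1-u^t)$ and $2-\psi\delta\tr(\Hb)\ge2(1-\psi\delta\tr(\Hb))$, which is arguably cleaner and sidesteps the monotonicity step entirely. I verified the constants: the traced identity gives $\delta(2-\psi\delta\tr(\Hb))\tr(\Hb\Sbb_t^1)\le\sum_i\eta_{0,i}^2[1-(1-\delta\lambda_i)^{2t}]$, and the two relaxations land exactly on the stated bound.

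One small caveat: your closing remark about recovering the index range $\sum_{k=1}^{t}$ does not quite land. Telescoping $\sum_{k=1}^{t}\Bb_k$ leaves the residual $\Bb_{t+1}$, whose trace is only bounded below by $\sum_i\eta_{0,i}^2(1-\delta\lambda_i)^{2(t+1)}$, so that route produces $1-(1-\delta\lambda_i)^{t+1}$ rather than $1-(1-\delta\lambda_i)^{t}$; and the alternative shortcut $\sum_{k=1}^{t}\tr(\Hb\Bb_k)\le\sum_{k=0}^{t-1}\tr(\Hb\Bb_k)$ would need a monotonicity of $\tr(\Hb\Bb_k)$ that is not established. This is immaterial in context: the paper's own proof establishes the bound for $\sum_{k=0}^{t-1}\tr(\Hb\Bb_k)$, which is the form actually invoked in the proof of Lemma~\ref{lemma:bias_upper_bound}, and your main argument proves precisely that.
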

The proof of Lemma~\ref{lemma:St_bound} is given in Appendix~\ref{subsection:proof_St_bound}. We then prove Lemma~\ref{lemma:bias_upper_bound}.
\begin{proof}[Proof of Lemma~\ref{lemma:bias_upper_bound}]
By definition of the bias error and Lemma~\ref{lemma:telescope}, we have
\begin{align*}
\mathrm{bias}&\le\psi\sum_{i=1}^d\underbrace{(1-\alpha)^2(\delta\lambda_i)^2\sum_{t=0}^{N-1}\tr(\Hb\Bb_t)\bigg(\sum_{k=0}^{N-2-t}\alpha^{N-2-t-k}(1-\delta\lambda_i)^k\bigg)^2}_{J_i}\\
&\qquad+\sum_{i=1}^d\eta_{0, i}^2\lambda_i\bigg[\frac{(1-\alpha)(1-\delta\lambda_i)^N-(\delta\lambda_i)\alpha^N}{1-\delta\lambda_i-\alpha}\bigg]^2,
\end{align*}
where the inequality holds due to Lemma~\ref{lemma:linear_operator_properties} part b. We then study the upper bound of $J_i$. Firstly, we have
\begin{align*}
J_i&\le(1-\alpha)^2(\delta\lambda_i)^2\sum_{t=0}^{N-1}\tr(\Hb\Bb_t)\bigg(\sum_{k=0}^{N-2-t}(1-\delta\lambda_i)^k\bigg)^2\\
&=(1-\alpha)^2\sum_{t=0}^{N-1}\tr(\Hb\Bb_t)(1-(1-\delta\lambda_i)^{N-1-t})^2\\
&\le(1-\alpha)^2\sum_{t=0}^{N-1}\tr(\Hb\Bb_t)\\
&\le\frac{(1-\alpha)^2}{\delta(1-\psi\tr(\Hb))}\sum_{i=1}^d\eta_{0, i}^2[1-(1-\delta\lambda_i)^N]\\
&\le\frac{(1-\alpha)^2}{\delta(1-\psi\tr(\Hb))}(\|\bmeta_0\|_{\Ib_{0:k^\dagger}}^2+N\delta\|\bmeta_0\|_{\Hb_{k^\dagger:\infty}}^2),
\end{align*}
where the first inequality holds because $\alpha\le1$, the second inequality holds because $1-(1-\delta\lambda_i)^{N-1-t}\le1$, the third inequality holds due to Lemma~\ref{lemma:St_bound}, and the last inequality holds because $1-(1-\delta\lambda_i)^N\le\min\{1, N\delta\lambda_i\}$.
Secondly, we have
\begin{align*}
J_i&\le(1-\alpha)^2(\delta\lambda_i)^2\sum_{t=0}^{N-1}\tr(\Hb\Bb_t)\bigg(\sum_{k=0}^{N-2-t}\alpha^{N-2-t-k}\bigg)^2\\
&=(\delta\lambda_i)^2\sum_{t=0}^{N-1}\tr(\Hb\Bb_t)(1-\alpha^{N-1-t})^2\\
&\le(\delta\lambda_i)^2\sum_{t=0}^{N-1}\tr(\Hb\Bb_t)\\
&\le\frac{\delta\lambda_i^2}{1-\psi\delta\tr(\Hb)}\sum_{i=1}^d\eta_{0, i}^2[1-(1-\delta\lambda_i)^N]\\
&\le\frac{\delta\lambda_i^2}{1-\psi\delta\tr(\Hb)}(\|\bmeta_0\|_{\Ib_{0:k^\dagger}}^2+N\delta\|\bmeta_0\|_{\Hb_{k^\dagger:\infty}}^2),
\end{align*}
where the first inequality holds because $1-\delta\lambda_i\le1$, the second inequality holds because $1-\alpha^{N-1-t}\le1$, the third inequality holds due to Lemma~\ref{lemma:St_bound}, and the last inequality holds because $1-(1-\delta\lambda_i)^N\le\min\{1, N\delta\lambda_i\}$. Combining all the above, we have
\begin{align*}
\mathrm{bias}&\le\frac{\psi(\|\wb_0-\wb_*\|_{\Ib_{0:k^\dagger}}^2+N\delta\|\wb_0-\wb_*\|_{\Hb_{k^\dagger:\infty}}^2)}{\delta(1-\psi\delta\tr(\Hb))}\bigg(k^*(1-\alpha)^2+\delta^2\sum_{i>k^*}\lambda_i^2\bigg)\\
&\qquad+\sum_{i=1}^n(\wb_0-\wb_*)_i^2\lambda_i\bigg[\frac{(\delta\lambda_i)\alpha^N-(1-\alpha)(1-\delta\lambda_i)^N}{\delta\lambda_i-(1-\alpha)}\bigg]^2.
\end{align*}

\end{proof}

\begin{proof}[Proof of Lemma~\ref{lemma:bias_lower_bound}]
According to the definition of the bias error and Lemma~\ref{lemma:telescope}, we have
\begin{align*}
\mathrm{bias}&\ge\beta\sum_{i=1}^d(1-\alpha)^2(\delta\lambda_i)^2\sum_{t=0}^{N-1}\tr(\Hb\Bb_t)\bigg(\sum_{k=0}^{N-2-t}\alpha^{N-2-t-k}(1-\delta\lambda_i)^k\bigg)^2\\
&\qquad+\sum_{i=1}^d\eta_{0, i}^2\lambda_i\bigg[\frac{(\delta\lambda_i)\alpha^N-(1-\alpha)(1-\delta\lambda_i)^N}{\delta\lambda_i-(1-\alpha)}\bigg]^2\\
&\ge\beta\tr(\Bb_0\Hb(\Ib-\delta\Hb)^{2(N-1)})\sum_{i=1}^d\underbrace{(1-\alpha)^2(\delta\lambda_i)^2\sum_{t=0}^{N-1}\bigg(\sum_{k=0}^{t-1}\alpha^{t-1-k}(1-\delta\lambda_i)^k\bigg)^2}_{J_i}\\
&\qquad+\sum_{i=1}^d\eta_{0, i}^2\lambda_i\bigg[\frac{(\delta\lambda_i)\alpha^N-(1-\alpha)(1-\delta\lambda_i)^N}{\delta\lambda_i-(1-\alpha)}\bigg]^2,
\end{align*}
where the second inequality holds because 
\begin{align*}
\Bb_t=\cB^t\circ\Bb_0\succeq\tilde\cB^t\circ\Bb_0=(\Ib-\delta\Hb)^t\Bb_0(\Ib-\delta\Hb)^t\succeq(\Ib-\delta\Hb)^{N-1}\Bb_0(\Ib-\delta\Hb)^{N-1}.
\end{align*}
Note that the lower bound for $J_i$ is the same as that in the proof of Lemma~\ref{lemma:variance_lower_bound}. For the term $\tr(\Bb_0\Hb(\Ib-\delta\Hb)^{2N})$, we have
\begin{align*}
\tr(\Bb_0\Hb(\Ib-\delta\Hb)^{2N})&=\sum_{i=1}^d\eta_{0, i}^2\lambda_i(1-\delta\lambda_i)^{2(N-1)}\ge\sum_{i>k^\dagger}\eta_{0, i}^2\lambda_i(1-1/N)^{2(N-1)}\ge e^{-2}\|\bmeta_0\|_{\Hb_{k^\dagger:\infty}}^2,
\end{align*}
where the second inequality holds because $\delta\lambda_i\le1/N$ when $i>k^\dagger$, and the second inequality holds because $(1-1/N)^{2(N-1)}\ge1/e^2$. We thus have
\begin{align*}
\mathrm{bias}&\ge\beta e^{-2}\|\bmeta_0\|_{\Hb_{k^\dagger:\infty}}^2\bigg[\frac{3\alpha^2(1-\alpha)k^*}{16}+\frac{\delta}{100}\sum_{i=k^*+1}^{k^\dagger}\lambda_i+\frac{N\delta^2}{180}\sum_{i>k^\dagger}\lambda_i^2\bigg]\\
&\qquad+\sum_{i=1}^d\eta_{0, i}^2\lambda_i\bigg[\frac{(\delta\lambda_i)\alpha^N-(1-\alpha)(1-\delta\lambda_i)^N}{\delta\lambda_i-(1-\alpha)}\bigg]^2.
\end{align*}
\end{proof}

\section{Proof of Lemmas in Appendix~\ref{section:Proof_proof_EMA_excess_risk}}

\subsection{Proof of Lemma~\ref{lemma:Ct_bound}}\label{subsection:proof_Ct_bound}

We need the following lemmas to prove Lemma~\ref{lemma:Ct_bound}:
\begin{lemma}\label{lemma:Ct_increasing}
$\Cb_t$ satisfies
\begin{align*}
\Cb_t=\sum_{k=0}^{k-1}\cB^k\circ\bSigma.
\end{align*}
Since $\cB$ is a PSD operator (Lemma~\ref{lemma:linear_operator_properties}), we have
\begin{align*}
\Cb_0\preceq\Cb_1\preceq\cdots\Cb_t\preceq\cdots.
\end{align*}
\end{lemma}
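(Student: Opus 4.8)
The plan is to obtain the closed form by unrolling the recursion for $\Cb_t$ and then read the monotonicity directly off the telescoping difference. Recall from the computation preceding the statement that the variance second-moment matrices satisfy the affine recursion $\Cb_t = \cB \circ \Cb_{t-1} + \delta^2\bSigma$, with initial condition $\Cb_0 = \zero$ inherited from $\bmeta_0^{\mathrm{var}} = \zero$.

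First I would prove the explicit expression by induction on $t$, where $\cB^k$ denotes the $k$-fold composition of $\cB$ and $\cB^0 = \cI$. The base case $t = 0$ is the empty sum, which matches $\Cb_0 = \zero$. For the inductive step, substituting the hypothesis for $\Cb_{t-1}$ into the recursion gives $\Cb_t = \cB \circ \big(\delta^2\sum_{k=0}^{t-2}\cB^k\circ\bSigma\big) + \delta^2\bSigma$; distributing $\cB$ across the finite sum shifts the index to $\delta^2\sum_{k=1}^{t-1}\cB^k\circ\bSigma$, and the free term $\delta^2\bSigma$ supplies exactly the missing $k = 0$ summand, establishing $\Cb_t = \delta^2\sum_{k=0}^{t-1}\cB^k\circ\bSigma$. (The upper limit here is $t-1$, and the $\delta^2$ factor is inherited from the recursion; the latter is suppressed in the displayed formula but is immaterial to the monotonicity conclusion.)

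Second, for the chain $\Cb_0\preceq\Cb_1\preceq\cdots$, I would compute the consecutive difference, which from either the recursion or the closed form equals $\Cb_t - \Cb_{t-1} = \delta^2\cB^{t-1}\circ\bSigma$. The matrix $\bSigma = \EE[\xi^2\xb\xb^\top]$ is PSD, and $\cB$ preserves the PSD cone by Lemma~\ref{lemma:linear_operator_properties}: indeed $\cB\circ\Ab = \EE[(\Ib-\delta\xb\xb^\top)\Ab(\Ib-\delta\xb\xb^\top)]$ is an average of congruences of $\Ab$, hence PSD whenever $\Ab$ is. Therefore each iterate $\cB^{t-1}\circ\bSigma$ is PSD, so $\Cb_t - \Cb_{t-1}\succeq\zero$, and chaining these inequalities over $t$ yields the claimed ordering.

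The argument is essentially bookkeeping once the recursion is in hand; the only substantive input is the PSD-preservation of $\cB$, which is precisely what is cited from Lemma~\ref{lemma:linear_operator_properties}, so I do not anticipate a genuine obstacle. The single point meriting care is the index manipulation in the inductive step, namely the shift $k\mapsto k+1$ together with the reabsorption of the $k=0$ term, and tracking the $\delta^2$ normalization consistently throughout.
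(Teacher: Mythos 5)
Your proof is correct and follows the same route as the paper, which simply states that the closed form follows directly from the recursion $\Cb_t=\cB\circ\Cb_{t-1}+\delta^2\bSigma$ and that monotonicity follows from $\cB$ being a PSD operator; your induction and telescoping-difference argument just spell out those two steps. You also correctly flag that the displayed formula in the statement has a typographical upper limit ($k-1$ should be $t-1$) and omits the $\delta^2$ factor, neither of which affects the monotonicity conclusion.
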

\begin{proof}
The expression for $\Cb_t$ follows directly from the recursive formula for $\Cb_t$.
\end{proof}
We now provide the proof of Lemma~\ref{lemma:Ct_bound}.
\begin{proof}[Proof of Lemma~\ref{lemma:Ct_bound}]
According to the recursive formula, we have
\begin{align*}
\Cb_t&=\cB\circ\Cb_{t-1}+\delta^2\bSigma\preceq\tilde\cB\circ\Cb_{t-1}+\delta^2(\psi\tr(\Hb\Cb_{t-1})+\sigma^2)\Hb\\
&\preceq\sum_{k=0}^{t-1}(\psi\delta^2\tr(\Hb\Cb_{t-1-k})+\sigma^2)\cdot\tilde\cB^k\circ\Hb\\
&\preceq\delta^2(\psi\tr(\Hb\Cb_t)+\sigma^2)\sum_{k=0}^{t-1}\tilde\cB^k\circ\Hb\\
&\preceq\delta^2(\psi\tr(\Hb\Cb_t)+\sigma^2)\sum_{k=0}^\infty\tilde\cB^k\circ\Hb,
\end{align*}
where the first inequality holds due to Lemma~\ref{lemma:linear_operator_properties} part b and Assumption~\ref{assumption:fourth}, the second inequality holds by recursively applying the first inequality, the third inequality holds due to Lemma~\ref{lemma:Ct_increasing}, and the last inequality holds because $\tilde\cB$ is a PSD operator (Lemma~\ref{lemma:linear_operator_properties}, part a). Taking the inner product with $\Hb$ on both sides of the inequality, we have
\begin{align*}
\tr(\Hb\Cb_t)&\le\delta^2(\psi\tr(\Hb\Cb_t)+\sigma^2)\sum_{k=0}^\infty\tr(\Hb(\Ib-\delta\Hb)^k\Hb(\Ib-\delta\Hb)^k)\\
&=\delta^2(\psi\tr(\Hb\Cb_t)+\sigma^2)\sum_{k=0}^\infty\tr(\Hb(\Ib-\delta\Hb)^{2k}\Hb)\\
&\le\delta^2(\psi\tr(\Hb\Cb_t)+\sigma^2)\sum_{k=0}^\infty\tr(\Hb(\Ib-\delta\Hb)^k\Hb)\\
&=\delta(\psi\tr(\Hb\Cb_t)+\sigma^2)\tr(\Hb),
\end{align*}
where the second inequality holds because $\Ib-\delta\Hb\succ0$. Rearranging terms, as long as $\delta<1/(\psi\tr(\Hb))$, we have
\begin{align*}
\tr(\Hb\Cb_t)\le\frac{\sigma^2\delta\tr(\Hb)}{1-\psi\delta\tr(\Hb)}.
\end{align*}
\end{proof}

\subsection{Proof of Lemma~\ref{lemma:St_bound}}\label{subsection:proof_St_bound}

\begin{proof}[Proof of Lemma~\ref{lemma:St_bound}]
Define
\begin{align*}
\Sbb_t^1&=\sum_{k=0}^{t-1}\Bb_t.
\end{align*}
Note that $\Sbb_t^1$ satisfies
\begin{align*}
\Sbb_t^1=\cB\circ\Sbb_{t-1}+\Bb_0,
\end{align*}
so according to Lemma~\ref{lemma:linear_operator_properties} part b, $\Sbb_t^1$ can be bounded by
\begin{align*}
\Sbb_t^1&\preceq\tilde\cB\circ\Sbb_{t-1}^1+\psi\delta^2\tr(\Hb\Sbb_{t-1}^1)\Hb+\Bb_0\\
&\preceq\sum_{k=0}^{t-1}\tilde\cB^k\circ(\psi\delta^2\tr(\Hb\Sbb_{t-1-k}^1)\Hb+\Bb_0)\\
&\preceq\sum_{k=0}^{t-1}\tilde\cB^k\circ(\psi\delta^2\tr(\Hb\Sbb_t^1)\Hb+\Bb_0)\\
&=\psi\delta^2\tr(\Hb\Sbb_t^1)\sum_{k=0}^{t-1}(\Ib-\delta\Hb)^k\Hb(\Ib-\delta\Hb)^k+\sum_{k=0}^{t-1}(\Ib-\delta\Hb)^k\Bb_0(\Ib-\delta\Hb)^k,
\end{align*}
where the second inequality holds by recursively applying the first inequality, and the third inequality holds because $\Sbb_{t-1-k}^1\preceq\Sbb_t^1$. Taking the inner produce on both sides of the inequality, we have
\begin{align*}
\tr(\Hb\Sbb_t^1)&\le\psi\delta^2\tr(\Hb\Sbb_t^1)\sum_{k=0}^{t-1}\tr(\Hb^2(\Ib-\delta\Hb)^{2k})+\sum_{k=0}^{t-1}\tr(\Bb_0\Hb(\Ib-\delta\Hb)^{2k})\\
&\le\psi\delta^2\tr(\Hb\Sbb_t^1)\sum_{k=0}^{t-1}\tr(\Hb^2(\Ib-\delta\Hb)^k)+\sum_{k=0}^{t-1}\tr(\Bb_0\Hb(\Ib-\delta\Hb)^k)\\
&\le\psi\delta^2\tr(\Hb\Sbb_t^1)\sum_{k=0}^{\infty}\tr(\Hb^2(\Ib-\delta\Hb)^k)+\sum_{k=0}^{t-1}\tr(\Bb_0\Hb(\Ib-\delta\Hb)^k)\\
&=\psi\delta\tr(\Hb)\tr(\Hb\Sbb_t^1)+\delta^{-1}\sum_{i=1}^d\eta_{0, i}^2(1-(1-\delta\lambda_i)^t),
\end{align*}
where the second inequality holds because $(\Ib-\delta\Hb)^{2k}\preceq(\Ib-\delta\Hb)^k$, and the third inequality holds because positive terms $\tr(\Hb^2(\Ib-\delta\Hb)^k)$ for $k\ge t$ are added. Rearranging terms, we have
\begin{align*}
\tr(\Hb\Sbb_t^1)\le\frac{1}{\delta(1-\psi\delta\tr(\Hb))}\sum_{i=1}^d\eta_{0, i}^2[1-(1-\delta\lambda_i)^t].
\end{align*}
\end{proof}

\section{Properties of Centered Iterates and Linear Operators on Matrices}\label{section:linear_operator_property}

\begin{lemma}\label{lemma:linear_operator_properties}
The linear operators on matrix enjoy the following properties:
\begin{itemize}[leftmargin=*]
\item[a.] $\cM$, $\tilde\cM$, $\cB$, and $\tilde\cB$ are all PSD operators, i.e., for any PSD matrix $\Ab$, we have that $\cM\circ\Ab$, $\tilde\cM\circ\Ab$, $\cB\circ\Ab$, and $\tilde\cB\circ\Ab$ are all PSD matrices.
\item[b.] $\cB-\tilde\cB=\delta^2(\cM-\tilde\cM)$ is also a PSD operator, which is bounded by
\begin{align*}
\beta\delta^2\tr(\Hb\Ab)\Hb\preceq(\cB-\tilde\cB)\circ\Ab=\delta^2(\cM-\tilde\cM)\circ\Ab\preceq\delta^2\cM\circ\Ab\preceq\psi\delta^2\tr(\Hb\Ab)\Hb.
\end{align*}
\end{itemize}
\end{lemma}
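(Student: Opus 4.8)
The plan is to handle the two parts separately, reducing everything to congruence (quadratic-form) arguments together with the two fourth-moment assumptions. For part (a), I would verify PSD-ness operator by operator. For the deterministic operators, recall $\tilde\cM\circ\Ab=\Hb\Ab\Hb$ and $\tilde\cB\circ\Ab=(\Ib-\delta\Hb)\Ab(\Ib-\delta\Hb)$; using the symmetry of $\Hb$ and of $\Ib-\delta\Hb$, for any $\vb$ we have $\vb^\top(\Hb\Ab\Hb)\vb=(\Hb\vb)^\top\Ab(\Hb\vb)\ge0$ and $\vb^\top\big((\Ib-\delta\Hb)\Ab(\Ib-\delta\Hb)\big)\vb=\big((\Ib-\delta\Hb)\vb\big)^\top\Ab\big((\Ib-\delta\Hb)\vb\big)\ge0$ whenever $\Ab\succeq\zero$. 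For the stochastic operators, $\cM\circ\Ab=\EE[\xb\xb^\top\Ab\xb\xb^\top]$ gives $\vb^\top(\cM\circ\Ab)\vb=\EE[(\xb^\top\vb)^2(\xb^\top\Ab\xb)]\ge0$, and $\cB\circ\Ab$ is an expectation of congruence transforms of $\Ab$ by the symmetric matrix $\Ib-\delta\xb\xb^\top$, hence PSD for each realization; since the PSD cone is closed under expectation, $\cB\circ\Ab\succeq\zero$ as well.

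For part (b), the first step is to expand both operators explicitly using $\EE[\xb\xb^\top]=\Hb$:
\begin{align*}
\cB\circ\Ab&=\EE[(\Ib-\delta\xb\xb^\top)\Ab(\Ib-\delta\xb\xb^\top)]=\Ab-\delta\Hb\Ab-\delta\Ab\Hb+\delta^2\cM\circ\Ab,\\
\tilde\cB\circ\Ab&=(\Ib-\delta\Hb)\Ab(\Ib-\delta\Hb)=\Ab-\delta\Hb\Ab-\delta\Ab\Hb+\delta^2\tilde\cM\circ\Ab.
\end{align*}
The key observation is that the order-$\delta^0$ and order-$\delta^1$ terms are identical, so subtracting collapses the difference to a pure fourth-moment term, yielding the claimed identity $(\cB-\tilde\cB)\circ\Ab=\delta^2(\cM-\tilde\cM)\circ\Ab$ for every $\Ab$.

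With this identity in hand, the three inequalities follow directly. The lower bound $\beta\delta^2\tr(\Hb\Ab)\Hb\preceq(\cB-\tilde\cB)\circ\Ab$ is exactly Assumption~\ref{assumption:fourth_lower} scaled by $\delta^2$, after recording $\tilde\cM\circ\Ab=\Hb\Ab\Hb$ so that $(\cM-\tilde\cM)\circ\Ab=\cM\circ\Ab-\Hb\Ab\Hb$. The middle inequality $\delta^2(\cM-\tilde\cM)\circ\Ab\preceq\delta^2\cM\circ\Ab$ is equivalent to $\tilde\cM\circ\Ab=\Hb\Ab\Hb\succeq\zero$, which is the PSD-ness of $\tilde\cM$ from part (a). The final inequality $\delta^2\cM\circ\Ab\preceq\psi\delta^2\tr(\Hb\Ab)\Hb$ is Assumption~\ref{assumption:fourth} scaled by $\delta^2$. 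Finally, since $\beta>0$ and $\tr(\Hb\Ab)\Hb\succeq\zero$, the lower bound also certifies that $\cB-\tilde\cB$ is itself a PSD operator.

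I do not anticipate a genuine obstacle: the content is a chain of elementary verifications. The one place to be careful is the expansion in part (b) — one must confirm that the linear-in-$\delta$ cross terms produced by $\cB$ and by $\tilde\cB$ coincide, which is precisely where $\EE[\xb\xb^\top]=\Hb$ enters and what makes the difference reduce to $\delta^2(\cM-\tilde\cM)$. Once that cancellation is recorded, everything reduces to reading off the two fourth-moment assumptions and the PSD-ness established in part (a).
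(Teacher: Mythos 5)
Your proof is correct and follows essentially the same route as the paper's: quadratic-form/congruence arguments for part (a), cancellation of the zeroth- and first-order terms in $\delta$ to obtain the identity in part (b), and the two fourth-moment assumptions for the upper and lower bounds. The only cosmetic difference is that the paper certifies PSD-ness of $\cM-\tilde\cM$ unconditionally by writing it as $\EE[(\xb\xb^\top-\Hb)\otimes(\xb\xb^\top-\Hb)]$, whereas you deduce it from the lower bound of Assumption~\ref{assumption:fourth_lower} (using $\beta>0$); both are valid.
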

\begin{proof}
\begin{itemize}[leftmargin=*]
\item[a.] Let $\Ab$ denote any PSD matrix, and $\vb$ be any vector. We then have
\begin{align*}
\vb^\top(\cM\circ\Ab)\vb=\EE[(\vb^\top\xb)^2(\xb^\top\Ab\xb)]\ge0,
\end{align*}
where the equality holds because $(\vb^\top\xb)^2\ge0$ and $\xb^\top\Ab\xb\ge0$. Furthermore,
\begin{align*}
\vb^\top(\cB\circ\Ab)\vb=\EE[\vb^\top(\Ib-\delta\xb\xb^\top)\Ab(\Ib-\delta\Hb\xb\xb^\top)\vb]=\EE[(\vb-\delta(\vb^\top\xb)\xb)^\top\Ab(\vb-\delta(\vb^\top\xb)\xb)]\ge0,
\end{align*}
where the inequality holds because for any vector $\ub$ ($\ub=\vb-\delta(\vb^\top\xb)\xb$ in this case), we have $\ub^\top\Ab\ub\ge0$. Finally, $\tilde\cM$ and $\tilde\cB$ are PSD operators because any matrix similar to a PSD matrix is also a PSD matrix.
\item[b.] The difference between $\cB$ and $\tilde\cB$ is
\begin{align*}
\cB-\tilde\cB&=\EE[(\Ib-\delta\xb\xb^\top)\otimes(\Ib-\delta\xb\xb^\top)]-(\Ib-\delta\Hb)\otimes(\Ib-\delta\Hb)\\
&=(\Ib\otimes\Ib-\delta\Hb\otimes\Ib-\delta\Ib\otimes\Hb+\delta^2\cM)-(\Ib\otimes\Ib-\delta\Hb\otimes\Ib-\delta\Ib\otimes\Hb+\delta^2\tilde\cM)\\
&=\delta^2(\cM-\tilde\cM).
\end{align*}
Furthermore,
\begin{align*}
\cM-\tilde\cM=\EE[(\xb\xb^\top-\Hb)\otimes(\xb\xb^\top-\Hb)],
\end{align*}
so $\cM-\tilde\cM$ is a PSD operator. The upper bound follows directly from the fact that $\tilde\cM$ is PSD and Assumption~\ref{assumption:fourth}.
\end{itemize}
\end{proof}

Lemma \ref{lemma:excess_risk_equivalence} and Lemma \ref{lemma:vector_decomp} are similar to their counterparts in \citet{zou2021benign}, and are presented here for completeness.

\begin{lemma}\label{lemma:excess_risk_equivalence}
The excess risk is equivalent to
\begin{align*}
L(\owb_N)-L(\wb_*)=\frac12\langle\Hb, \obmeta_N\otimes\obmeta_N\rangle.
\end{align*}
\end{lemma}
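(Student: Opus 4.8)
The plan is to prove the identity for an arbitrary fixed weight vector $\wb$, namely that $L(\wb)-L(\wb_*)=\tfrac12\langle\Hb,(\wb-\wb_*)\otimes(\wb-\wb_*)\rangle$, and then specialize to $\wb=\owb_N$, using $\obmeta_N=\owb_N-\wb_*$. Since the expectation defining $L$ is taken over a fresh sample $(\xb,y)\sim\cD$, this is a deterministic statement in $\wb$ that remains valid when $\wb$ is replaced by the (training-data-dependent) output $\owb_N$. First I would set $\ub\coloneqq\wb-\wb_*$ and introduce the label noise $\xi\coloneqq y-\langle\wb_*,\xb\rangle$, so that the prediction residual splits as $y-\langle\wb,\xb\rangle=\xi-\langle\ub,\xb\rangle$.

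Expanding the square and taking expectations over $(\xb,y)\sim\cD$ yields three terms:
\begin{align*}
\EE[(y-\langle\wb,\xb\rangle)^2]=\EE[\xi^2]-2\EE[\xi\langle\ub,\xb\rangle]+\EE[\langle\ub,\xb\rangle^2].
\end{align*}
The first term is $\EE[\xi^2]=2L(\wb_*)$ by definition of the risk at $\wb_*$. For the quadratic term I would use $\Hb=\EE[\xb\xb^\top]$ (Assumption~\ref{assumption:second}) to write $\EE[\langle\ub,\xb\rangle^2]=\ub^\top\Hb\ub=\langle\Hb,\ub\otimes\ub\rangle$, identifying $\ub\otimes\ub$ with the symmetric rank-one matrix $\ub\ub^\top$ and using $\langle\Ab,\Bb\rangle=\tr(\Ab\Bb^\top)$.

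The key step is that the cross term vanishes: $\EE[\xi\langle\ub,\xb\rangle]=\langle\ub,\EE[\xi\xb]\rangle$, and by the first-order optimality condition \eqref{eq:noise_zero_mean} we have $\EE[\xi\xb]=\EE[\xb y]-\Hb\wb_*=\zero$. Substituting the three contributions gives $2L(\wb)=2L(\wb_*)+\langle\Hb,\ub\otimes\ub\rangle$, which rearranges to the desired identity; setting $\wb=\owb_N$ completes the proof. I do not anticipate a genuine obstacle, as the computation is elementary; the only points requiring care are that the expectation in $L$ is over the data distribution alone (so the claim holds conditionally on the training randomness), and that the cancellation of the cross term relies entirely on $\wb_*$ being the population minimizer, which is exactly what \eqref{eq:noise_zero_mean} encodes.
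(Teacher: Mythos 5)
Your proposal is correct and takes essentially the same approach as the paper: both reduce the claim to an elementary expansion of the quadratic risk and cancel the cross term via the first-order optimality condition \eqref{eq:noise_zero_mean}. The only cosmetic difference is that you expand $(\xi-\langle\ub,\xb\rangle)^2$ term by term, while the paper factors the difference of squares $(y-\langle\owb_N,\xb\rangle)^2-(y-\langle\wb_*,\xb\rangle)^2$ directly.
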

\begin{proof}
By definition of the risk function, we have
\begin{align*}
L(\owb_N)-L(\owb_*)&=\frac12\EE_{(\xb, y)\sim\cD}[(y-\langle\owb_N, \xb\rangle)^2-(y-\langle\wb_*, \xb\rangle)^2]\\
&=\frac12\EE_{(\xb, y)\sim\cD}[(\wb_*-\owb_N)^\top(\xb\cdot 2y-\xb\xb^\top(\owb_N+\wb_*))]\\
&=\frac12(\wb_*-\owb_N)^\top(2\Hb\wb_*-\Hb(\owb_N+\wb_*))\\
&=\frac12\langle\Hb, \obmeta_N\otimes\obmeta_N\rangle,
\end{align*}
where the third equality holds due to \eqref{eq:noise_zero_mean} and the definition of $\Hb$.
\end{proof}

\begin{lemma}\label{lemma:vector_decomp}
For any $t>0$, we have
\begin{align*}
\bmeta_t=\bmeta_t^{\mathrm{bias}}+\bmeta_t^{\mathrm{var}}.
\end{align*}
We thus have
\begin{align*}
\obmeta_N=\obmeta_N^{\mathrm{bias}}+\obmeta_N^{\mathrm{var}}.
\end{align*}
\end{lemma}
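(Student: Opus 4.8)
The plan is to prove the pathwise (sample-by-sample) identity $\bmeta_t=\bmeta_t^{\mathrm{bias}}+\bmeta_t^{\mathrm{var}}$ by induction on $t$, and then obtain the statement for the averaged vectors by exploiting that the EMA weighting in \eqref{eq:weighted_sum} is a \emph{linear} map on sequences. Note that this is a deterministic identity for each realization of the samples $\{(\xb_k,y_k)\}$, so no expectation or noise assumption enters; the label-noise assumptions are only used later to control moments.

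First I would write the centered SGD recursion. Subtracting $\wb_*$ from the update $\wb_t=\wb_{t-1}+\delta(y_t-\langle\wb_{t-1},\xb_t\rangle)\xb_t$ and using $\xi_t=y_t-\langle\wb_*,\xb_t\rangle$, so that $y_t-\langle\wb_{t-1},\xb_t\rangle=\xi_t-\langle\bmeta_{t-1},\xb_t\rangle$, gives the single recursion $\bmeta_t=(\Ib-\delta\xb_t\xb_t^\top)\bmeta_{t-1}+\delta\xi_t\xb_t$. Next I would add the two defining recursions for the bias and variance residuals: since $\bmeta_t^{\mathrm{bias}}=(\Ib-\delta\xb_t\xb_t^\top)\bmeta_{t-1}^{\mathrm{bias}}$ and $\bmeta_t^{\mathrm{var}}=(\Ib-\delta\xb_t\xb_t^\top)\bmeta_{t-1}^{\mathrm{var}}+\delta\xi_t\xb_t$, their sum satisfies
\begin{align*}
\bmeta_t^{\mathrm{bias}}+\bmeta_t^{\mathrm{var}}=(\Ib-\delta\xb_t\xb_t^\top)\big(\bmeta_{t-1}^{\mathrm{bias}}+\bmeta_{t-1}^{\mathrm{var}}\big)+\delta\xi_t\xb_t,
\end{align*}
which is exactly the recursion obeyed by $\bmeta_t$. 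For the base case, $\bmeta_0^{\mathrm{bias}}+\bmeta_0^{\mathrm{var}}=\bmeta_0+\zero=\bmeta_0$. Since both $\bmeta_t$ and $\bmeta_t^{\mathrm{bias}}+\bmeta_t^{\mathrm{var}}$ start from the same value and satisfy the same first-order recursion driven by the same operator $\Ib-\delta\xb_t\xb_t^\top$ and the same forcing term $\delta\xi_t\xb_t$, induction on $t$ yields $\bmeta_t=\bmeta_t^{\mathrm{bias}}+\bmeta_t^{\mathrm{var}}$ for every $t\ge0$.

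Finally, I would lift this to the EMA. By definition $\obmeta_N$, $\obmeta_N^{\mathrm{bias}}$, and $\obmeta_N^{\mathrm{var}}$ are formed from their respective sequences through the identical weighting $\alpha^N(\cdot)_0+(1-\alpha)\sum_{t=0}^{N-1}\alpha^{N-1-t}(\cdot)_t$ of \eqref{eq:weighted_sum}. As this map is linear in its input sequence, applying it to the already-established identity $\bmeta_t=\bmeta_t^{\mathrm{bias}}+\bmeta_t^{\mathrm{var}}$ immediately gives $\obmeta_N=\obmeta_N^{\mathrm{bias}}+\obmeta_N^{\mathrm{var}}$. There is no genuine obstacle here; the only point requiring care is the algebraic rewriting of the SGD update in terms of $\xi_t$ and $\bmeta_{t-1}$, after which the argument is a matching of two identical affine recursions followed by linearity of averaging.
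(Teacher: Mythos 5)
Your proof is correct and follows essentially the same route as the paper: derive the centered recursion $\bmeta_t=(\Ib-\delta\xb_t\xb_t^\top)\bmeta_{t-1}+\delta\xi_t\xb_t$, observe that $\bmeta_t^{\mathrm{bias}}+\bmeta_t^{\mathrm{var}}$ obeys the identical affine recursion with the same initialization, conclude by induction, and pass to the EMA by linearity of the weighting in \eqref{eq:weighted_sum}. The only cosmetic difference is that the paper substitutes the induction hypothesis directly into the expanded update rather than phrasing it as two sequences satisfying the same recursion, which changes nothing of substance.
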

\begin{proof}
We prove the lemma by induction. When $t=0$, the lemma holds trivially. Suppose that the lemma holds for $t-1$, then we have
\begin{align*}
\bmeta_t&=\wb_t-\wb_*=(\wb_{t-1}-\wb_*)+\delta(y_t-\langle\wb_{t-1}, \xb_t\rangle)\xb_t\\
&=(\wb_{t-1}-\wb_*)+\delta(\xi_t-\langle\wb_{t-1}-\wb_*, \xb_t\rangle)\xb_t\\
&=(\Ib-\delta\xb_t\xb_t^\top)\bmeta_{t-1}+\delta\xi_t\xb_t\\
&=(\Ib-\delta\xb_t\xb_t^\top)(\bmeta_t^{\mathrm{bias}}+\bmeta_t^{\mathrm{var}})+\delta\xi_t\xb_t\\
&=[(\Ib-\delta\xb_t\xb_t^\top)\bmeta_t^{\mathrm{bias}}]+[(\Ib-\delta\xb_t\xb_t^\top)\bmeta_t^{\mathrm{var}}+\delta\xi_t\xb_t]\\
&=\bmeta_t^{\mathrm{bias}}+\bmeta_t^{\mathrm{var}},
\end{align*}
where the fifth equality holds due to the induction hypothesis. Therefore, the lemma holds for $t$. Combining all the above, the lemma is proved for all $t\ge0$.
\end{proof}

\begin{lemma}\label{lemma:telescope}
The second moment of the residual vectors can be decomposed as
\begin{align*}
&\EE[\obmeta_N^{\mathrm{bias}}\otimes\obmeta_N^{\mathrm{bias}}]\\
&=\bigg(\alpha^N\Ib+(1-\alpha)\sum_{k=0}^{N-1}\alpha^{N-1-k}(\Ib-\delta\Hb)^k\bigg)\Bb_0\bigg(\alpha^N\Ib+(1-\alpha)\sum_{k=0}^{N-1}\alpha^{N-1-k}(\Ib-\delta\Hb)^k\bigg)\\
&\qquad+(1-\alpha)^2\sum_{t=0}^{N-1}\bigg(\sum_{k=0}^{N-2-t}\alpha^{N-2-t-k}(\Ib-\delta\Hb)^k\bigg)((\cB-\tilde\cB)\circ\Bb_t)\bigg(\sum_{k=0}^{N-2-t}\alpha^{N-2-t-k}(\Ib-\delta\Hb)^k\bigg),\\
&\EE[\obmeta_N^{\mathrm{var}}\otimes\obmeta_N^{\mathrm{var}}]\\
&=(1-\alpha)^2\sum_{t=0}^{N-1}\bigg(\sum_{k=0}^{N-2-t}\alpha^{N-2-t-k}(\Ib-\delta\Hb)^k\bigg)((\cB-\tilde\cB)\circ\Cb_t+\delta^2\bSigma)\bigg(\sum_{k=0}^{N-2-t}\alpha^{N-2-t-k}(\Ib-\delta\Hb)^k\bigg).
\end{align*}
\end{lemma}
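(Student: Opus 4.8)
The plan is to expand the averaged residual as a weighted sum of the per-iterate residuals, compute its second moment by pairing indices, and then collapse the resulting double sum into the stated form through an algebraic ``completing-the-square'' identity followed by a telescoping cancellation. I would carry this out for the bias part in detail; the variance part is identical up to two bookkeeping changes that I flag at the end.

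First I would write $\obmeta_N^{\mathrm{bias}}=\sum_{t=0}^{N-1}a_t\bmeta_t^{\mathrm{bias}}$, where merging the $\alpha^N\bmeta_0$ term of \eqref{eq:weighted_sum} with the $t=0$ summand gives $a_0=\alpha^N+(1-\alpha)\alpha^{N-1}=\alpha^{N-1}$ and $a_t=(1-\alpha)\alpha^{N-1-t}$ for $t\ge1$. Using $\EE[\bmeta_t^{\mathrm{bias}}\otimes\bmeta_k^{\mathrm{bias}}]=\Bb_t(\Ib-\delta\Hb)^{k-t}$ for $k\ge t$ (which follows from \eqref{eq:iterate_first_moment} and the tower property, since $\Ib-\delta\Hb$ is symmetric) together with its transpose for $k<t$, and grouping the off-diagonal pairs by their smaller index $t$, I obtain
\begin{align*}
\EE[\obmeta_N^{\mathrm{bias}}\otimes\obmeta_N^{\mathrm{bias}}]=\sum_{t=0}^{N-1}\bigg[a_t^2\Bb_t+\sum_{k=t+1}^{N-1}a_ta_k\big((\Ib-\delta\Hb)^{k-t}\Bb_t+\Bb_t(\Ib-\delta\Hb)^{k-t}\big)\bigg].
\end{align*}

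The crux is a purely algebraic identity valid for arbitrary coefficients. Setting $P_t\coloneqq\sum_{k=t}^{N-1}a_k(\Ib-\delta\Hb)^{k-t}$, so that $P_t=a_t\Ib+(\Ib-\delta\Hb)P_{t+1}$ and $P_N=\zero$, I would verify by direct expansion (writing $R=(\Ib-\delta\Hb)P_{t+1}$ and using that $P_{t+1}$ commutes with $\Ib-\delta\Hb$, both being functions of $\Hb$) that the $t$-th bracketed summand equals $P_t\Bb_tP_t-P_{t+1}(\tilde\cB\circ\Bb_t)P_{t+1}$; the $R\Bb_tR$ terms cancel, leaving $a_t^2\Bb_t+a_t(R\Bb_t+\Bb_tR)$ on both sides. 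I then substitute $\tilde\cB\circ\Bb_t=\Bb_{t+1}-(\cB-\tilde\cB)\circ\Bb_t$, which follows from $\Bb_{t+1}=\cB\circ\Bb_t$ in \eqref{eq:iterate_Bt}. Summing over $t$, the terms $P_t\Bb_tP_t-P_{t+1}\Bb_{t+1}P_{t+1}$ telescope to $P_0\Bb_0P_0$ (the boundary term $P_N\Bb_NP_N$ vanishes because $P_N=\zero$), and the remainder is $\sum_{t=0}^{N-1}P_{t+1}((\cB-\tilde\cB)\circ\Bb_t)P_{t+1}$. A short index computation identifies $P_0=\alpha^N\Ib+(1-\alpha)\sum_{k=0}^{N-1}\alpha^{N-1-k}(\Ib-\delta\Hb)^k$ and, after the shift $j=k-t-1$, $P_{t+1}=(1-\alpha)\sum_{j=0}^{N-2-t}\alpha^{N-2-t-j}(\Ib-\delta\Hb)^j$, which are exactly the two conjugating factors in the statement.

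Finally, the variance identity is obtained by repeating the argument verbatim with $\Cb_t$ in place of $\Bb_t$, using $\EE[\bmeta_t^{\mathrm{var}}\otimes\bmeta_k^{\mathrm{var}}]=\Cb_t(\Ib-\delta\Hb)^{k-t}$ and the recursion $\Cb_{t+1}=\cB\circ\Cb_t+\delta^2\bSigma$, whence $\tilde\cB\circ\Cb_t=\Cb_{t+1}-(\cB-\tilde\cB)\circ\Cb_t-\delta^2\bSigma$. Two things change: the telescoped boundary term becomes $P_0\Cb_0P_0=\zero$ since $\Cb_0=\zero$, which removes the leading bias-type term, and the extra $-\delta^2\bSigma$ contributes $+\sum_tP_{t+1}(\delta^2\bSigma)P_{t+1}$, producing the $\delta^2\bSigma$ inside the bracket. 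I expect the only delicate point to be the coefficient bookkeeping—chiefly the merge giving $a_0=\alpha^{N-1}$ and the index shift yielding the $N-2-t$ upper limit—whereas the completing-the-square identity and the telescoping are mechanical once set up.
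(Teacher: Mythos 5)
Your proposal is correct and follows essentially the same route as the paper: expand the averaged residual's second moment via $\EE[\bmeta_t\otimes\bmeta_k]=\Bb_t(\Ib-\delta\Hb)^{k-t}$, rewrite each $t$-block as a difference of conjugated quadratic forms (your $P_t\Bb_tP_t-P_{t+1}(\tilde\cB\circ\Bb_t)P_{t+1}$ identity is exactly the paper's completing-the-square step, stated in the recursive form of its ``Overview of Proof Techniques''), and telescope using $\Bb_{t+1}=\cB\circ\Bb_t$ and $\Cb_{t+1}=\cB\circ\Cb_t+\delta^2\bSigma$ with $\Cb_0=\zero$. The only cosmetic difference is that you absorb the $\alpha^N\bmeta_0$ term into the coefficient $a_0=\alpha^{N-1}$ at the outset, whereas the paper carries it separately and recombines the $\Bb_0$ cross terms at the end; both yield the same $P_0$ factor.
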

\begin{proof}
To simplify notations, we omit the superscripts of $\bmeta_t$ and $\obmeta_N$, and denote $\Db_t=\EE[\bmeta_t\otimes\bmeta_t]$. According to the definition of $\obmeta_N$, we have
\begin{align*}
&\EE[\obmeta_N\otimes\obmeta_N]=\EE\bigg[\bigg(\alpha^N\bmeta_0+(1-\alpha)\sum_{t=0}^{N-1}\alpha^{N-1-t}\bmeta_t\bigg)\otimes\bigg(\alpha^N\bmeta_0+(1-\alpha)\sum_{t=0}^{N-1}\alpha^{N-1-t}\bmeta_t\bigg)\bigg]\\
&=\alpha^{2N}\Db_0+(1-\alpha)\sum_{t=0}^{N-1}\alpha^{2N-1-t}\Big[\EE[\bmeta_0\otimes\bmeta_t]+\EE[\bmeta_t\otimes\bmeta_0]\Big]\\
&\qquad+(1-\alpha)^2\sum_{s=0}^{N-1}\sum_{t=0}^{N-1}\alpha^{2N-2-s-t}\EE[\bmeta_s\otimes\bmeta_t]\\
&=\alpha^{2N}\Db_0+(1-\alpha)\sum_{t=0}^{N-1}\alpha^{2N-1-t}[\Db_0(\Ib-\delta\Hb)^t+(\Ib-\delta\Hb)^t\Db_0]\\
&\qquad+(1-\alpha)^2\sum_{t=0}^{N-1}\bigg[\alpha^{2N-2-2t}\Db_t+\sum_{k=1}^{N-1-t}\alpha^{2N-2-2t-k}[\Db_t(\Ib-\delta\Hb)^k+(\Ib-\delta\Hb)^k\Db_t]\bigg]\\
&=\bigg(\alpha^N\Ib+(1-\alpha)\sum_{k=0}^{N-1}\alpha^{N-1-k}(\Ib-\delta\Hb)^k\bigg)\Db_0\bigg(\alpha^N\Ib+(1-\alpha)\sum_{k=0}^{N-1}\alpha^{N-1-k}(\Ib-\delta\Hb)^k\bigg)\\
&\qquad-(1-\alpha)^2\bigg(\sum_{k=0}^{N-1}\alpha^{N-1-k}(\Ib-\delta\Hb)^k\bigg)\Db_0\bigg(\sum_{k=0}^{N-1}\alpha^{N-1-k}(\Ib-\delta\Hb)^k\bigg)\\
&\qquad+(1-\alpha)^2\sum_{t=0}^{N-1}\bigg[\bigg(\sum_{k=0}^{N-1-t}\alpha^{N-1-t-k}(\Ib-\delta\Hb)^k\bigg)\Db_t\bigg(\sum_{k=0}^{N-1-t}\alpha^{N-1-t-k}(\Ib-\delta\Hb)^k\bigg)\\
&\qquad-\bigg(\sum_{k=1}^{N-1-t}\alpha^{N-1-t-k}(\Ib-\delta\Hb)^k\bigg)\Db_t\bigg(\sum_{k=1}^{N-1-t}\alpha^{N-1-t-k}(\Ib-\delta\Hb)^k\bigg)\bigg]\\
&=\bigg(\alpha^N\Ib+(1-\alpha)\sum_{k=0}^{N-1}\alpha^{N-1-k}(\Ib-\delta\Hb)^k\bigg)\Db_0\bigg(\alpha^N\Ib+(1-\alpha)\sum_{k=0}^{N-1}\alpha^{N-1-k}(\Ib-\delta\Hb)^k\bigg)\\
&\qquad+(1-\alpha)^2\sum_{t=0}^{N-2}\bigg[\bigg(\sum_{k=0}^{N-2-t}\alpha^{N-2-t-k}(\Ib-\delta\Hb)^k\bigg)\Db_{t+1}\bigg(\sum_{k=0}^{N-2-t}\alpha^{N-2-t-k}(\Ib-\delta\Hb)^k\bigg)\\
&\qquad-\bigg(\sum_{k=1}^{N-1-t}\alpha^{N-1-t-k}(\Ib-\delta\Hb)^k\bigg)\Db_t\bigg(\sum_{k=1}^{N-1-t}\alpha^{N-1-t-k}(\Ib-\delta\Hb)^k\bigg)\bigg]\\
&=\bigg(\alpha^N\Ib+(1-\alpha)\sum_{k=0}^{N-1}\alpha^{N-1-k}(\Ib-\delta\Hb)^k\bigg)\Db_0\bigg(\alpha^N\Ib+(1-\alpha)\sum_{k=0}^{N-1}\alpha^{N-1-k}(\Ib-\delta\Hb)^k\bigg)\\
&\qquad+(1-\alpha)^2\sum_{t=0}^{N-1}\bigg(\sum_{k=0}^{N-2-t}\alpha^{N-2-t-k}(\Ib-\delta\Hb)^k\bigg)(\Db_{t+1}-\tilde\cB\circ\Db_t)\bigg(\sum_{k=0}^{N-2-t}\alpha^{N-2-t-k}(\Ib-\delta\Hb)^k\bigg),
\end{align*}
where the third inequality holds because $\EE[\bmeta_{t+k}\otimes\bmeta_t]=\EE[\EE[\bmeta_{t+k}\otimes\bmeta_t|\cF_t]]=\EE[(\Ib-\delta\Hb)^k(\bmeta_t\otimes\bmeta_t)]=(\Ib-\delta\Hb)^k\Db_t$, and the fifth equality holds due to telescope sum. Specifically, for the bias residual, we have $\Bb_{t+1}=\cB\circ\Bb_t$, so
\begin{align*}
&\EE[\obmeta_N^{\mathrm{bias}}\otimes\obmeta_N^{\mathrm{bias}}]\\
&=\bigg(\alpha^N\Ib+(1-\alpha)\sum_{k=0}^{N-1}\alpha^{N-1-k}(\Ib-\delta\Hb)^k\bigg)\Bb_0\bigg(\alpha^N\Ib+(1-\alpha)\sum_{k=0}^{N-1}\alpha^{N-1-k}(\Ib-\delta\Hb)^k\bigg)\\
&\qquad+(1-\alpha)^2\sum_{t=0}^{N-1}\bigg(\sum_{k=0}^{N-2-t}\alpha^{N-2-t-k}(\Ib-\delta\Hb)^k\bigg)((\cB-\tilde\cB)\circ\Bb_t)\bigg(\sum_{k=0}^{N-2-t}\alpha^{N-2-t-k}(\Ib-\delta\Hb)^k\bigg).
\end{align*}
For the variance residual, we have $\Cb_{t+1}=\cB\circ\Cb_t+\delta^2\bSigma$ and $\Cb_0=\zero$, so
\begin{align*}
&\EE[\obmeta_N^{\mathrm{var}}\otimes\obmeta_N^{\mathrm{var}}]\\
&=(1-\alpha)^2\sum_{t=0}^{N-1}\bigg(\sum_{k=0}^{N-2-t}\alpha^{N-2-t-k}(\Ib-\delta\Hb)^k\bigg)((\cB-\tilde\cB)\circ\Cb_t+\delta^2\bSigma)\bigg(\sum_{k=0}^{N-2-t}\alpha^{N-2-t-k}(\Ib-\delta\Hb)^k\bigg).
\end{align*}
\end{proof}

\bibliography{refs}

\begin{thebibliography}{36}
\expandafter\ifx\csname natexlab\endcsname\relax\def\natexlab#1{#1}\fi
\expandafter\ifx\csname url\endcsname\relax
  \def\url#1{\texttt{#1}}\fi
\expandafter\ifx\csname urlprefix\endcsname\relax\def\urlprefix{URL }\fi

\bibitem[{Ahn and Cutkosky(2024)}]{ahn2024adam}
\textsc{Ahn, K.} and \textsc{Cutkosky, A.} (2024).
\newblock Adam with model exponential moving average is effective for nonconvex optimization.
\newblock \textit{arXiv preprint arXiv:2405.18199} .

\bibitem[{Bach and Moulines(2013)}]{bach2013non}
\textsc{Bach, F.} and \textsc{Moulines, E.} (2013).
\newblock Non-strongly-convex smooth stochastic approximation with convergence rate o (1/n).
\newblock \textit{Advances in neural information processing systems} \textbf{26}.

\bibitem[{Balaji et~al.(2022)Balaji, Nah, Huang, Vahdat, Song, Kreis and Liu}]{balaji2022ediffi}
\textsc{Balaji, Y.}, \textsc{Nah, S.}, \textsc{Huang, X.}, \textsc{Vahdat, A.}, \textsc{Song, J.}, \textsc{Kreis, K.} and \textsc{Liu, M.} (2022).
\newblock Ediffi: Text-to-image diffusion models with an ensemble of expert denoisers. arxiv 2022.
\newblock \textit{arXiv preprint arXiv:2211.01324} .

\bibitem[{Berthier et~al.(2020)Berthier, Bach and Gaillard}]{berthier2020tight}
\textsc{Berthier, R.}, \textsc{Bach, F.} and \textsc{Gaillard, P.} (2020).
\newblock Tight nonparametric convergence rates for stochastic gradient descent under the noiseless linear model.
\newblock \textit{Advances in Neural Information Processing Systems} \textbf{33} 2576--2586.

\bibitem[{Block et~al.(2023)Block, Foster, Krishnamurthy, Simchowitz and Zhang}]{block2023butterfly}
\textsc{Block, A.}, \textsc{Foster, D.~J.}, \textsc{Krishnamurthy, A.}, \textsc{Simchowitz, M.} and \textsc{Zhang, C.} (2023).
\newblock Butterfly effects of sgd noise: Error amplification in behavior cloning and autoregression.
\newblock \textit{arXiv preprint arXiv:2310.11428} .

\bibitem[{Busbridge et~al.(2024)Busbridge, Ramapuram, Ablin, Likhomanenko, Dhekane, Suau~Cuadros and Webb}]{busbridge2024scale}
\textsc{Busbridge, D.}, \textsc{Ramapuram, J.}, \textsc{Ablin, P.}, \textsc{Likhomanenko, T.}, \textsc{Dhekane, E.~G.}, \textsc{Suau~Cuadros, X.} and \textsc{Webb, R.} (2024).
\newblock How to scale your ema.
\newblock \textit{Advances in Neural Information Processing Systems} \textbf{36}.

\bibitem[{Caponnetto and De~Vito(2007)}]{caponnetto2007optimal}
\textsc{Caponnetto, A.} and \textsc{De~Vito, E.} (2007).
\newblock Optimal rates for the regularized least-squares algorithm.
\newblock \textit{Foundations of Computational Mathematics} \textbf{7} 331--368.

\bibitem[{Defazio(2020)}]{defazio2020momentum}
\textsc{Defazio, A.} (2020).
\newblock Momentum via primal averaging: Theoretical insights and learning rate schedules for non-convex optimization.
\newblock \textit{arXiv preprint arXiv:2010.00406} .

\bibitem[{D{\'e}fossez and Bach(2015)}]{defossez2015averaged}
\textsc{D{\'e}fossez, A.} and \textsc{Bach, F.} (2015).
\newblock Averaged least-mean-squares: Bias-variance trade-offs and optimal sampling distributions.
\newblock In \textit{Artificial Intelligence and Statistics}. PMLR.

\bibitem[{Dhariwal and Nichol(2021)}]{dhariwal2021diffusion}
\textsc{Dhariwal, P.} and \textsc{Nichol, A.} (2021).
\newblock Diffusion models beat gans on image synthesis.
\newblock \textit{Advances in neural information processing systems} \textbf{34} 8780--8794.

\bibitem[{Dieuleveut and Bach(2015)}]{dieuleveut2015non}
\textsc{Dieuleveut, A.} and \textsc{Bach, F.} (2015).
\newblock Non-parametric stochastic approximation with large step sizes.
\newblock \textit{The Annals of Statistics} \textbf{44}.

\bibitem[{Dieuleveut et~al.(2017)Dieuleveut, Flammarion and Bach}]{dieulevuet2017harder}
\textsc{Dieuleveut, A.}, \textsc{Flammarion, N.} and \textsc{Bach, F.} (2017).
\newblock Harder, better, faster, stronger convergence rates for least-squares regression.
\newblock \textit{J. Mach. Learn. Res.} \textbf{18} 3520–3570.

\bibitem[{Izmailov et~al.(2018)Izmailov, Podoprikhin, Garipov, Vetrov and Wilson}]{izmailov2018averaging}
\textsc{Izmailov, P.}, \textsc{Podoprikhin, D.}, \textsc{Garipov, T.}, \textsc{Vetrov, D.} and \textsc{Wilson, A.~G.} (2018).
\newblock Averaging weights leads to wider optima and better generalization.
\newblock \textit{arXiv preprint arXiv:1803.05407} .

\bibitem[{Jain et~al.(2018{\natexlab{a}})Jain, Kakade, Kidambi, Netrapalli and Sidford}]{jain2018accelerating}
\textsc{Jain, P.}, \textsc{Kakade, S.~M.}, \textsc{Kidambi, R.}, \textsc{Netrapalli, P.} and \textsc{Sidford, A.} (2018{\natexlab{a}}).
\newblock Accelerating stochastic gradient descent for least squares regression.
\newblock In \textit{Conference On Learning Theory}. PMLR.

\bibitem[{Jain et~al.(2018{\natexlab{b}})Jain, Kakade, Kidambi, Netrapalli and Sidford}]{jain2018parallelizing}
\textsc{Jain, P.}, \textsc{Kakade, S.~M.}, \textsc{Kidambi, R.}, \textsc{Netrapalli, P.} and \textsc{Sidford, A.} (2018{\natexlab{b}}).
\newblock Parallelizing stochastic gradient descent for least squares regression: mini-batching, averaging, and model misspecification.
\newblock \textit{Journal of machine learning research} \textbf{18} 1--42.

\bibitem[{Kang et~al.(2023)Kang, Zhu, Zhang, Park, Shechtman, Paris and Park}]{kang2023scaling}
\textsc{Kang, M.}, \textsc{Zhu, J.-Y.}, \textsc{Zhang, R.}, \textsc{Park, J.}, \textsc{Shechtman, E.}, \textsc{Paris, S.} and \textsc{Park, T.} (2023).
\newblock Scaling up gans for text-to-image synthesis.
\newblock In \textit{Proceedings of the IEEE/CVF Conference on Computer Vision and Pattern Recognition}.

\bibitem[{Karras(2019)}]{karras2019style}
\textsc{Karras, T.} (2019).
\newblock A style-based generator architecture for generative adversarial networks.
\newblock \textit{arXiv preprint arXiv:1812.04948} .

\bibitem[{Karras et~al.(2022)Karras, Aittala, Aila and Laine}]{karras2022elucidating}
\textsc{Karras, T.}, \textsc{Aittala, M.}, \textsc{Aila, T.} and \textsc{Laine, S.} (2022).
\newblock Elucidating the design space of diffusion-based generative models.
\newblock \textit{Advances in neural information processing systems} \textbf{35} 26565--26577.

\bibitem[{Karras et~al.(2024)Karras, Aittala, Lehtinen, Hellsten, Aila and Laine}]{karras2024analyzing}
\textsc{Karras, T.}, \textsc{Aittala, M.}, \textsc{Lehtinen, J.}, \textsc{Hellsten, J.}, \textsc{Aila, T.} and \textsc{Laine, S.} (2024).
\newblock Analyzing and improving the training dynamics of diffusion models.
\newblock In \textit{Proceedings of the IEEE/CVF Conference on Computer Vision and Pattern Recognition}.

\bibitem[{Lakshminarayanan and Szepesvari(2018)}]{lakshminarayanan2018linear}
\textsc{Lakshminarayanan, C.} and \textsc{Szepesvari, C.} (2018).
\newblock Linear stochastic approximation: How far does constant step-size and iterate averaging go?
\newblock In \textit{International conference on artificial intelligence and statistics}. PMLR.

\bibitem[{Li et~al.(2023)Li, Deng, Wu, Zhou and Gu}]{li2023risk}
\textsc{Li, X.}, \textsc{Deng, Y.}, \textsc{Wu, J.}, \textsc{Zhou, D.} and \textsc{Gu, Q.} (2023).
\newblock Risk bounds of accelerated sgd for overparameterized linear regression.
\newblock \textit{arXiv preprint arXiv:2311.14222} .

\bibitem[{Lin et~al.(2024)Lin, Wu, Kakade, Bartlett and Lee}]{lin2024scaling}
\textsc{Lin, L.}, \textsc{Wu, J.}, \textsc{Kakade, S.~M.}, \textsc{Bartlett, P.~L.} and \textsc{Lee, J.~D.} (2024).
\newblock Scaling laws in linear regression: Compute, parameters, and data.
\newblock \textit{arXiv preprint arXiv:2406.08466} .

\bibitem[{Nesterov(2013)}]{nesterov2013introductory}
\textsc{Nesterov, Y.} (2013).
\newblock \textit{Introductory lectures on convex optimization: A basic course}, vol.~87.
\newblock Springer Science \& Business Media.

\bibitem[{Nichol and Dhariwal(2021)}]{nichol2021improved}
\textsc{Nichol, A.~Q.} and \textsc{Dhariwal, P.} (2021).
\newblock Improved denoising diffusion probabilistic models.
\newblock In \textit{International conference on machine learning}. PMLR.

\bibitem[{Polyak and Juditsky(1992)}]{polyak1992acceleration}
\textsc{Polyak, B.~T.} and \textsc{Juditsky, A.~B.} (1992).
\newblock Acceleration of stochastic approximation by averaging.
\newblock \textit{SIAM journal on control and optimization} \textbf{30} 838--855.

\bibitem[{Rombach et~al.(2022)Rombach, Blattmann, Lorenz, Esser and Ommer}]{rombach2022high}
\textsc{Rombach, R.}, \textsc{Blattmann, A.}, \textsc{Lorenz, D.}, \textsc{Esser, P.} and \textsc{Ommer, B.} (2022).
\newblock High-resolution image synthesis with latent diffusion models.
\newblock In \textit{Proceedings of the IEEE/CVF conference on computer vision and pattern recognition}.

\bibitem[{Ruppert(1988)}]{ruppert1988efficient}
\textsc{Ruppert, D.} (1988).
\newblock Efficient estimations from a slowly convergent robbins-monro process.
\newblock Tech. rep., Cornell University Operations Research and Industrial Engineering.

\bibitem[{Sandler et~al.(2023)Sandler, Zhmoginov, Vladymyrov and Miller}]{sandler2023training}
\textsc{Sandler, M.}, \textsc{Zhmoginov, A.}, \textsc{Vladymyrov, M.} and \textsc{Miller, N.} (2023).
\newblock Training trajectories, mini-batch losses and the curious role of the learning rate.
\newblock \textit{arXiv preprint arXiv:2301.02312} .

\bibitem[{Song et~al.(2020{\natexlab{a}})Song, Meng and Ermon}]{song2020denoising}
\textsc{Song, J.}, \textsc{Meng, C.} and \textsc{Ermon, S.} (2020{\natexlab{a}}).
\newblock Denoising diffusion implicit models.
\newblock \textit{arXiv preprint arXiv:2010.02502} .

\bibitem[{Song et~al.(2020{\natexlab{b}})Song, Sohl-Dickstein, Kingma, Kumar, Ermon and Poole}]{song2020score}
\textsc{Song, Y.}, \textsc{Sohl-Dickstein, J.}, \textsc{Kingma, D.~P.}, \textsc{Kumar, A.}, \textsc{Ermon, S.} and \textsc{Poole, B.} (2020{\natexlab{b}}).
\newblock Score-based generative modeling through stochastic differential equations.
\newblock \textit{arXiv preprint arXiv:2011.13456} .

\bibitem[{Tarvainen and Valpola(2017)}]{tarvainen2017mean}
\textsc{Tarvainen, A.} and \textsc{Valpola, H.} (2017).
\newblock Mean teachers are better role models: Weight-averaged consistency targets improve semi-supervised deep learning results.
\newblock \textit{Advances in neural information processing systems} \textbf{30}.

\bibitem[{Varre and Flammarion(2022)}]{varre2022accelerated}
\textsc{Varre, A.} and \textsc{Flammarion, N.} (2022).
\newblock Accelerated sgd for non-strongly-convex least squares.
\newblock In \textit{Conference on Learning Theory}. PMLR.

\bibitem[{Wu et~al.(2022)Wu, Zou, Braverman, Gu and Kakade}]{wu2022last}
\textsc{Wu, J.}, \textsc{Zou, D.}, \textsc{Braverman, V.}, \textsc{Gu, Q.} and \textsc{Kakade, S.} (2022).
\newblock Last iterate risk bounds of sgd with decaying stepsize for overparameterized linear regression.
\newblock In \textit{International Conference on Machine Learning}. PMLR.

\bibitem[{Yaz et~al.(2018)Yaz, Foo, Winkler, Yap, Piliouras, Chandrasekhar et~al.}]{yaz2018unusual}
\textsc{Yaz, Y.}, \textsc{Foo, C.-S.}, \textsc{Winkler, S.}, \textsc{Yap, K.-H.}, \textsc{Piliouras, G.}, \textsc{Chandrasekhar, V.} \textsc{et~al.} (2018).
\newblock The unusual effectiveness of averaging in gan training.
\newblock In \textit{International Conference on Learning Representations}.

\bibitem[{Zhang et~al.(2024)Zhang, Morwani, Vyas, Wu, Zou, Ghai, Foster and Kakade}]{zhang2024does}
\textsc{Zhang, H.}, \textsc{Morwani, D.}, \textsc{Vyas, N.}, \textsc{Wu, J.}, \textsc{Zou, D.}, \textsc{Ghai, U.}, \textsc{Foster, D.} and \textsc{Kakade, S.} (2024).
\newblock How does critical batch size scale in pre-training?
\newblock \textit{arXiv preprint arXiv:2410.21676} .

\bibitem[{Zou et~al.(2021)Zou, Wu, Braverman, Gu and Kakade}]{zou2021benign}
\textsc{Zou, D.}, \textsc{Wu, J.}, \textsc{Braverman, V.}, \textsc{Gu, Q.} and \textsc{Kakade, S.~M.} (2021).
\newblock Benign overfitting of constant-stepsize sgd for linear regression.
\newblock \textit{The 34th Annual Conference on Learning Theory} .

\end{thebibliography}
\bibliographystyle{ims}

\end{document}